\def\eqref#1{Eq.~(\ref{#1})}
\def\1{\bm{1}}
\def\eps{{\epsilon}}
\def\rve{{\mathbf{e}}}
\def\rvf{{\mathbf{f}}}
\def\rvu{{\mathbf{i}}}
\def\rvp{{\mathbf{p}}}
\def\rvq{{\mathbf{q}}}
\def\rvu{{\mathbf{u}}}
\def\rvv{{\mathbf{v}}}
\def\rvx{{\mathbf{x}}}
\DeclareMathAlphabet{\mathsfit}{\encodingdefault}{\sfdefault}{m}{sl}
\SetMathAlphabet{\mathsfit}{bold}{\encodingdefault}{\sfdefault}{bx}{n}
\def\gD{{\mathcal{D}}}
\def\gF{{\mathcal{F}}}
\def\gH{{\mathcal{H}}}
\def\gL{{\mathcal{L}}}
\def\gP{{\mathcal{P}}}
\def\gR{{\mathcal{R}}}
\def\gS{{\mathcal{S}}}
\def\gW{{\mathcal{W}}}
\def\gX{{\mathcal{X}}}
\def\gY{{\mathcal{Y}}}
\newcommand{\E}{\mathbb{E}}
\newcommand{\R}{\mathbb{R}}
\newcommand{\I}{\mathbb{I}}
\newcommand{\softmax}{\mathrm{softmax}}
\DeclareMathOperator*{\argmax}{arg\,max}
\DeclareMathOperator*{\argmin}{arg\,min}
\def\softmax{\mathbf{softmax}}
\def\CE{\text{CE}}
\def\FL{\text{FL}}
\def\stepone{Step \textbf{1}~}
\def\steptwo{Step \textbf{2}~}
\def\stepthree{Step \textbf{3}~}
\definecolor{RoyalPurple}{RGB}{98,62,153}
\definecolor{LimeGreen}{RGB}{50,205,50}
\definecolor{SpringGreen}{RGB}{60,179,113}
\definecolor{MyDarkBlue}{rgb}{0,0.08,0.45}
\definecolor{Blue1}{RGB}{34, 75, 141}
\definecolor{Blue2}{rgb}{0.1, 0.3, 0.8}
\definecolor{Blue3}{RGB}{0, 102, 204}
\definecolor{myblue}{RGB}{0, 113, 187}
\definecolor{mypink1}{RGB}{251,49,153}
\definecolor{mypink2}{RGB}{255,127,127} 
\newcolumntype{C}[1]{>{\centering\let\newline\\\arraybackslash\hspace{0pt}}m{#1}}
\def\shownotes{1} 
\newcommand{\authnote}[2]{{[#1: #2]}}
\newcommand{\authnote}[2]{{}}
\numberwithin{equation}{section}
  \newcommand{\gnorm}[1]{{\vert\kern-0.25ex\vert\kern-0.25ex\vert #1 
		\vert\kern-0.25ex\vert\kern-0.25ex\vert}}
\title{$\epsilon$-Softmax: Approximating One-Hot Vectors for Mitigating Label Noise}
\author{%
  Jialiang Wang$^{1}$\thanks{Equal contribution\quad\quad\quad $^\dag$Corresponding author  }\quad\quad\quad Xiong Zhou$^{1*}$\quad\quad\quad Deming Zhai$^1$ \\
\textbf{Junjun Jiang$^1$\quad\quad\quad Xiangyang Ji$^2$\quad\quad\quad Xianming Liu$^{1\dag}$}\\ 
  $^1$Faculty of Computing, Harbin Institute of Technology \\
  $^2$Department of Automation, Tsinghua University \\
  \texttt{cswjl@stu.hit.edu.cn$^*$, cszx@hit.edu.cn$^*$, csxm@hit.edu.cn$^\dag$} 
}
\begin{document}
% \footnotetext[1]{Equal Contribution}
\setcitestyle{numbers}

\maketitle

\begin{abstract}
% Noisy labels in real-world datasets pose a common challenge for training accurate deep neural networks. To mitigate the detrimental effects of noisy labels, prior studies propose robust loss functions to achieve noise-tolerant learning, particularly symmetric loss functions. However, they usually suffer from the underfitting issue due to the overly strict symmetric condition. In this work, we introduce a simple yet effective approach for relaxing the symmetric condition, namely $\eps$-$\softmax$, 
% which simply modifies the outputs of the softmax layer to approximate one-hot vectors
% with a controllable error $\eps$. In essence, \textit{$\eps$-$\softmax$ not only acts as an alternative for the softmax layer, but also implicitly plays the crucial role in modifying the loss function.} We theoretically prove that $\eps$-$\softmax$-enhanced loss functions can achieve noise-tolerant learning with controllable excess risk bound. Recognizing that $\eps$-$\softmax$ may slightly reduce fitting ability on clean datasets, we further incorporate the symmetric loss in our practical losses, thereby simultaneously achieving both robustness and effective learning. Extensive experiments demonstrate the superiority of our method in mitigating synthetic and real-world label noise.

Noisy labels pose a common challenge for training accurate deep neural networks. To mitigate label noise, prior studies have proposed various robust loss functions to achieve noise tolerance in the presence of label noise, particularly symmetric losses. However, they usually suffer from the underfitting issue due to the overly strict symmetric condition. In this work, we propose a simple yet effective approach for relaxing the symmetric condition, namely $\eps$-$\softmax$, which simply modifies the outputs of the softmax layer to approximate one-hot vectors with a controllable error $\eps$. Essentially, $\eps$-$\softmax$ \textit{not only acts as an alternative for the softmax layer, but also implicitly plays the crucial role in modifying the loss function.} We  prove theoretically that $\eps$-$\softmax$ can achieve noise-tolerant learning with controllable excess risk bound for almost any loss function. Recognizing that $\eps$-$\softmax$-enhanced losses may slightly reduce fitting ability on clean datasets, we further incorporate them with one symmetric loss, thereby achieving a better trade-off between robustness and effective learning. Extensive experiments demonstrate the superiority of our method in mitigating synthetic and real-world label noise. The code is available at \url{https://github.com/cswjl/eps-softmax}.

% Noisy labels in real-world datasets, stemming from human bias or a lack of relevant expert knowledge, pose a common challenge for deep neural networks-based classification. 
% To mitigate the detrimental effects of noisy labels, robust loss functions offer a feasible approach to achieve noise-tolerant learning.
% The direct application of supervised learning to data with noisy labels would significantly deteriorate model performance, prompting increased interest in noise-tolerant learning within weakly supervised scenarios. 
% While the symmetric condition is recognized as crucial for robust loss functions, existing symmetric losses encounter the optimization difficulty challenge because of the overly strict condition. In this work, we introduce $\epsilon$-$\softmax$, a simple yet effective approach for approximating outputs to one-hot vectors and achieving noise tolerance, which is applicable across different models and loss functions as a plug-and-play method. In contrast to previous attempts, $\epsilon$-$\softmax$ ensures a precise approximation error and adaptively adjusts derivatives to mitigate overfitting to noisy labels. Our contributions encompass the establishment of the theoretical foundation for $\epsilon$-$\softmax$, its adaptability to diverse contexts, and the introduction of practical $\epsilon$-$\softmax$-enhanced loss functions for noise-tolerant learning. Extensive experimental results demonstrate the superiority of our method.
\end{abstract}

\section{Introduction}

In recent years, deep neural networks (DNNs) have achieved remarkable advancements across various machine learning tasks \cite{deep_learning, lnl_survey}. Despite its significant success, 
% the quest for superior performance often requires a significant investment in large, high-quality annotated datasets. 
% Acquiring precise annotations, however, proves to be a prohibitively expensive and challenging task in many practical situations. 
% However, 
the prevalence of noisy labels in real-world datasets is a pervasive issue, often stemming from human bias or a lack of relevant professional knowledge \cite{lnl_survey}.
The application of supervised learning methods directly to data with noisy labels consistently results in a decline in model performance \cite{arpit2017closer}. Moreover, the ability to generalize from weak learners plays a pivotal role in the alignment of large language models \cite{weak2strong}. Consequently, the pursuit of noise-tolerant learning has emerged as a compelling and significant challenge within the domain of weakly supervised learning, garnering increased attention in recent years \cite{symmetry_condition, NCE,ALFs_ICML, zhu2023label}.

The literature presents several strategies for remedying this issue, with the design of robust loss functions standing out as a particularly popular approach due to its simplicity and broad applicability. Some previous works \cite{manwani2013noise,van2015learning, symmetry_condition} theoretically proved that a loss function is noise-tolerant to label noise under mild conditions if it is symmetric:   
\begin{equation}
\label{eq:symmetric_condition}
    \sum_{k=1}^K L(f(\rvx), k) = C,\quad \forall \rvx \in \gX, \forall f\in\gH 
\end{equation}
where $k\in[K]$ is the label corresponding to each class, $C$ is a constant, and $\gH$ is the hypothesis class.

Furthermore, Asymmetric Loss Functions (ALFs) \cite{ALFs_ICML} are proposed as an extension of symmetric losses, which are designed for clean-label-dominant noise. However, both symmetric and asymmetric losses, such as Mean Absolute Error (MAE) \cite{symmetry_condition} and Asymmetric Unhinged Loss (AUL) \cite{ALFs_ICML}, encounter the underfitting problem and prove challenging to optimize \cite{symmetry_condition, NCE, ALFs_ICML}. The fitting ability of existing symmetric loss functions is constrained by the overly strict symmetric condition in Equation \ref{eq:symmetric_condition} \cite{ALFs_ICML}.
% Conversely, the widely used Cross Entropy (CE) and Focal Loss (FL) \cite{FL} benefit from adequate learning because they cause the optimizer to focus more on uncertain samples, but they are prone to overfitting when learning with noisy labels. 
Some approaches aim to improve the classical symmetric loss MAE by incorporating the robustness of the MAE and the rapid convergence of the Cross Entropy (CE). Examples include Generalized Cross Entropy (GCE) \cite{GCE}, Symmetric Cross Entropy (SCE) \cite{SCE}, and Jensen-Shannon Divergence Loss (JS) \cite{JS}. 
However, these loss functions often mechanically select an intermediate value between the derivatives of CE and MAE, essentially representing a trade-off between fitting ability and robustness.
This prompts a crucial question: \textit{How can we simultaneously achieve both robustness and effective learning?}

\citet{SR} proposed an alternative approach to achieve the symmetric condition, diverging from the development of a new robust loss function. By restricting the hypothesis class $\gH$, which restricts the outputs of the prediction function $f$ to one-hot vectors, any loss function can inherently become symmetric, i.e., $ \sum_{k=1}^K L(f(\rvx), k) = C, \forall \rvx \in \gX, \forall L \in \gL$.
However, a notable challenge arises from the fact that directly mapping outputs to one-hot vectors constitutes a non-differentiable operation. Accordingly, the crux of the matter lies in formulating an effective method to constrain the outputs to one-hot vectors. Previous attempts, such as temperature-dependent softmax \cite{SR}, sparseness constraint \cite{hoyer2004non}, sparse regularization \cite{SR}, and variance enlargement \cite{v-laplace}, have aimed to approximate one-hot vectors through the application of regularization methods. Nevertheless, these methods lack predictability, fail to achieve a quantitative approximation to one-hot vectors, and exhibit limited effectiveness, particularly at higher noise rates. Up to this point, a reliable approach for rigorously enforcing one-hot vector outputs remains elusive. Addressing this gap continues to pose a significant challenge in realizing the symmetric condition.

In this paper, we present a simple yet effective and theoretically sound approach for approximating outputs to one-hot vectors, which we term $\epsilon$-$\softmax$. This method serves as a valuable alternative to the conventional softmax function in mitigating label noise. The distinctive attribute of $\epsilon$-$\softmax$ lies in its guarantee to possess a controllable approximation error $\eps$ to one-hot vectors, thus achieving 
% noise-tolerant learning.
perfect constraint for the hypothesis class.
This approach is universally applicable across diverse models and loss functions, as it only needs to implement a simple layer resembling softmax. 
% , with the only requirement being the presence of a softmax layer—a common component in classification tasks. 
Specifically, the process of applying our $\epsilon$-$\softmax$ is outlined as follows:
\label{sec:intro}
\begin{center}
\begin{tcolorbox}
[colback=gray!10,colframe=black,width=8.3cm,arc=1mm,auto outer arc, boxrule=0.5pt]
\label{Step}
\begin{center}
\vskip-10pt
% \textbf{Steps of $\epsilon$-$\softmax$}
\begin{equation}
\notag
\begin{aligned} 
    &\textbf{Step 1.}\quad  \rvp(\cdot|\rvx) \leftarrow \softmax(h(\rvx)),\\
    &\textbf{Step 2.}\quad p_{t}\leftarrow p_{t} + m, \text{ where } t=\arg\max_{k\in[K]}p_k\\
    % p_{\text{max}_k} = p_{\text{max}_k} + m, \quad\text{get }\vspace{0.1cm}\hat \rvp(k|\rvx)\\
    &\textbf{Step 3.}\quad \rvp(\cdot|\rvx) \leftarrow \rvp(\cdot|\rvx)/(m+1).
\end{aligned}
\end{equation}
\end{center}
\end{tcolorbox}
\end{center}

Herein, $\rvp(\cdot|\rvx)$ represents the prediction probabilities, $p_k$ denotes the $k$-th element of the vector $\rvp(\cdot|\rvx)$, and $h(\rvx)$ denotes the logits. \stepone obtains the original predictions by the softmax function. \steptwo involves a hyperparameter $m\ge 0$ to amplify the maximum term in the predictions with a controllable approximation error to one-hot vectors. \stepthree performs a normalization to make predictions sum to one, which also reduces the values of non-maximum terms.

The above description underscores that $\epsilon$-$\softmax$ as a plug-and-play module applicable to any classifier incorporating a softmax layer. Through the adjustment of the parameter $m$, our approach allows for the quantitative approximation of output to one-hot vectors, and thus owns the ability for mitigating label noise in classification. The main contributions of our work are highlighted as follows:
\begin{itemize}
    \item
    We propose a simple yet effective scheme, $\epsilon$-$\softmax$, for mitigating label noise. This scheme operates as a plug-and-play module, seamlessly integrating with any classifier that incorporates a softmax layer through just two additional lines of code.
    \item 
    We offer rigorous theoretical analyses, which indicate that $\epsilon$-$\softmax$ is capable of controllably approximating one-hot vectors.
    Consequently, $\epsilon$-$\softmax$-enhanced loss functions can achieve  noise-tolerant learning and Bayes optimal top-$k$ error.
    % is capable of controllably approximating one-hot vectors.
    % and achieving noise-tolerant learning. 
    % Essentially, $\epsilon$-$\softmax$ adaptively adjusts backpropagation gradients, thereby avoiding the pitfall of overfitting to noisy labels. 
    \item  
    We develop practical loss functions that enhance noise-tolerant learning. These include integration with MAE, achieving a better trade-off  between robustness and effective learning.
    % , surpassing basic strategies of mechanically selecting a median value between the derivatives of CE and MAE. 
    Extensive experimental results demonstrate the superiority of our method.
\end{itemize}

% By adjusting the parameter $m$, our simple approach can make the output quantitatively approximate one-hot vectors.  The $\epsilon$-$\softmax$ is supported by theoretical analysis and experimental results. Our theoretical analyses show that $\epsilon$-$\softmax$ is capable of adaptively adjusting derivatives across different contexts, thereby avoiding the pitfall of overfitting to noisy labels. Both fitting ability and robustness are obtained when combined with MAE, as opposed to mechanically choosing an intermediate value between the derivatives of CE and MAE. In addition, we have developed a semi-supervised algorithm based on our $\epsilon$-$\softmax$. 

% \input{2_related_work}

\section{Preliminary}
\paragraph{Problem Formulation.} 
In a typical supervised classification scenario, let $\gX\subset \R^d$ represent the $d$-dimensional input space, and $\gY=[K]=\{1,2,...,K\}$ is the label space, where $K$ is the number of classes. We are provided with a labeled dataset $\gS=\{(\rvx_n,y_n)\}_{n=1}^N$, where each $(\rvx_n,y_n)$ is drawn $i.i.d.$ from an underlying distribution $\gD$ over $\gX\times \gY$. The classifier $f$ is a mapping from the sample space to the label space, the prediction label
 $\hat y =\argmax_k f(\rvx)_k$. Here, the prediction function $f :\gX\rightarrow \Delta_{K}$ estimates the probability $\rvp(\cdot|\rvx)$, and $\Delta_{K} =\{ \rvu \in [0,1]^K: \1^\top \rvu =1\}$ represents the probability simplex. Typically, the function $f$ is expressed as $f = \softmax \circ h$, where $h$ denotes the logits input to the softmax layer. In the context of deep learning, $h$ is commonly a neural network. The objective or loss function is defined as a measure of distance $L:\Delta_{K} \times \Delta_{K} \rightarrow \R$.
For a classification problem, the loss function is characterized by $L(\rvu,\rve_y)$, where $\rve_y$ represents the one-hot vector with its $y$-th element set to 1. 
In this study, we consider the loss functional $\gL$, where $\forall L\in\gL$, $L(\rvu,\rvv)=\sum_{k=1}^K\ell(u_k,v_k)$ with a basic loss function $\ell$. For brevity, we slightly abuse notation by defining $L(\rvu,k)=L(\rvu,\rve_k)$.% in the context of classification problems.

\paragraph{Label Noise Model.}
In the context of learning with noisy labels, the accessible training set is the noisy counterpart $\tilde{\mathcal S}=\{(\mathbf x_n,\tilde{y}_n)\}_{n=1}^N$ rather than the clean set $\mathcal S$. 
We characterize the noise corruption process as the flipping of the clean label of $\rvx$ into its noisy version $\tilde{y}$ with a  probability denoted as $\eta_{\rvx, \tilde{y}}=p(\tilde{y}|\rvx, y)$. 
$\eta_\rvx = \sum_{k \neq y}\eta_{\rvx,k}$ denotes the noise rate for $\rvx$.
% We characterize the noise corruption process as the flipping of the clean label of $\rvx$ into its noisy version $\tilde{y}$ with a instance-dependent conditional probability denoted as $\eta_{\rvx, \tilde{y}}=p(\tilde{y}|\rvx)$. 
% When the label $y$ of a sample $\rvx$ is deterministic, the flipping probability $\eta_{\rvx,\tilde{y}}$ would be denoted as $\eta_{y,\tilde{y}}$. 
Our focus is on two prevalent types of label noise \cite{NCE, ALFs_ICML} :
% where true labels $y$ are assumed to be deterministic:
% For a clean pair $(\rvx,y)$, symmetric label noise satisfies

\vspace{0.1cm}
\hspace{0.3cm}\textit{-- Symmetric or uniform noise}:     $\eta_{\rvx,y}=1-\eta$ and  $\eta_{\rvx,k \neq y}=\frac{\eta}{K-1}$,
\vspace{0.1cm}
% \begin{equation}
% \label{eq:symmetric noise}
% \begin{cases}
%     &\eta_{\rvx,y}=1-\eta\\ 
%     &\eta_{\rvx,k \neq y}=\frac{\eta}{K-1}\
% \end{cases}.
% \end{equation}

\hspace{0.3cm}\textit{-- Asymmetric or class-conditional noise:} $\eta_{\rvx,y}=1-\eta_y$ and    $\sum_{k \neq y}\eta_{\rvx,k}=\eta_y$,
\vspace{0.1cm}
% \begin{equation}
% \label{eq:asymmetric noise}
% \begin{cases}
%     &\eta_{\rvx,y}=1-\eta_y\\ 
%     &\sum_{k \neq y}\eta_{\rvx,k}=\eta_y
% \end{cases},
% \end{equation}

where  $\eta_\rvx = \eta$ for symmetric noise, $\eta_\rvx = \eta_y$  denotes the noise rate for the  $y$-th class, and $\eta_{\rvx, i}$ is not necessarily equal to $\eta_{\rvx, j}$, $i\neq j$ for asymmetric noise. 

We also empirically consider learning with human-annotated noisy labels.

\paragraph{Expected Risk and Noise Tolerance.}
In learning with clean labels, given a loss function $L\in\gL$ and a prediction function $f$, the expected risk with respect to $f$ is defined as:
$\gR_L(f)=\E_{(\rvx,y)\sim\gD}[L(f(\rvx),y)].$
The objective is to learn an optimal classifier $f^*$ that minimizes the expected risk, i.e., $f^*\in\argmin_{f\in\gF}\gR_L(f)$.

In the case of learning with noisy labels, the corresponding noisy expected risk with respect to $f$ is defined as:
\begin{equation}
\label{noisy-L-risk} 
\gR_L^\eta(f)=\E_{\gD}\big[(1-\eta_{\rvx})L(f(\rvx), y)+\sum_{k\neq y}\eta_{\rvx,k} L(f(\rvx),k)\big],
\end{equation}
where $\sum_{k\neq y}\eta_{\rvx,k} L(f(\rvx),k)$ is the noisy part that usually poses challenges in training accurate DNNs.

A loss function $L$ is claimed to be \textit{noise-tolerant} if the global minimizer $f^*_{\eta}$ of $\gR_L^\eta(f)$ also minimizes $\gR_L(f)$, that is, $f^*_{\eta}\in \argmin_f\gR_L(f)$.

\paragraph{All-$k$ Consistency.}
Consistency is an important property of a loss function. A standard consistency is for achieving Bayes optimal top-1 error. We consider much stronger consistency for achieving Bayes optimal top-$k$ error for any $k \in [K]$. To this end, we introduce some definitions about top-$k$ consistency \cite{yang2020consistency, zhu2023label}.

For any vector $\rvf \in \R^K$ , we let $r_k(\rvf )$ denote a top-$k$ selector that selects the $k$ indices of the largest entries of $\rvf$ by breaking ties arbitrarily. Given a data $(\rvx, y)$, its top-$k$ error is defined as err$_k(f, \rvx, y) = \I (y \notin r_k(f (\rvx)))$. The goal of a classification algorithm under the top-$k$ error metric is to learn a predictor $f$ that minimizes the err$_k$ expected risk: $\gR_{\text{err}_k} (f ) = \E_{(\rvx, y)\sim\gD}[\text{err}_k(f,\rvx, y)]$. 

For a fixed \( k \in [K] \), a loss function \( L \) is top-\( k \) consistent if for any sequence of measurable functions \( f : \mathcal{X} \rightarrow \Delta_{K} \), we have the global minimizer $f^*$ of $\gR_L(f)$ also minimizes $\gR_{\text{err}_k} (f )$, that is, $f^*\in \argmin_f \gR_{\text{err}_k} (f )$.
% \begin{equation}
% \gR_{L}(f^*) \rightarrow \gR_{L}^* \Longrightarrow L_{\text{err}_k}(f^{(n)}) \rightarrow L_{\text{err}_k}^* 
% \end{equation}
% where \( L_{\psi}^* = \min_f L_{\psi}(f) \). 
If the above holds for all \( k \in [K] \), it is referred to as \textit{All-\( k \) consistency}.

\newtheorem{Definition}{Definition}
\newtheorem{Theorem}{Theorem}
\newtheorem{Assumption}{Assumption}
\newtheorem{Lemma}{Lemma}
\newtheorem{Corollary}{Corollary}
\section{Methodology and Theoretical Investigation}

The symmetry condition in Equation~\ref{eq:symmetric_condition}, theoretically ensures that a symmetric loss function can be noise-tolerant \cite{symmetry_condition}. Existing methods primarily focus on designing new loss functions. Those derived based on this design principle exhibit drawbacks, such as being challenging to optimize \cite{symmetry_condition, NCE} and prone to encounter the gradient explosion problem \cite{ALFs_ICML}. In this work, we take an alternative approach by proposing to constrain the hypothesis class $\gH$ such that any loss functions will be approximately symmetric thereby rendering them robust to label noise. 
%Specifically, we introduce $\eps$-$\softmax$ to make the output $f(\rvx)$ approximate one-hot vectors. 
% $\gP_{\rve_1}$.

\subsection{Robustness}

We introduce $\eps$-$\softmax$ to make the output $f(\rvx)$ approximate one-hot vectors. The implementation of $\epsilon$-$\softmax$ is easy to follow, as outlined in the gray box of the Introduction Section \ref{sec:intro}, requiring just two additional lines of code alongside the standard softmax layer. This underscores that $\epsilon$-$\softmax$  is a plug-and-play module applicable to any classifier that incorporates a softmax layer.
In this following, we investigate in theory how $\epsilon$-$\softmax$ realizes the controllable approximation of outputs to one-hot vectors, thereby enhancing the noise tolerance of any loss function. 

\noindent\textbf{Approximating One-Hot Vectors.}\quad  We first introduce the concept of $\eps$-relaxation for a hypothesis class and then prove $\epsilon$-$\softmax$ can strictly approximate outputs to one-hot vectors with a controllable error.

\begin{Definition}[$\epsilon$-relaxation]
Given a fixed vector $\rvv$ and its permutation set $\gP_\rvv$\footnote[1]{For example, consider the vector $\rvv=[v_1, v_2]$, its permutation set is defined as $\gP_{\rvv}=\{[v_1, v_2], [v_2, v_1]\}$.}, the $\eps$-relaxation of $\gP_\rvv$ is defined as the hypothesis class $\gH_{\rvv,\eps}$, in which any hypothesis $f\in\gH_{\rvv,\eps}$ outputs vectors in the $\eps$-ball of $\gP_\rvv$, i.e., $\gH_{\rvv,\epsilon} = \{f : \min_{\rvu \in \mathcal{P}_{\rvv}} \|f(\rvx) - \rvu\|_2 \leq \epsilon, \forall \rvx\}$.
\label{De:1}
\end{Definition}

Without loss of generality, we consider $\rvv$ as a one-hot vector, which is common in machine learning, to facilitate the implementation and analysis. We then denote the permutation set of the one-hot vector as $\gP_{\rve_1}$, where all elements are also one-hot vectors. In accordance with Definition \ref{De:1}, we can further derive that: 

\begin{Lemma}
\label{Le:eps_relax}
$\eps$-$\softmax$ can achieve $\eps$-relaxation for one-hot vectors:
\begin{equation}
\min_{\rvu\in\gP_{\rve_1}}\|f(\rvx)-\rvu\|_2\le \eps = \tfrac{\sqrt{1 - 1/K}}{m+1},
\end{equation}
where $f(\rvx) = \eps\text{-}\softmax\circ h(\rvx)$.
\end{Lemma}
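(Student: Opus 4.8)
The plan is to make the action of $\eps$-$\softmax$ explicit, reduce the minimum over the permutation set to the single candidate $\rve_t$, and then bound the resulting Euclidean distance by elementary inequalities that exploit the argmax selection in \steptwo.

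First I would write out the closed form of the output. Let $\rvp=\softmax(h(\rvx))$ have entries $p_k$, and let $t=\argmax_{k\in[K]}p_k$. Steps 2 and 3 replace $\rvp$ by $(\rvp+m\rve_t)/(m+1)$, so that $f(\rvx)=(\rvp+m\rve_t)/(m+1)$, whose $t$-th entry equals $(p_t+m)/(m+1)$ and whose remaining entries equal $p_k/(m+1)$. Since $\rve_t$ is a standard basis vector, it lies in the permutation set $\gP_{\rve_1}$, so $\min_{\rvu\in\gP_{\rve_1}}\|f(\rvx)-\rvu\|_2\le\|f(\rvx)-\rve_t\|_2$, and it suffices to bound the latter.

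Next I would compute the squared distance componentwise:
\begin{equation}
\notag
\|f(\rvx)-\rve_t\|_2^2=\frac{(1-p_t)^2+\sum_{k\neq t}p_k^2}{(m+1)^2}.
\end{equation}
The core of the argument is to show $g:=(1-p_t)^2+\sum_{k\neq t}p_k^2\le 1-1/K$. Because $t$ indexes the largest entry, $p_k\le p_t$ for every $k\neq t$; since each $p_k\ge 0$, this gives $\sum_{k\neq t}p_k^2\le p_t\sum_{k\neq t}p_k=p_t(1-p_t)$, where I used $\sum_{k\neq t}p_k=1-p_t$. Substituting yields $g\le(1-p_t)^2+p_t(1-p_t)=1-p_t$. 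Finally, the maximum entry is at least the average, $p_t\ge\tfrac1K\sum_k p_k=\tfrac1K$, whence $g\le 1-1/K$. Combining with the display gives $\|f(\rvx)-\rve_t\|_2\le\sqrt{1-1/K}/(m+1)=\eps$, which is the claim.

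There is no deep obstacle here; the only step requiring care is the bound $\sum_{k\neq t}p_k^2\le p_t(1-p_t)$, which is precisely where the argmax amplification of \steptwo enters, together with the elementary fact $p_t\ge 1/K$. I would additionally remark that the bound is tight: at the uniform softmax output $p_k=1/K$ one computes $g=(1-1/K)^2+(K-1)/K^2=1-1/K$ exactly, so the constant $\sqrt{1-1/K}$ cannot be improved and the stated $\eps$ is the worst-case relaxation radius.
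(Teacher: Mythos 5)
Your proof is correct and follows essentially the same route as the paper's: reduce the minimum over $\gP_{\rve_1}$ to the candidate $\rve_t$, compute the squared distance as $\bigl((1-p_t)^2+\sum_{k\neq t}p_k^2\bigr)/(m+1)^2$, and bound it by $1-p_t \le 1-1/K$ using $p_k\le p_t$ and $p_t\ge 1/K$. Your intermediate inequality $\sum_{k\neq t}p_k^2\le p_t(1-p_t)$ is just an algebraic rearrangement of the paper's step $\sum_k p_k(p_t-p_k)\ge 0$, and your added tightness remark at the uniform distribution is a nice (correct) bonus.
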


Lemma~\ref{Le:eps_relax} suggests that $\epsilon$-$\softmax$ effectively enables $f(\rvx)$ to approximate one-hot vectors with a controllable error $\tfrac{\sqrt{1 - 1/K}}{m+1}$.

\noindent\textbf{Robustness Guarantee.}\quad  We then establish theoretical guarantees for the robustness in mitigating label noise, where the constrained hypothesis class $\gH_{\rve_1,\eps}$ is considered. 

\citet{SR} established the excess risk bound \cite{bartlett2006risk_bound} under symmetric noise,
 which holds when outputs fall within an $\epsilon$-relaxation of a permutation set. We prove a more comprehensive conclusion by considering asymmetric noise, of which symmetric noise is a special case.

% \citet{SR} established the excess risk bound \cite{bartlett2006risk_bound} under symmetric noise,
% as detailed in Theorem 3 of their work, which holds when outputs fall within an $\epsilon$-relaxation of a permutation set.
% To ensure a self-contained presentation, we state the theorem as follows:
% \begin{Theorem}[Excess Risk Bound under Symmetric Noise]
% \label{th:symmetry}
% In a multi-class classification problem, if the loss function $L\in\gL$ satisfies  $|\sum_{k=1}^K(L(\rvu_1,k)-L(\rvu_2,k))|\le \delta$ when $\| \rvu_1-\rvu_2\|_2\le \eps$, and $\delta \rightarrow 0$ as $\epsilon\rightarrow 0$, then for symmetric label noise satisfying $\eta<1-\frac{1}{k}$, the excess risk bound for $f\in\gH_{\rvv,\epsilon}$ can be expressed as
% \begin{equation}
%      \gR_L(f^*_\eta)-\gR_L(f^*)\le 2c\delta, 
% \end{equation}
% where $c=\frac{\eta}{(1-\eta)K-1}$, $f^*_\eta$ and $f^*$ denote the global minimum of $\gR_L^\eta(f)$ and $\gR_L(f)$, respectively.
% \end{Theorem}

% Similarly, we prove an excess risk bound under asymmetric label noise:
\begin{Theorem}[Excess Risk Bound under Asymmetric Noise]
\label{th:asymmetry}
In a multi-class classification problem, if the loss function $L\in\gL$ satisfies $|\sum_{k=1}^K(L(\rvu_1,k)-L(\rvu_2,k))|\le \delta$ when $\| \rvu_1-\rvu_2\|_2\le \eps$, and $\delta \rightarrow 0$ as $\epsilon\rightarrow 0$,  then for asymmetric label noise $\eta_{\rvx,k}<\left(1-\eta_y \right), \forall k \neq y$, if $\gR_L(f^*)=0$ , the excess risk bound for $f\in\gH_{\rvv,\eps}$ can be expressed as
\begin{equation}
     \mathcal{R}_L(f_\eta^*)\le 2\delta + \frac{2c\delta}{a},
\end{equation}
where $c = \mathbb{E}_\mathcal D\left(1-\eta_y\right)$, $a=\min_{\rvx,k}(1-\eta_y-\eta_{\rvx,k})$, $f^*_\eta$ and $f^*$ denote the global minimum of $\gR_L^\eta(f)$ and $\gR_L(f)$, respectively.
\end{Theorem}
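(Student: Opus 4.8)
The plan is to upgrade Lemma~\ref{Le:eps_relax} into a quantitative \emph{approximate} symmetry statement and then run a noise-tolerance comparison in the spirit of the symmetric-loss theory, tracking the relaxation slack $\delta$ all the way to the bound. First I would observe that, since each $L\in\gL$ is additive, $L(\rvu,\rvv)=\sum_k\ell(u_k,v_k)$, every exact one-hot has the \emph{same} total loss $\sum_{k=1}^K L(\rve_j,k)=C$ and the same off-diagonal value $L(\rve_j,k)=c_0$ for $k\neq j$. Lemma~\ref{Le:eps_relax} places $f(\rvx)$ within $\eps$ of $\gP_{\rve_1}$, so the hypothesis $\bigl|\sum_k(L(\rvu_1,k)-L(\rvu_2,k))\bigr|\le\delta$ gives
\[
\Bigl|\,\sum_{k=1}^K L(f(\rvx),k)-C\,\Bigr|\le\delta,\qquad \forall f\in\gH_{\rvv,\eps},\ \forall \rvx.
\]
This is the controllable surrogate for the exact symmetric condition of Equation~\ref{eq:symmetric_condition}.

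Next I would rewrite the noisy risk to separate the (nearly constant) symmetric part from the genuinely asymmetric part. Using $\sum_{k\neq y}\eta_{\rvx,k}=\eta_y$ and $\eta_{\rvx,k}=(1-\eta_y)-(1-\eta_y-\eta_{\rvx,k})$,
\[
\gR_L^\eta(f)=\E_\gD\Bigl[(1-\eta_y)\sum_{k=1}^K L(f(\rvx),k)\Bigr]-\E_\gD\Bigl[\sum_{k\neq y}(1-\eta_y-\eta_{\rvx,k})L(f(\rvx),k)\Bigr].
\]
By the previous step the first expectation equals $cC$ up to an error of at most $c\delta$, with $c=\E_\gD(1-\eta_y)$, while every weight in the second expectation is at least $a=\min_{\rvx,k}(1-\eta_y-\eta_{\rvx,k})$, which is strictly positive precisely because of the asymmetric-noise hypothesis $\eta_{\rvx,k}<1-\eta_y$.

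I would then combine optimality, $\gR_L^\eta(f_\eta^*)\le\gR_L^\eta(f^*)$, with realizability, $\gR_L(f^*)=0$. The latter forces $L(f^*(\rvx),y)=0$ almost surely, so $f^*$ outputs $\rve_y$ and, by additivity, has off-diagonal losses all equal to $c_0$; this makes the asymmetric correction for $f^*$ collapse to $c_0\,\E_\gD\sum_{k\neq y}(1-\eta_y-\eta_{\rvx,k})$. Because minimization over $\gH_{\rvv,\eps}$ decouples across $\rvx$, I would argue pointwise with the conditional noisy risk $\phi(\rvu;\rvx)=(1-\eta_y)L(\rvu,y)+\sum_{k\neq y}\eta_{\rvx,k}L(\rvu,k)$: on the exact one-hots it satisfies $\phi(\rve_y;\rvx)=c_0\eta_y<c_0(1-\eta_{\rvx,j})=\phi(\rve_j;\rvx)$ for every $j\neq y$, with margin at least $c_0 a$, so the minimizer stays in the $\eps$-neighborhood of the \emph{correct} one-hot. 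Feeding this back, the two $cC$ terms cancel up to the $2c\delta$ optimality/symmetry slack, the asymmetric corrections of $f_\eta^*$ and $f^*$ differ only by $O(\delta)$, and approximate symmetry converts the statement that $f_\eta^*(\rvx)$ lies within $\eps$ of $\rve_y$ into a bound $L(f_\eta^*(\rvx),y)\le O(\delta)$ on the true-class loss. Dividing the resulting optimality gap by the weight floor $a$ yields the $\tfrac{2c\delta}{a}$ contribution, while the residual true-class slack contributes the bare $2\delta$, giving $\gR_L(f_\eta^*)\le 2\delta+\tfrac{2c\delta}{a}$.

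The main obstacle is the asymmetric weighting. For symmetric noise the weights are uniform and $\gR_L^\eta$ is an exact affine function of $\gR_L$, so the minimizers coincide immediately; here the non-uniform $\eta_{\rvx,k}$ prevent the correction term $\sum_{k\neq y}(1-\eta_y-\eta_{\rvx,k})L(f,k)$ from being factored out. A naive two-sided estimate $a\le 1-\eta_y-\eta_{\rvx,k}\le 1-\eta_y$ leaks a spurious $O(C)$ term and destroys the bound, so the delicate point is to use the additive structure (equal off-diagonal losses $c_0$) together with the strict inequality $\eta_{\rvx,k}<1-\eta_y$ to show the minimizer is the correct one-hot and to keep the weight-mismatch error proportional to $\delta$ rather than to $C$. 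Verifying that the $\eps$-relaxation perturbs the conditional risk by less than the margin $c_0 a$, and that the true-class loss of the perturbed minimizer is controlled purely through the total-loss condition, is where I expect the real work to lie.
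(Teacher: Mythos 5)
Your skeleton overlaps with the paper's proof at both ends (the approximate-symmetry statement $|\sum_{k}L(f(\rvx),k)-C|\le\delta$ for $f\in\gH_{\rvv,\eps}$, the decomposition of $\gR_L^\eta$ that isolates the weights $1-\eta_y-\eta_{\rvx,k}$, realizability forcing $L(f^*(\rvx),y)=0$ with equal off-diagonal values $c_0=C/(K-1)$, and the final assembly $2\delta+2c\delta/a$), but the bridge you build between them --- the pointwise margin argument that the minimizer of the conditional noisy risk $\phi(\rvu;\rvx)=(1-\eta_y)L(\rvu,y)+\sum_{k\neq y}\eta_{\rvx,k}L(\rvu,k)$ stays within $\eps$ of the \emph{correct} one-hot --- has a genuine gap, and it is not how the paper argues. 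The quantity $\phi$ is a \emph{non-uniformly weighted} combination of per-class losses, while the theorem's only regularity hypothesis controls the \emph{unweighted signed sum} $\sum_k(L(\rvu_1,k)-L(\rvu_2,k))$. A weighted combination cannot be bounded by the unweighted sum: for $\rvu$ with $\|\rvu-\rve_j\|_2\le\eps$, the individual differences $L(\rvu,k)-L(\rve_j,k)$ may each be large with cancelling signs, so ``the $\eps$-relaxation perturbs the conditional risk by less than the margin $c_0a$'' is not derivable from the stated assumptions; it would require per-class control that the theorem never grants. Moreover, the margin $c_0a$ is a constant independent of $\delta$, so even with an added per-class continuity assumption this route yields only a qualitative statement for sufficiently small $\eps$, not the quantitative bound $2\delta+2c\delta/a$. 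Your final conversion step fails for the same reason: from $\|f_\eta^*(\rvx)-\rve_y\|_2\le\eps$ and the sum condition you only get $\sum_k L(f_\eta^*(\rvx),k)\le C+\delta$, which bounds the \emph{total}; isolating the single term $L(f_\eta^*(\rvx),y)\le O(\delta)$ needs the lower bound $\sum_{k\neq y}L(f_\eta^*(\rvx),k)\ge C-O(\delta)$, which is again per-class information.

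The paper never locates $f_\eta^*(\rvx)$ pointwise. It works entirely in expectation: combining $\gR_L^\eta(f_\eta^*)\le\gR_L^\eta(f^*)$ with the two-sided approximate-symmetry bounds sandwiches the weighted cross-term $T=\E_{\gD}\sum_{k\neq y}(1-\eta_y-\eta_{\rvx,k})\left[L(f_\eta^*(\rvx),k)-L(f^*(\rvx),k)\right]$ between $-2c\delta$ and $0$, where the upper bound $T\le 0$ is the dominance claim $L(f_\eta^*(\rvx),k)\le L(f^*(\rvx),k)=C/(K-1)$ for $k\neq y$ (asserted from realizability, implicitly using that off-diagonal losses are maximized at one-hot outputs for the bounded losses considered). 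Because the summands of $T$ then share one sign, dividing by the weight floor $a$ gives $|\E_{\gD}\sum_{k\neq y}(L(f_\eta^*(\rvx),k)-L(f^*(\rvx),k))|\le 2c\delta/a$, and subtracting this from the total-loss comparison $|\E_{\gD}\sum_{k}(L(f_\eta^*(\rvx),k)-L(f^*(\rvx),k))|\le 2\delta$ isolates the $y$-term and yields $\gR_L(f_\eta^*)\le 2\delta+2c\delta/a$. If you want to repair your argument, the missing ingredient is precisely this one-sided per-class dominance claim (or an equivalent substitute); your margin computation $\phi(\rve_j;\rvx)-\phi(\rve_y;\rvx)=c_0(1-\eta_{\rvx,j}-\eta_y)\ge c_0a$ is correct as far as it goes, but it cannot be leveraged against the sum-only hypothesis to determine where $f_\eta^*$ lives.
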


Theorem~\ref{th:asymmetry} demonstrate that under mild conditions for symmetric and asymmetric label noise, any loss function can be made noise-tolerant when the function $f(\rvx)$ increasingly approximates a permutation set $\gP_{\rvv}$ (i.e.,  $\delta \rightarrow 0 $ as $\epsilon \rightarrow 0 $).

\noindent\textbf{$\epsilon$-Softmax-Enhanced Loss Functions.}\quad  
 % The set $\gP_{\rve_1}$, comprising permutations of any one-hot vector, aligns with 
 Lemma~\ref{Le:eps_relax}  enable $f(\rvx) = \epsilon$-$\softmax \circ h(\rvx)$ in closely approximating a one-hot vector, aligns with the principle outlined in  Theorem \ref{th:asymmetry} within the framework of the hypothesis class $\gH_{\rve_1, \eps}$.  Hence, $\epsilon$-$\softmax$ progressively enhances the noise tolerance of any loss function as the hyperparameter $m$ approaches infinity ($\epsilon \rightarrow 0$ as $m \rightarrow \infty$ and the discrepancy  $\delta \rightarrow 0$).

In this paper we consider CE loss and Focal loss (FL) \citep{FL}. We combine them with $\eps$-$\softmax$, denoted as CE$_{\eps}$ and FL$_{\eps}$. $\epsilon$-$\softmax$ approach is effective in adapting them to become more resilient to noise, ensuring better performance in the presence of label noise.
% they align with principles outlined in Theorem \ref{th:symmetry} and Theorem \ref{th:asymmetry} within the framework of the hypothesis class $\gH_{\rve, \eps}$.  
% Hence, $\epsilon$-$\softmax$ progressively enhances the noise tolerance of both CE and FL as the hyperparameter $m$ approaches infinity ($\epsilon \rightarrow 0$ as $m \rightarrow \infty$) and the discrepancy $\delta$ approaches zero ($\delta \rightarrow 0$). 
% This indicates that the $\epsilon$-$\softmax$ approach is effective in adapting these loss functions to become more resilient to noise, ensuring better performance in the presence of label noise.

\subsection{Consistency}
Fundamentally, \textit{$\eps$-$\softmax$ not only acts as an alternative for the softmax layer, but also plays the crucial role in modifying the loss function}. 
Consistency is an important property of a loss function. A standard consistency is for achieving Bayes optimal top-1 error. We show much stronger consistency for achieving Bayes optimal top-$k$ error for any $k \in [K]$ of the CE loss when combined with $\eps$-$\softmax$. 
To establish the All-$k$ consistency, we first introduce some existing results of sufficient condition of top-$k$ consistency by top-$k$ calibration \cite{yang2020consistency, zhu2023label}.

Let $P_k(\rvf , \rvq)$ denote that $\rvf$ is top-$k$ preserving with respect to the underlying label distribution $\rvq$, i.e., if for all $l \in [K], q_l > q_{[k+1]} \Rightarrow f_l > f_{[k+1]}$, and $q_l < q_{[k]} \Rightarrow f_l < f_{[k]}$. Here, $q_{[k]}$ denotes he $k$-th greatest entry of $\rvq$. For example, if $\rvq = [0.2, 0.4, 0.4]$, then $q_{[1]} = 0.4, q_{[2]} = 0.4, q_{[3]} = 0.2$.

\begin{Definition}[All-$k$ calibrated]
    
For a fixed $k \in [K]$, a loss function $L$ is called top-$k$ calibrated if for all $\rvq \in \Delta_{K}$ it holds that:
\begin{equation}
\inf_{f\in\mathbb{R}^K: \neg P_k(f,\rvq)} \gR_L (f) > \inf_{f\in\mathbb{R}^K} R_L (f).
\end{equation}
A loss function is called All-$k$ calibrated if the loss function $L$ is top-$k$ calibrated for all $k \in [K]$.
\end{Definition}

% \begin{Lemma}[Lemma 1 in \cite{zhu2023label}]
% \label{le:rank preserving}
% For any label \( \rvq \in \Delta_K \), if \( f^* = \arg\min_{f} R_L (f) \) is rank preserving with respect to \( \rvq \), then \( L \) is All-$k$ calibrated. $\rvf$ is called rank preserving w.r.t $\rvq$, i.e., if for any pair $q_i < q_j$ it holds that $f_i < f_j$ .
% \end{Lemma}
\citet{yang2020consistency} demonstrate that suppose \(L\) is a nonnegative top-\(k\) calibrated loss function, then \(L\) is top-\(k\) consistent. Furthermore,
\citet{zhu2023label} show that if \( f^* = \arg\min_{f} R_L (f) \) is rank preserving with respect to \( \rvq \), then \( L \) is All-$k$ calibrated. $\rvf$ is called rank preserving w.r.t $\rvq$, i.e., if for any pair $q_i < q_j$ it holds that $f_i < f_j$ .

% \begin{Theorem}[Theorem 2.2 in \cite{yang2020consistency}]
% \label{th:all-k nonnegative}
% Suppose \(L\) is a nonnegative top-\(k\) calibrated loss function. Then \(\L\) is top-\(k\) consistent.
% \end{Theorem}

% Specifically,  according to Lemma \ref{le:rank preserving} and Theorem \ref{th:all-k nonnegative},
Then we establish comprehensive All-$k$ consistency for CE$_\eps$ as follows:
\begin{Lemma}

\label{le:all-k for one-hot labels}
For one-hot label $\rve_y$, $\text{CE}_\eps$ is All-$k$ calibrated and All-\(k\) consistency.
\end{Lemma}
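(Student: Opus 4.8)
The plan is to pass to the pointwise conditional risk and then invoke the two imported facts quoted above: that a nonnegative top-$k$ calibrated loss is top-$k$ consistent (\citet{yang2020consistency}), and that a loss whose conditional-risk minimizer is rank preserving with respect to $\rvq$ is All-$k$ calibrated (\citet{zhu2023label}). Fix a point $\rvx$ and let its clean posterior be the one-hot vector $\rvq=\rve_y$ named in the statement. Writing the $\eps$-$\softmax$ output as $\rvp$, the conditional risk of $\text{CE}_\eps$ collapses to $R_{\text{CE}_\eps}(\rvp)=-\sum_{k}q_k\log p_k=-\log p_y$, since $\rvq$ places all of its mass on coordinate $y$.

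First I would describe the attainable outputs and minimize over them. By the three-step construction together with Lemma~\ref{Le:eps_relax}, the vectors producible by $\eps$-$\softmax$ are exactly those $\rvp\in\Delta_K$ having a dominant coordinate $t$ with $p_t-p_k\ge\mu:=\tfrac{m}{m+1}$ for all $k\neq t$, equivalently $p_{[1]}-p_{[2]}\ge\mu$; in particular $p_y\le 1$, and $p_y\to 1$ is approached by taking $y$ to be the boosted coordinate and driving its logit to $+\infty$. Hence minimizing $-\log p_y$ forces the minimizer $\rvp^\star=\rve_y$, so the conditional-risk minimizer coincides with $\rvq$ itself.

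Next I would check the rank-preservation hypothesis. Since $\rvq=\rve_y$ has $q_y=1$ and $q_i=0$ for every $i\neq y$, the only pairs carrying a strict inequality $q_i<q_j$ are those with $j=y$ and $i\neq y$, and for these $p^\star_i=0<1=p^\star_y$; the coordinates $i\neq y$ share equal posterior mass and impose no ordering constraint. Thus $\rvp^\star$ is rank preserving with respect to $\rvq$, and \citet{zhu2023label} yields that $\text{CE}_\eps$ is All-$k$ calibrated. Because $\text{CE}_\eps=-\log p_y\ge 0$ is nonnegative, \citet{yang2020consistency} upgrades top-$k$ calibration to top-$k$ consistency for every $k$, giving All-$k$ consistency.

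The step I expect to be the main obstacle is the attainment issue lurking in the second paragraph: $\rvp^\star=\rve_y$ lies on the boundary of $\Delta_K$ and is only approached as the logit of $y$ tends to $+\infty$, so the ``$\argmin$'' in the hypothesis of \citet{zhu2023label} must be read in a limiting sense. The cleanest way I would make this rigorous is to bypass the minimizer and verify top-$k$ calibration directly: if $f$ is not top-$k$ preserving with respect to $\rve_y$, then $y\notin r_k(f)$, so at least $k$ coordinates dominate $y$ and $y$ is left unboosted, forcing $p_y\le\tfrac{1}{(k+1)(m+1)}$ and hence $R_{\text{CE}_\eps}(f)\ge\log\big((k+1)(m+1)\big)>0=\inf_f R_{\text{CE}_\eps}(f)$. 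This strict gap is exactly the top-$k$ calibration inequality for every $k$, after which All-$k$ consistency follows as above. This estimate also exposes where the one-hot hypothesis is essential: for a genuinely mixed posterior the boosting step can force two coordinates with distinct posterior mass to the common value $p_{[1]}-\mu$, tying them and breaking rank preservation.
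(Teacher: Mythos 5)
Your proposal is correct, and it reaches the same two final invocations as the paper (rank preservation plus the results of \cite{zhu2023label} and \cite{yang2020consistency}), but the technical core is genuinely different and, in fact, tighter. The paper identifies the conditional-risk minimizer by computing the gradient of $L_{\CE_\eps}$ with respect to the logits in the two cases $t=y$ and $t\neq y$, and then invokes the first-order optimality condition to conclude $p_y=1$; from this it reads off rank preservation with respect to $\rve_y$ and cites the two results. This argument has exactly the weakness you flagged: $p_y=1$ is not attainable by any finite logit vector (and in the case $t\neq y$ the stationarity equation $1-p_y=0$ is even inconsistent with the case hypothesis), so the ``minimizer'' exists only in a limiting sense, and the hypothesis of \cite{zhu2023label} is applied to a point on the boundary of $\Delta_K$ that the model class never realizes. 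Your replacement step repairs this: by characterizing the attainable outputs of $\eps$-$\softmax$ (dominant-coordinate gap at least $\tfrac{m}{m+1}$, order-preserving relative to the underlying softmax) and observing that $\neg P_k(f,\rve_y)$ forces $y$ to be unboosted with at least $k$ softmax coordinates above it, you get the explicit bound $f(\rvx)_y\le\tfrac{1}{(k+1)(m+1)}$ and hence the quantitative calibration gap $\log\bigl((k+1)(m+1)\bigr)>0=\inf_f \gR_{\CE_\eps}(f)$, which verifies top-$k$ calibration for every $k$ directly from the definition, with no appeal to an attained minimizer. What the paper's route buys is brevity and a gradient formula that is reused elsewhere (Equation 3.7 and Theorem 3.6); what your route buys is rigor at the boundary and an explicit constant measuring how strongly calibration holds as a function of $m$ and $k$.
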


\begin{Theorem}
\label{th:all-k for non-one-hot labels}
For any label $\rvq \in \Delta_{K}$, let $y = \arg\max_{k\in[K]} q_k$ and $t=\arg\max_{k\in[K]}p_k$ , if $t = y$ and $q_y - \max_{k \neq y} q_k > \frac{m}{m+1}$, $\text{CE}_\eps$ is All-$k$ calibrated and All-\(k\) consistency.
\end{Theorem}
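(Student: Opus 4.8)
The plan is to reduce everything to the rank-preservation criterion of \citet{zhu2023label}: it suffices to show that, for each conditional label distribution $\rvq$ meeting the hypotheses, the minimizer of the pointwise (conditional) risk $R_{\text{CE}_\eps}(f)=\sum_{k}q_k\,\text{CE}_\eps(f,k)$ is rank preserving with respect to $\rvq$. All-$k$ calibration then follows from that criterion, and since $\text{CE}_\eps\ge 0$ the nonnegativity result of \citet{yang2020consistency} upgrades calibration to All-$k$ consistency. First I would rewrite the conditional risk in terms of the $\eps$-softmax output. Writing $\tilde\rvp=\eps\text{-}\softmax(h(\rvx))$, the three steps in the gray box give $\text{CE}_\eps(f,k)=-\log\tilde p_k$, so $R_{\text{CE}_\eps}(f)=-\sum_k q_k\log\tilde p_k=\text{CE}(\tilde\rvp,\rvq)$ is exactly the cross-entropy between $\tilde\rvp$ and $\rvq$. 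Minimizing the conditional risk over all logits is therefore identical to minimizing $\text{CE}(\tilde\rvp,\rvq)$ over the image of the $\eps$-softmax map.

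The key step is to characterize that image. From Step 2 and Step 3, if $t=\arg\max_k p_k$ then $\tilde p_t=(p_t+m)/(m+1)$ and $\tilde p_k=p_k/(m+1)$ for $k\ne t$, so $\tilde p_t-\tilde p_k=(p_t-p_k+m)/(m+1)\ge m/(m+1)$; conversely any $\tilde\rvp\in\Delta_{K}$ whose top coordinate exceeds every other by at least $m/(m+1)$ admits a valid softmax preimage. Hence the image is, up to its boundary, the margin-constrained simplex $\{\tilde\rvp\in\Delta_{K}:\tilde p_{t}-\max_{k\ne t}\tilde p_k\ge \tfrac{m}{m+1},\ t=\arg\max\tilde\rvp\}$, which is the region underlying the $\eps$-relaxation of Lemma~\ref{Le:eps_relax}. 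Now the hypotheses $t=y$ and $q_y-\max_{k\ne y}q_k>\tfrac{m}{m+1}$ say exactly that $\rvq$ itself lies in this image. Since the unconstrained minimizer of $\text{CE}(\cdot,\rvq)$ over the whole simplex is $\rvq$ by Gibbs' inequality, and $\rvq$ is feasible, the constrained minimizer is $\tilde\rvp^\star=\rvq$. This vector is trivially rank preserving, since $q_i<q_j\Rightarrow \tilde p_i^\star=q_i<q_j=\tilde p_j^\star$, which is what the criterion requires.

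I expect the main obstacle to be the argmax-dependence built into Step 2: because $t$ is itself a function of the logits, the conditional risk is only piecewise smooth, and one must argue global optimality across the different ``which-coordinate-is-amplified'' regions rather than within a single region. The image characterization is precisely what finesses this: once $\rvq$ is shown feasible, feasibility over the full union of regions forces the global minimizer to coincide with the unconstrained one, so no region-by-region Lagrangian comparison is needed. The remaining care is the boundary case where some $q_k=0$, so the infimum is only approached and not attained; there I would mirror the limiting argument already used for the one-hot case in Lemma~\ref{le:all-k for one-hot labels}, of which the present theorem is the strict generalization once the gap $q_y-\max_{k\ne y}q_k$ is allowed to fall below $1$.
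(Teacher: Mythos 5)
Your proof is correct, but it takes a genuinely different route from the paper's. The paper works in logit space: it writes down the first-order optimality conditions $\partial L_{\CE_\eps}(f(\rvx),\rvq)/\partial h(k|\rvx)=0$ inside the region where the target coordinate is amplified, solves them explicitly to obtain $p_t=q_t(1+m)-m$ and $p_j=q_j(m+1)$ for $j\neq t$, and then verifies rank preservation coordinate by coordinate using the margin hypothesis; pushing that stationary point through $\eps$-$\softmax$ gives exactly $\tilde\rvp=\rvq$, so the two arguments identify the same minimizer. Your argument replaces this calculus with a feasibility argument: characterize the image of $\eps$-$\softmax$ as the margin-constrained simplex, observe that $q_y-\max_{k\neq y}q_k>\frac{m}{m+1}$ is precisely feasibility of $\rvq$ in that image, and invoke Gibbs' inequality. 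What your route buys: it explains where the threshold $\frac{m}{m+1}$ comes from; it makes the hypothesis $t=y$ automatic at the optimum rather than an assumption; it gives strictness of the minimum (hence the strict inequality demanded by the calibration definition) for free from strict concavity of the logarithm; and, most substantively, it resolves global optimality across the piecewise regions created by the argmax in Step 2 --- a point the paper's proof glosses over, since stationarity within one region by itself rules out neither a lower value in another region nor a saddle point. What the paper's route buys: an explicit closed form for the optimal softmax probabilities, and no need to prove the surjectivity (preimage) half of your image characterization, which is exactly where your boundary caveat about coordinates with $q_k=0$ lives; note, though, that both proofs ultimately need the same limiting treatment there, since standard softmax outputs are strictly positive.
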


Lemma \ref{le:all-k for one-hot labels} and Theorem \ref{th:all-k for non-one-hot labels}  mean that CE$_\eps$ performs well not only on the top-1 prediction, but also on the top-$k$ predictions for any $k \in [K]$. We show the All-$k$  consistency property of different losses in Table~\ref{tab:all-k}, the consistency of other losses refer to \cite{zhu2023label}.

\begin{table*}[!h]
\vskip-10pt
\centering
\fontsize{8.5pt}{9.5pt}\selectfont
\caption{All-$k$ consistency between different loss functions.}
\label{tab:all-k}
\begin{tabular}{c|cccccccccc}
\toprule
Loss             & CE & MAE & NCE & GCE & SCE &AUL& AGCE & AEL                      & LDR-KL & CE$_\eps$ \\
\midrule
All-$k$ Consistency & $\usym{2713}$  & $\usym{2717}$   &  $\usym{2717}$   & $\usym{2713}$   & $\usym{2713}$   &  $\usym{2717}$  & $\usym{2717}$& $\usym{2717}$ & $\usym{2713}$      & $\usym{2713}$  \\
\bottomrule
\end{tabular}
\vskip-10pt
\end{table*}

\subsection{Gradient Analysis of $\eps$-Softmax.}
To provide a comprehensive understanding of $\eps$-$\softmax$ in  mitigating label noise, we further analyze the gradient
 of the CE loss when combined with $\eps$-$\softmax$. The gradient of $L_{\CE_\eps}(f(\rvx),y)$ with respect to the model $h(\rvx)$ can be derived as follows:
\begin{equation}
\label{eq:graident-ce}
\frac{\partial L_{\CE_\eps}(f(\rvx),y)}{\partial h(\rvx)}=
\begin{cases}
 -\frac{1}{p_y+m} \cdot \frac{\partial p_y}{\partial h(\rvx)}, & t = y \\ 
 -\frac{1}{p_y} \cdot \frac{\partial p_y}{\partial h(\rvx)},  & t \neq y
\end{cases},
\end{equation}
where $f = \eps$-$\softmax \circ h$, $\rvp(\rvx)=\softmax(h(\rvx))$ denotes the probabilities by standard softmax, and $t=\arg\max_{k\in[K]}p_k$ is the class with the largest value in prediction probabilities. 
% For brevity, we define $p_k = p(k|\rvx)$. 

\noindent\textbf{Remark.}\quad  The gradient in Equation \ref{eq:graident-ce} shows that $\CE_{\eps}$ will be equivalent to the standard CE if the maximum prediction is not the target class (i.e., $t\neq y$), in which the division of $m+1$ in probabilities is omitted due to the partial deviation. Conversely, when the prediction class $t$ matches the target class $y$, the gradient undergoes dynamic scaling by $\frac{p_y}{p_y + m}$. This scaling results in smaller gradients, akin to a form of soft early-stopping \citep{PES}, which facilitates the mitigation of overfitting to noisy labels. Such a characteristic enables Deep Neural Networks (DNNs) to efficiently fit clean samples in the early phases of training \citep{co-teaching, PES}, while simultaneously preventing the overfitting of noisy labels in the later stages of the training process. As illustrated in Figure~\ref{fig:ECE_0.8}, $\CE_\eps$ achieves a stable test accuracy curve, even in the challenging scenario with 0.8 symmetric label noise, without overfitting to noisy labels. On the contrary, CE with the standard softmax tends to rapidly overfit to noisy labels after the early phase of training, leading to poor performance.
% As illustrated in Figure~\ref{fig:ECE_0.8}, our $\CE_\eps$ maintains a stable test accuracy rate following a steady increase, even under the extreme condition of 0.8 symmetric noise, without overfitting to noisy labels. In contrast, the standard CE tends to rapidly overfit to noise labels after  the early training phase. This observation aligns with our gradient analysis.

\begin{figure*}[t]
    \vskip-15pt
    \centering
    \hspace{-3mm}
    \subfigure[$\CE_\eps$ ($\eta = 0$)]{
    \label{fig:ECE_0}
    \includegraphics[width=1.33in]{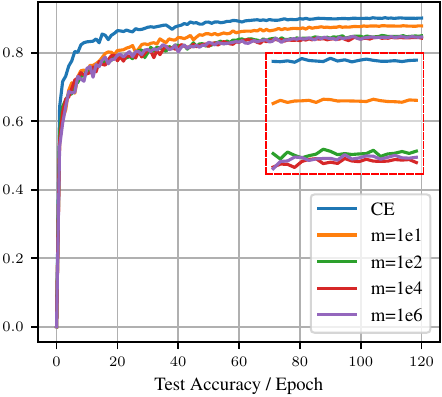}
    }
    \hspace{-3mm}
    \subfigure[$\CE_\eps$ ($\eta = 0.8$)]{
    \label{fig:ECE_0.8}
    \includegraphics[width=1.33in]{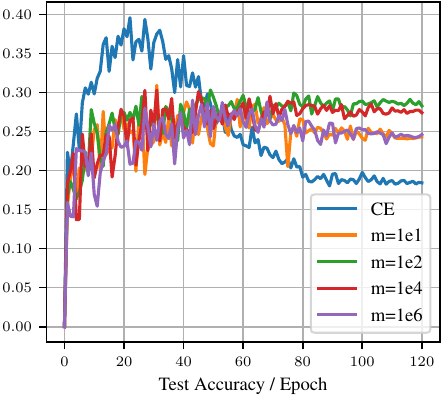}
    }
    \hspace{-3mm}
    \subfigure[$\CE_\eps$+MAE ($\eta = 0$)]{
    \label{fig:ECEandMAE_0}
    \includegraphics[width=1.33in]{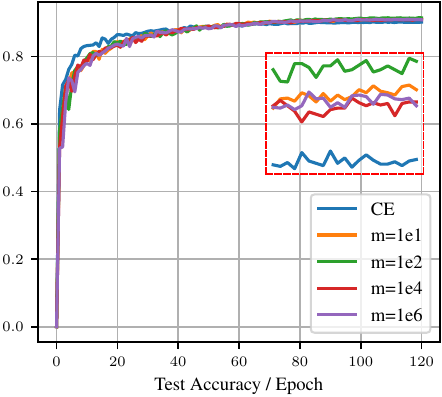}
    }
    \hspace{-3mm}
    \subfigure[$\CE_\eps$+MAE ($\eta = 0.8$)]{
    \label{fig:ECEandMAE_0.8}
    \includegraphics[width=1.33in]{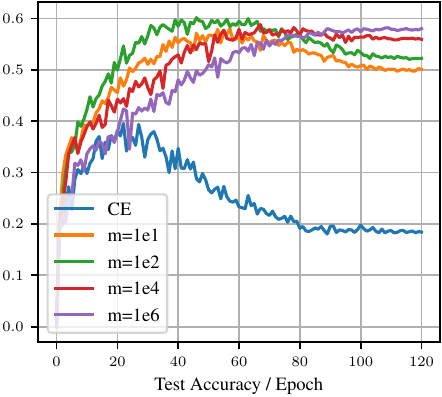}
    }
    \hspace{-3mm}
    \vskip-5pt
    \caption{Test accuracies on CIFAR-10 under symmetric noise with different $m$, where the red box represents the zoomed-in accuracies of the last 20 epochs. (a) and (b) illustrate $\CE_\eps$ with 0 (clean) and 0.8 noise rates, respectively. (c) and (d) illustrate $\CE_\eps$+MAE ($\alpha=0.01, \beta=5$) similarly.}
    % with 0 (clean) and 0.8 noise rates, respectively.}
    \label{fig:line}
    \vskip-10pt
\end{figure*}

\subsection{Better Trade-off between Robustness and Effective Learning}
% \noindent\textb{Better Trade-off between Robustness and Effective Learning.}\quad  
It can be noted that the incorporation of $\eps$-$\softmax$ somewhat sacrifices the fitting ability of the CE loss on clean datasets, as shown in Figure~\ref{fig:ECE_0}. Therefore, we need to enhance the fitting ability using additional techniques. Inspired by the Active Passive Loss \citep{NCE}, we propose to accommodate with the symmetric loss MAE. For instance, we formulate the combination of CE$_{\eps}$ and MAE (a.k.a., $\CE_\eps$+MAE) as follows
\begin{equation}
    L_{\CE_\eps+\text{MAE}} = \alpha \cdot  L_{\CE_\eps} + \beta \cdot L_\text{MAE},
\end{equation}
ditto for $\FL_\eps$+MAE. 

% If the maximum probability output does not match the label, the fitting ability of $\CE_\eps$ remains unaffected. Therefore, it is necessary to enhance the fitting ability specifically for cases where the maximum probability corresponds to the correct label. Drawing inspiration from the Active Passive Loss \cite{NCE}, we amalgamate our proposed $\CE_\eps$ and $\FL_\eps$ with the symmetric loss function MAE. We formulate the combined loss function $\CE_\eps$+MAE as follows:
% \begin{equation}
%     L_{\CE_\eps+\text{MAE}} = \alpha \cdot  L_{\CE_\eps} + \beta \cdot L_\text{MAE},
% \end{equation}
% ditto for $\FL_\eps$+MAE. 
% \cite{ma2020normalized} prove adding two symmetric loss functions still satisfies the symmetric condition.
% Definition by \cite{ma2020normalized}, $\mathcal L(\rvu,\rve_y)=\sum_{i=1}^k\ell(u_i,e_{y_i})$, $\mathcal L_{\text {Active }}$ is an active loss function if 
% \begin{equation}
%     \forall i \neq y, \quad \ell(f(\boldsymbol{x}), i)=0,
% \end{equation}
% $\mathcal L_{\text {passive}}$ is an passive loss function if 
% \begin{equation}
%     \exists i \neq y, \quad \ell(f(\boldsymbol{x}), i)\neq 0,
% \end{equation}
% $\eps$-Softmax does not change the nature of whether the loss function is active or passive. So ECE and EFL are active loss functions. These two types of loss functions can mutually reinforce each other to mitigate underfitting. We suggest using the loss function combined with $\eps$-Softmax with another type of loss function. In this paper, we pair our ECE/EFL with passive loss function MAE to use. ECE+MAE as follows:

\begin{Lemma}
\label{le:linear}
For any loss function $L_{\eps}$ with $\eps$-$\softmax$ and symmetric loss function $L_{\text{symmetric}}$ defined in Equation \ref{eq:symmetric_condition}, the excess risk bound of $\alpha\cdot L_{\eps}+\beta\cdot L_{\text{symmetric}}$ is equivalent to that of $\alpha\cdot L_{\eps}$.
\end{Lemma}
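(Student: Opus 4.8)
The plan is to show that the combined loss $\alpha L_{\eps}+\beta L_{\text{symmetric}}$ satisfies the hypothesis of Theorem~\ref{th:asymmetry} with \emph{exactly} the same discrepancy parameter as $\alpha L_{\eps}$, so that the excess risk bound returned by that theorem is numerically identical for the two losses. The key preliminary observation is that the bound $2\delta+\frac{2c\delta}{a}$ depends on the loss only through the symmetry-gap constant $\delta$ controlling $\big|\sum_{k}(L(\rvu_1,k)-L(\rvu_2,k))\big|$ for $\|\rvu_1-\rvu_2\|_2\le\eps$; the remaining quantities $c=\E_\gD(1-\eta_y)$ and $a=\min_{\rvx,k}(1-\eta_y-\eta_{\rvx,k})$ are determined by the noise alone and are insensitive to the choice of loss. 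Hence it suffices to compare the symmetry-gap constants of the two losses. First I would record the scaling behaviour of $\alpha L_{\eps}$: since $L_{\eps}$ meets the gap condition with parameter $\delta$, the rescaled loss $\alpha L_{\eps}$ meets it with parameter $\alpha\delta$, and Theorem~\ref{th:asymmetry} then delivers the bound $2\alpha\delta+\frac{2c\alpha\delta}{a}$.

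The crux is the computation of the symmetry gap for the combined loss. Using the additive form $L_{\text{combined}}(\rvu,k)=\alpha L_{\eps}(\rvu,k)+\beta L_{\text{symmetric}}(\rvu,k)$ and linearity of the sum over $k$,
\begin{equation}
\begin{aligned}
\sum_{k=1}^K\!\big(L_{\text{combined}}(\rvu_1,k)-L_{\text{combined}}(\rvu_2,k)\big)
={}&\alpha\sum_{k=1}^K\!\big(L_{\eps}(\rvu_1,k)-L_{\eps}(\rvu_2,k)\big)\\
&+\beta\Big(\sum_{k=1}^K L_{\text{symmetric}}(\rvu_1,k)-\sum_{k=1}^K L_{\text{symmetric}}(\rvu_2,k)\Big).
\end{aligned}
\end{equation}
By the symmetric condition in Equation~\ref{eq:symmetric_condition}, $\sum_{k}L_{\text{symmetric}}(\rvu,k)=C$ for every output $\rvu$, so the last bracket equals $C-C=0$ and the symmetric term vanishes entirely, regardless of the weight $\beta$. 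Taking absolute values and invoking the gap condition for $L_{\eps}$ gives $\big|\sum_{k}(L_{\text{combined}}(\rvu_1,k)-L_{\text{combined}}(\rvu_2,k))\big|\le\alpha\delta$ whenever $\|\rvu_1-\rvu_2\|_2\le\eps$, which is precisely the parameter $\alpha\delta$ obtained for $\alpha L_{\eps}$ alone.

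To close the argument I would verify that the residual hypothesis $\gR_L(f^*)=0$ transfers to the combined loss: at a perfect one-hot prediction the $\eps$-softmax base loss and the nonnegative symmetric loss attain their minima simultaneously, so the clean minimizer achieves zero risk for $\alpha L_{\eps}+\beta L_{\text{symmetric}}$ as well. Applying Theorem~\ref{th:asymmetry} to the combined loss with discrepancy $\alpha\delta$ then yields $2\alpha\delta+\frac{2c\alpha\delta}{a}$, identical to the bound for $\alpha L_{\eps}$, which establishes the claimed equivalence. The only delicate point is conceptual rather than computational: one must recognise that an exactly symmetric loss contributes the same constant $C$ to both coordinate-sums and therefore cannot perturb the symmetry gap that drives the bound, so that $\delta$ (and hence the entire bound) is invariant under adding $\beta L_{\text{symmetric}}$; the remainder is linearity and a routine check of the zero-risk condition.
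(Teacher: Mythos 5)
Your proposal is correct and follows essentially the same route as the paper's own proof: both arguments reduce the claim to the symmetry-gap computation, observe that $\sum_{k=1}^K L_{\text{symmetric}}(\rvu,k)=C$ makes the symmetric term cancel exactly, and conclude that the combined loss satisfies the hypothesis of Theorem~\ref{th:asymmetry} with the same constant $\alpha\delta$ as $\alpha\cdot L_{\eps}$ alone. Your additional check that the zero-risk hypothesis transfers is a reasonable extra (the paper omits it), though as stated it relies on the symmetric loss vanishing at one-hot predictions, which holds for MAE but not for an arbitrary symmetric loss.
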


Lemma~\ref{le:linear} suggests that the $\eps$-$\softmax$-enhanced loss function $L_{\eps}$ can be seamlessly integrated with any symmetric loss function while not modifying the inherent robustness. As can be noticed in Figure~\ref{fig:ECEandMAE_0} and Figure~\ref{fig:ECEandMAE_0.8}, $\CE_{\eps}$+MAE not only depicts strong fitting capabilities but also achieves better noise tolerance. More interestingly, the test accuracy on clean datasets obtained by $\CE_{\eps}$+MAE even exceeds that of the standard CE loss.

\noindent\textbf{Strict Convexity of CE$_\epsilon$+MAE.}\quad  To elaborate on how the combination of CE$_\epsilon$ and MAE can overcome the underfitting issue, we conduct an in-depth analysis from the optimization perspective. When the prediction $t=y$, the gradients of CE$_{\epsilon}$, CE and MAE w.r.t. $p_y\in (0,1]$, are $-\frac{1}{p_y+m}$, $-\frac{1}{p _y}$ and $-2$, respectively. As can be seen, CE and CE$_\epsilon$ are strictly convex, while MAE exhibits linearity. Moreover, CE has stronger convexity compared to CE$ _\epsilon$ (specifically, the gradient of CE changes more rapidly as $1/p_y^2 > 1/(p_y+m)^2$), rendering CE more susceptible to overfitting noisy labels while CE$ _{\epsilon}$ suffering from underfitting for large $m$, as illustrated in Figure \ref{fig:ECE_0} and Figure \ref{fig:ECE_0.8}. Conversely, owing to the linearity, MAE treats every sample equally, making it robust to label noise but leading to more training time for convergence \cite{GCE}. Hence, the combination of CE$ _\epsilon$ and MAE, which notably forms a strictly convex function (where the convexity can be controlled by $m$), can provide better trade-off between robustness and effective learning.

\noindent\textbf{Association with APL.}\quad  Additionally, our proposed $\text{CE}_{\epsilon}$+MAE  coincides with the concept of active and passive losses in \cite{NCE}. Specifically, for a loss function denoted as $L(f(\rvx),y)=\ell_1(f(\rvx),y)+\sum_{k\neq y} \ell_2(f(\rvx),k)$, $L$ is active if $\ell_2(f(\rvx),k)=0$ for any $k\neq y$, and $L$ is passive if $\ell_2(f(\rvx),k)\neq 0$ for some $k\neq y$. Active losses only explicitly maximize the target probability $f(\rvx)_y$, while passive losses also explicitly minimize non-target probabilities $\{f(\rvx)_k\}_{k\neq y}$. For example, CE is an active loss, while MAE is passive. Based on these two loss terms, \citet{NCE} proposed to combine a robust active loss and a robust passive loss into an ``Active Passive Loss'' (APL) framework for improving sufficient learning with underfitting losses. Note that ${\text{CE}_{\epsilon}}$ is also active, thus ${\text{CE}_{\epsilon}}$+MAE coincides with the APL framework and further mitigates the underfitting issue.

To further validate $\CE_{\eps}$+MAE, we incorporate it with sample selection, pseudo-label prediction \cite{sohn2020fixmatch}, and MixUp \cite{zhang2018mixup}, culminating in a semi-supervised learning algorithm we term $\CE_{\eps}$+MAE (Semi). The algorithm details can be found in the Appendix \ref{sec:appendix-semi}. 
In our experiments, we use "$\CE_{\eps}$+MAE (Semi)" to ensure a fair comparison with other hybrid methods with sample selection and semi-supervised learning (SSL).
No additional techniques are utilized for "$\CE_{\eps}$+MAE".

% Please add the following required packages to your document preamble:
% \usepackage{multirow}
\begin{table*}[t]
\vskip-15pt
\centering
\setlength{\tabcolsep}{3.2pt}
\fontsize{8pt}{9.5pt}\selectfont
% \scriptsize
\caption{Last epoch test accuracies (\%) of different methods on CIFAR-10/100 symmetric and asymmetric noise.  The results "mean$\pm$std" are reported over 3 random runs and the top-2 best results are \textbf{boldfaced}.}
\label{tab:synthesize}
% Please add the following required packages to your document preamble:
% \usepackage{multirow}
\begin{tabular}{c|c|cccc|cccc}
\toprule
\multirow{2}{*}{\textbf{CIFAR-10}}  & \multirow{2}{*}{\textbf{Clean}} & \multicolumn{4}{c|}{\textbf{Symmetric Noise Rate ($\eta$)}}                                     & \multicolumn{4}{c}{\textbf{Asymmetric Noise Rate ($\eta$)}}                                    \\
                                    &                                 & 0.2                 & 0.4                 & 0.6                 & 0.8                 & 0.1                 & 0.2                 & 0.3                 & 0.4                 \\
\midrule
CE                                  & 90.50\tiny±0.35                      & 75.47\tiny±0.27          & 58.46\tiny±0.21          & 39.16\tiny±0.50          & 18.95\tiny±0.38          & 86.98\tiny±0.31          & 83.82\tiny±0.04          & 79.35\tiny±0.66          & 75.28\tiny±0.58          \\
FL                                  & 89.70\tiny±0.24                      & 74.50\tiny±0.18          & 58.23\tiny±0.40          & 38.69\tiny±0.06          & 19.47\tiny±0.74          & 86.64\tiny±0.12          & 83.08\tiny±0.07          & 79.34\tiny±0.30          & 74.68\tiny±0.31          \\
% MAE                                 & 87.33\tiny±3.49                      & 87.92\tiny±0.32          & 85.21\tiny±0.35          & 78.27\tiny±0.39          & 45.36\tiny±1.35          & 86.97\tiny±4.31          & 76.95\tiny±5.27          & 63.91\tiny±0.13          & 55.88\tiny±0.04          \\
GCE                                 & 89.42\tiny±0.21                      & 86.87\tiny±0.06          & 82.24\tiny±0.25          & 68.43\tiny±0.26          & 25.82\tiny±1.03          & 88.43\tiny±0.20          & 86.17\tiny±0.29          & 80.72\tiny±0.42          & 74.01\tiny±0.53          \\
NLNL                                & 90.73\tiny±0.20                      & 73.70\tiny±0.05          & 63.90\tiny±0.44         & 50.68\tiny±0.47          & 29.53\tiny±1.55          & 88.54\tiny±0.25          & 84.74\tiny±0.08          & 81.26\tiny±0.43          & 76.97\tiny±0.52          \\
SCE                                 & 91.30\tiny±0.08                      & 87.58\tiny±0.05          & 79.47\tiny±0.48          & 59.14\tiny±0.07          & 25.88\tiny±0.49          & 89.87\tiny±0.27          & 86.48\tiny±0.25          & 81.30\tiny±0.18          & 74.99\tiny±0.16          \\
NCE+MAE                             & 89.02\tiny±0.10                      & 86.79\tiny±0.28          & 83.60\tiny±0.14          & 75.93\tiny±0.41          & 46.96\tiny±0.67          & 88.03\tiny±0.27          & 85.53\tiny±0.08          & 81.10\tiny±0.52          & 74.98\tiny±0.48          \\
NCE+RCE                             & 91.03\tiny±0.28                      & 88.41\tiny±0.24          & 85.13\tiny±0.56          & 79.20\tiny±0.06          & 55.28\tiny±1.26          & 90.25\tiny±0.08          & 88.11\tiny±0.23          & 85.35\tiny±0.18          & \textbf{79.43\tiny±0.21} \\
NFL+RCE                             & 91.08\tiny±0.29                      & 89.00\tiny±0.23          & 85.90\tiny±0.19          & 79.79\tiny±0.52          & 55.47\tiny±2.73          & 89.99\tiny±0.35          & 88.33\tiny±0.26          & 85.27\tiny±0.13          & 79.05\tiny±0.35          \\
NCE+AUL                             & 91.06\tiny±0.24                      & 89.11\tiny±0.07          & 85.79\tiny±0.16          & 79.57\tiny±0.21          & 57.59\tiny±0.84          & 90.18\tiny±0.23          & 88.30\tiny±0.44          & 85.28\tiny±0.04          & 79.14\tiny±0.36          \\
NCE+AGCE                           & 91.13\tiny±0.11                      & 89.00\tiny±0.29          & 85.91\tiny±0.15          & \textbf{80.36\tiny±0.36} & 49.98\tiny±4.81          & 89.90\tiny±0.09          & 88.36\tiny±0.11          & \textbf{85.73\tiny±0.12} & \textbf{79.28\tiny±0.37} \\
NCE+AEL                             & 88.43\tiny±0.25                      & 86.46\tiny±0.28          & 83.06\tiny±0.23          & 75.15\tiny±0.32          & 43.22\tiny±0.46          & 87.59\tiny±0.38          & 85.98\tiny±0.14          & 82.87\tiny±0.16          & 75.78\tiny±0.12          \\
LDR-KL                              & 91.38\tiny±0.35                      & 89.01\tiny±0.09          & 85.46\tiny±0.11          & 74.93\tiny±0.33          & 34.78\tiny±0.67          & 90.24\tiny±0.18          & 88.38\tiny±0.02          & 85.03\tiny±0.16          & 77.68\tiny±0.37          \\
CE+LC                               & 90.06\tiny±0.41                      & 85.66\tiny±0.32          & 79.18\tiny±0.57          & 53.87\tiny±0.57          & 21.04\tiny±0.47          & 87.99\tiny±0.06          & 84.01\tiny±0.01          & 79.71\tiny±0.51          & 74.34\tiny±0.30          \\
\midrule
\textbf{CE$_\eps$+MAE}                    & 91.40\tiny±0.12                      & \textbf{89.29\tiny±0.10} & \textbf{85.93\tiny±0.19} & 79.52\tiny±0.14          & \textbf{58.96\tiny±0.70} & \textbf{90.30\tiny±0.11} & \textbf{88.62\tiny±0.18} & \textbf{85.56\tiny±0.12} & 78.91\tiny±0.25          \\
\textbf{FL$_\eps$+MAE}                    & 91.11\tiny±0.13                      & \textbf{89.13\tiny±0.25} & \textbf{86.15\tiny±0.29} & \textbf{79.81\tiny±0.27} & \textbf{58.02\tiny±1.12} & \textbf{90.39\tiny±0.15} & \textbf{88.40\tiny±0.07} & 85.31\tiny±0.17          & 79.04\tiny±0.10          \\
\midrule\midrule
\multirow{2}{*}{\textbf{CIFAR-100}} & \multirow{2}{*}{\textbf{Clean}} & \multicolumn{4}{c|}{\textbf{Symmetric Noise Rate ($\eta$)}}                                     & \multicolumn{4}{c}{\textbf{Asymmetric Noise Rate ($\eta$)}}                                    \\
                                    &                                 & 0.2                 & 0.4                 & 0.6                 & 0.8                 & 0.1                 & 0.2                 & 0.3                 & 0.4                 \\
\midrule
CE                                  & 70.79\tiny±0.58                      & 56.21\tiny±2.04          & 39.31\tiny±0.74          & 22.38\tiny±0.74          & 7.33\tiny±0.10           & 65.10\tiny±0.74          & 58.26\tiny±0.31          & 49.99\tiny±0.54          & 41.15\tiny±1.04          \\
FL                                  & 70.58\tiny±0.34                      & 56.32\tiny±1.43          & 40.83\tiny±0.52          & 22.44\tiny±0.54          & 7.68\tiny±0.37           & 65.00\tiny±0.46          & 58.12\tiny±0.44          & 51.16\tiny±1.32          & 41.46\tiny±0.38          \\
% MAE                                 & 5.31\tiny±1.19                       & 4.49\tiny±1.35           & 2.78\tiny±1.68           & 3.10\tiny±0.57           & 2.13\tiny±0.98           & 5.49\tiny±0.57           & 4.50\tiny±1.12           & 3.74\tiny±1.38           & 3.11\tiny±0.26           \\
GCE                                 & 70.57\tiny±0.25                      & 64.55\tiny±0.36          & 56.60\tiny±1.61          & 45.19\tiny±0.92          & 19.85\tiny±0.88          & 63.94\tiny±2.08          & 60.89\tiny±0.06          & 53.36\tiny±1.58          & 40.82\tiny±0.85          \\
NLNL                                & 68.72\tiny±0.60                      & 46.99\tiny±0.91          & 30.29\tiny±1.64          & 16.60\tiny±0.90          & 11.01\tiny±2.48          & 59.55\tiny±1.22          & 50.19\tiny±0.56          & 42.81\tiny±1.13          & 35.10\tiny±0.20          \\
SCE                                 & 70.41\tiny±0.20                      & 55.23\tiny±0.76          & 40.23\tiny±0.29          & 21.44\tiny±0.52          & 7.63\tiny±0.24           & 64.54\tiny±0.30          & 57.62\tiny±0.70          & 50.17\tiny±0.19          & 41.01\tiny±0.74          \\
NCE+MAE                             & 67.69\tiny±0.05                      & 63.21\tiny±0.44          & 57.91\tiny±0.45          & 45.26\tiny±0.44          & 23.72\tiny±0.99          & 65.70\tiny±1.04          & 62.87\tiny±0.42          & 55.82\tiny±0.19          & 41.86\tiny±0.27          \\
NCE+RCE                             & 67.89\tiny±0.47                      & 64.60\tiny±0.92          & 58.64\tiny±0.19          & 45.25\tiny±0.50          & 24.87\tiny±0.52          & 66.20\tiny±0.28          & 63.18\tiny±0.37          & 55.05\tiny±0.32          & 41.21\tiny±0.66          \\
NFL+RCE                             & 68.28\tiny±0.30                      & 64.57\tiny±0.52          & 57.64\tiny±0.74          & 45.47\tiny±0.59          & 24.35\tiny±0.32          & 66.18\tiny±0.38          & 63.63\tiny±0.30          & 55.33\tiny±0.25          & 40.82\tiny±0.67          \\
NCE+AUL                             & 69.55\tiny±0.40                      & 65.12\tiny±0.36          & 55.86\tiny±0.20          & 37.88\tiny±0.32          & 12.69\tiny±0.14          & 67.06\tiny±0.23          & 58.16\tiny±0.17          & 48.06\tiny±0.16          & 38.30\tiny±0.12          \\
NCE+AGCE                            & 68.78\tiny±0.24                      & 65.30\tiny±0.46          & \textbf{59.95\tiny±0.15} & 47.63\tiny±0.94          & 24.13\tiny±0.06          & 67.15\tiny±0.40          & 64.21\tiny±0.17          & 56.18\tiny±0.24          & 44.15\tiny±0.08          \\
NCE+AEL                             & 64.47\tiny±0.19                      & 48.07\tiny±0.16          & 32.29\tiny±0.71          & 19.78\tiny±1.03          & 10.50\tiny±0.51          & 58.20\tiny±0.37          & 50.19\tiny±0.61          & 43.82\tiny±0.32          & 35.13\tiny±0.23          \\
LDR-KL                              & 71.03\tiny±0.28                      & 56.69\tiny±0.06          & 40.69\tiny±0.66          & 22.59\tiny±0.23          & 7.49\tiny±0.33           & 65.93\tiny±0.01          & 58.47\tiny±0.04          & 50.92\tiny±0.15          & 41.94\tiny±0.37          \\
CE+LC                               & 71.80\tiny±0.34                      & 56.26\tiny±0.09          & 37.36\tiny±0.49          & 17.46\tiny±0.62          & 6.32\tiny±0.16           & 65.85\tiny±0.30          & 58.84\tiny±0.02          & 50.46\tiny±0.12          & 40.97\tiny±0.39          \\
\midrule
\textbf{CE$_\eps$+MAE}                    & 70.83\tiny±0.18                      & \textbf{65.45\tiny±0.31} & 59.20\tiny±0.42          & \textbf{48.15\tiny±0.79} & \textbf{26.30\tiny±0.46} & \textbf{67.58\tiny±0.04} & \textbf{64.52\tiny±0.18} & \textbf{58.47\tiny±0.12} & \textbf{48.51\tiny±0.36} \\
\textbf{FL$_\eps$+MAE}                    & 70.58\tiny±0.68                      & \textbf{65.45\tiny±1.39} & \textbf{59.58\tiny±0.80} & \textbf{48.09\tiny±0.35} & \textbf{26.73\tiny±0.45} & \textbf{67.73\tiny±0.12} & \textbf{64.80\tiny±0.29} & \textbf{58.88\tiny±0.30} & \textbf{48.10\tiny±0.23} \\
\bottomrule
\end{tabular}
\vskip-10pt
\end{table*}
\section{Experiments}
In this section, we conduct extensive experiments to validate the superiority of $\epsilon$-$\softmax$ in mitigating label noise. Complete experimental setting and results can be found in  the Appendix~\ref{sec:appendix-exp} and~\ref{sec:appendix-exp-result}.

\subsection{Evaluation on Benchmark Datasets}
We evaluate our proposed methods on benchmark datasets CIFAR-10 / CIFAR-100 \cite{krizhevsky2009learning} with synthetic label noise, following \cite{NCE, ALFs_ICML}.
% the noise generalization, networks and training details in \citep{NCE}. 
% We begin with comparison of state-of-the-art robust loss function baselines on benchmark datasets CIFAR-10/CIFAR-100 \cite{krizhevsky2009learning}. The detailed experimental settings can be found in the appendix.

\noindent\textbf{Baselines.}\quad  We consider several baseline methods for comparison, including Standard CE and FL \cite{FL}; MAE; GCE \cite{GCE}; NLNL \cite{NLNL}; SCE \cite{SCE}; APL \cite{NCE}, including NCE+MAE, NCE+RCE, and NFL+RCE; AFLs \cite{ALFs_ICML}, including NCE+AEL, NCE+AGCE, and NCE+AUL; LDR-KL \cite{zhu2023label}; and LogitClip \cite{LC}, including CE+LC.

% \noindent\textbf{Experimental Setting.}\quad  The complete experimental settings can be found in the appendix.
% We follow the experimental settings in \cite{NCE,ALF_PAMI}: An 8-layer CNN is used for CIFAR-10 and a ResNet-34 for CIFAR-100. The networks are trained for 120 and 200 epochs for CIFAR-10 and CIFAR-100. For all training, we use the SGD optimizer with momentum 0.9 and cosine learning rate annealing. The weight decay is set to $1 \times 10^{-4}$ and $1 \times 10^{-5}$ for CIFAR-10 and CIFAR-100. The initial learning rate is set to 0.01 for CIFAR-10 and 0.1 for CIFAR-100. The batch size is set to 128. Typical data augmentations including random width/height shift and horizontal flip are applied. Last epoch test accuracies are compared. The complete parameter settings can be found in the Appendix.

% The complete experimental settings can be found in the Appendix.

% Please add the following required packages to your document preamble:
% \usepackage{multirow}
\begin{table*}[!t]
% \vskip-15pt
\fontsize{8pt}{9.5pt}\selectfont
\centering
\caption{
% Last epoch test accuracies (\%) of 
Ablation experiments on CIFAR-100.  
The results "mean$\pm$std" are reported over 3 random runs and the best results are \textbf{boldfaced}. 
% * denotes the proposed $\CE_\eps$+MAE. 
If $m=0$, $\CE_\eps$+MAE equals CE+MAE. }
% * denotes searching the best hyperparameter for different noise rates.}
\label{tab:ablation}
\begin{tabular}{c|c|cc|c}
\toprule
\multirow{2}{*}{\textbf{CIFAR-100}} & \multirow{2}{*}{\textbf{Clean}} & \multicolumn{2}{c|}{\textbf{Symmetric}}             & \textbf{Asymmetric}          \\
                        &                        & 0.4                 & 0.8                 & 0.4                 \\
\midrule
CE                      & 70.79\tiny\tiny±0.58    & 39.31\tiny\tiny±0.74          & 7.33\tiny\tiny±0.10           & 41.15\tiny\tiny±1.04          \\
MAE                     & 5.31\tiny\tiny±1.19     & 2.78\tiny\tiny±1.68           & 2.13\tiny\tiny±0.98           & 3.11\tiny\tiny±0.26           \\
% GCE*                    & 70.57\tiny\tiny±0.25    & 56.60\tiny\tiny±1.61          & 19.85\tiny\tiny±0.88          & 40.82\tiny\tiny±0.85          \\
% $\CE_\tau$+MAE*         & 1                 & 1                       & 1                       & 1                       \\
$\CE_\eps$+MAE ($m=0$)  & 69.33\tiny\tiny±0.51    & 37.00\tiny\tiny±0.40          & 11.65\tiny±0.18          & 41.53\tiny±0.97         \\
$\CE_\eps$+MAE ($m=1e2$)& 70.55\tiny\tiny±0.47    & 39.39\tiny\tiny±0.77          & 13.05\tiny±0.58          & \textbf{48.51\tiny±0.36}         \\
$\CE_\eps$+MAE ($m=1e4$)& 70.83\tiny\tiny±0.18    & \textbf{59.20\tiny\tiny±0.42} & \textbf{26.30\tiny±0.46} & 40.36\tiny±0.96 \\
$\CE_\eps$+MAE ($m=1e5$)& 67.72\tiny\tiny±0.88    & 56.41\tiny\tiny±0.22          & 22.14\tiny±0.56          & 7.56\tiny±1.10   \\
\bottomrule
\end{tabular}
% \vskip-5pt
\end{table*}

\noindent\textbf{Results.}\quad  Table~\ref{tab:synthesize} presents the test accuracy of various loss functions under symmetric and asymmetric label noise. As can be seen, our proposed $\eps$-$\softmax$-enhanced loss functions, $\CE_\eps$+MAE and $\FL_\eps$+MAE, demonstrate remarkable performance, ranking among the top-2 in most cases across both datasets. These methods consistently outperform others such as GCE, SCE, NLNL, NCE+MAE and LDR-KL, regardless of the noise rates. In scenarios of clean labels, $\CE_\eps$+MAE and $\FL_\eps$+MAE also exhibit strong fitting abilities, outperforming NCE+RCE and NCE+AGCE. In particular, on CIFAR-100 with 0.4 asymmetric noise, most robust loss functions have no effect, but our methods achieve over 48\% accuracy, significantly outperforming all other methods.
% When compared to the standard CE, traditional robust loss functions such as GCE, NCE+RCE, and NCE+AGCE tend to underfit on clean CIFAR-100 labels, whereas our $\CE_\eps$+MAE exhibits slightly better performance than CE. 
% Furthermore, Table~\ref{tab: idn} presents the results under instance-dependent noise.
% As shown, our method consistently outperforms others, with particularly significant improvements on CIFAR-100.
These findings underscore the robustness and effectiveness of $\eps$-$\softmax$-enhanced loss functions, delivering their excellent performance in various noise scenarios.

\textbf{Ablation Experiments.}\quad  We perform detailed ablation experiments to further explore the role of each component and hyperparameter $m$ in our $\CE_\eps$+MAE, experimental results are shown in Table~\ref{tab:ablation}. We can observe that CE will severely fit the noise label, and the symmetric loss MAE is difficult to optimize. CE+MAE (i.e., $m=0$) is a trade-off between robustness and fitting ability, increasing noise tolerance at the cost of reducing fitting ability on clean labels, consistent with previous works \cite{GCE, SCE, JS}. 
In particular, our $\CE_\eps$+MAE shows remarkable  properties. As the parameter $m$ experiences a moderate increase, $\CE_\eps$+MAE not only achieves  noise tolerance for symmetric and asymmetric noise, but also achieves  effective learning for the clean scenario. Additionally, the experimental results suggest that strict constraints are better suited for symmetric noise, while looser constraints are more effective for asymmetric noise.
% When $m=1e4$, $\CE_\eps$+MAE is more effective on clean labels than the standard CE. 

% This brings to light an interesting phenomenon: while a larger $m$ typically reduces the fitting ability of $\CE_\eps$, a moderate increase seems to increase the effectiveness of $\CE_\eps$+MAE in learning with clean labels. To investigate the causes of this phenomenon, we record the train accuracies and loss values of $\CE_\eps$+MAE in Table~\ref{tab:train_acc}.
% As we can see, increasing $m$ reduces the overfitting of $\CE_\eps$+MAE on the training set due to its inherent soft early-stopping characteristic, thus providing better generalization on the test set. 
\begin{figure*}[!t]
    % \vskip-15pt
    \centering
    \hspace{-3mm}
    \subfigure[CE ($\eta=0.2$)]{
    \includegraphics[width=1.33in]{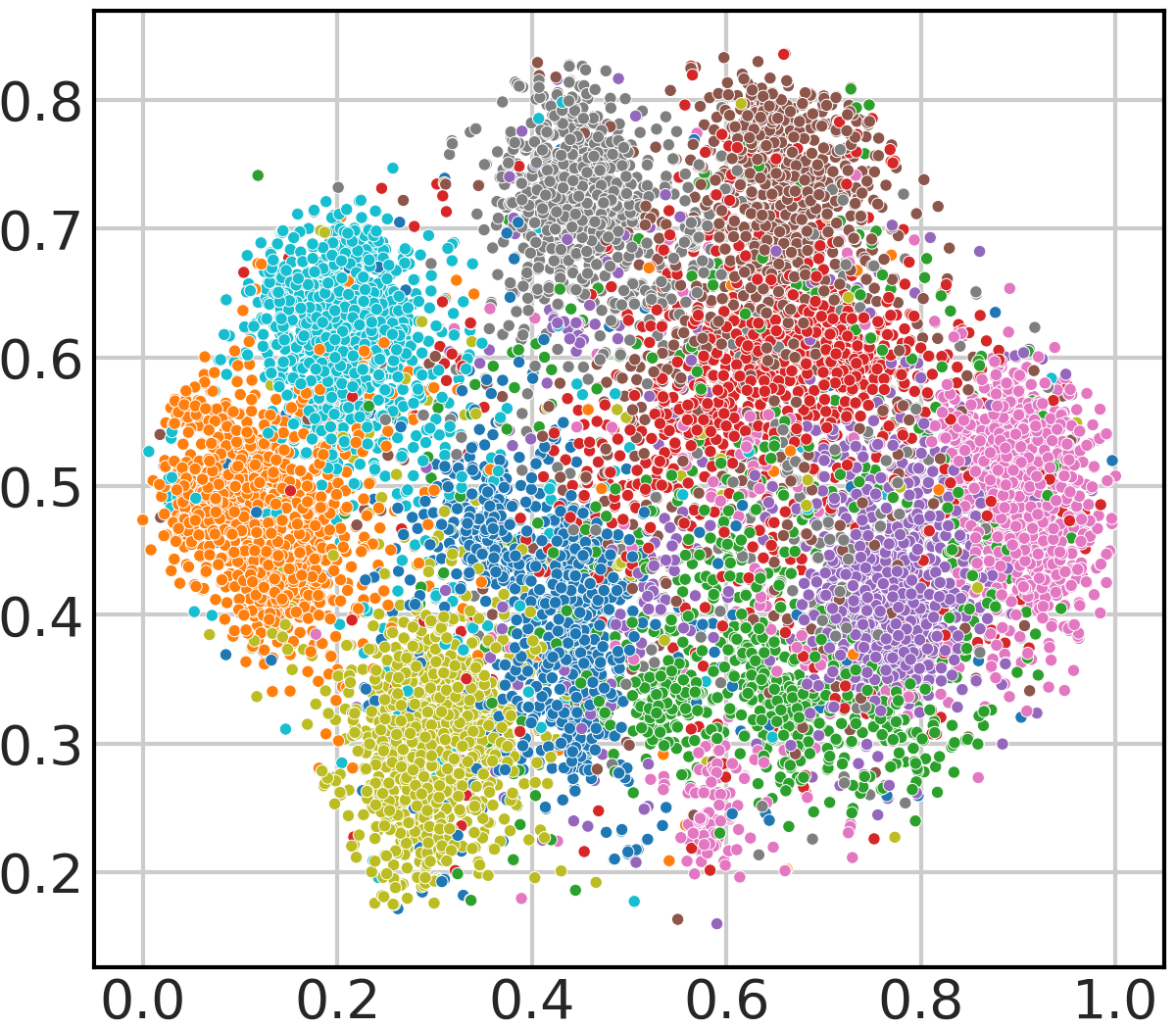}
    }
    \hspace{-3mm}
    \subfigure[CE ($\eta=0.4$)]{
    \includegraphics[width=1.33in]{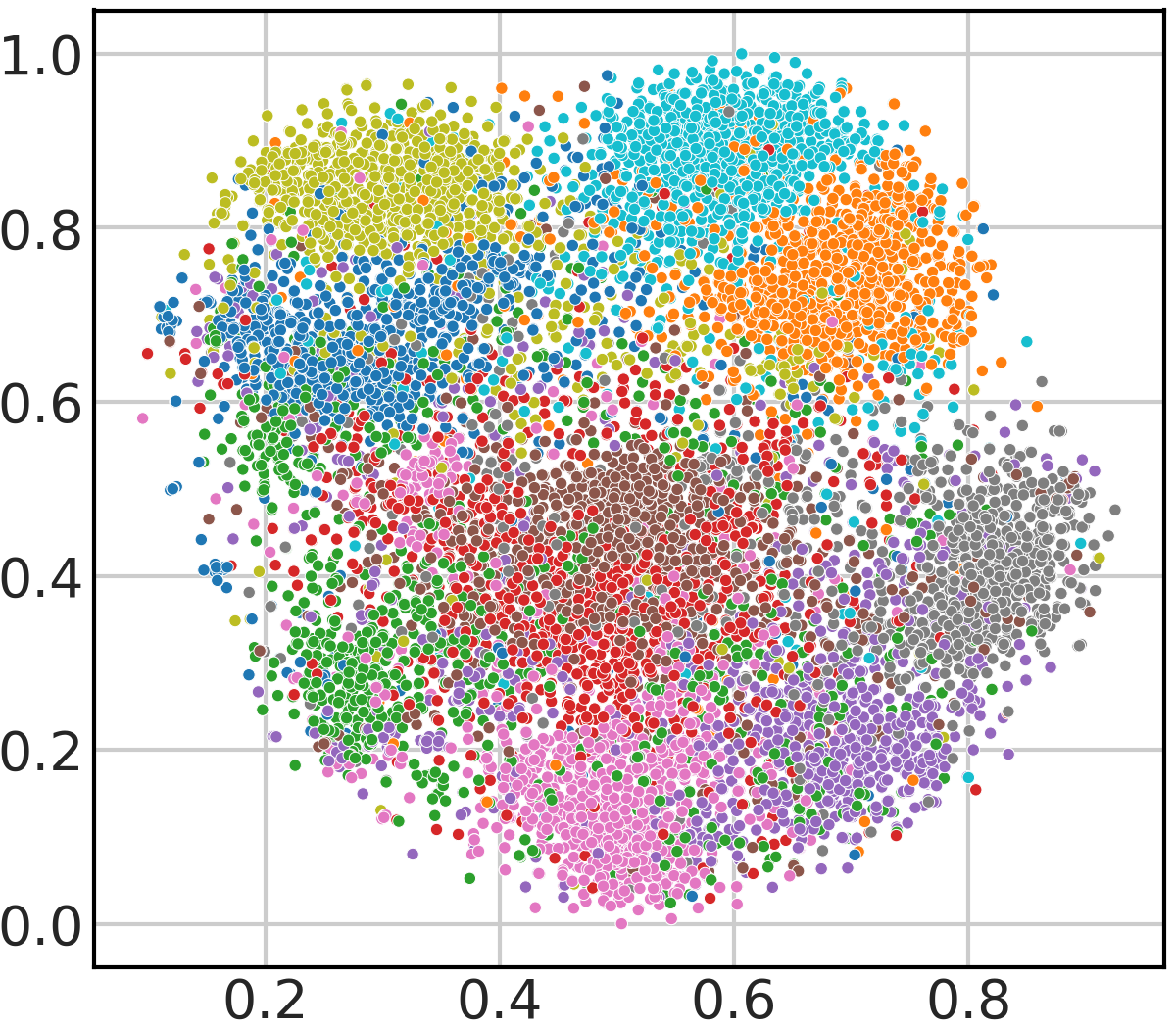}
    }
    \hspace{-3mm}
    \subfigure[$\CE_\eps$+MAE ($\eta=0.2$)]{
    \includegraphics[width=1.33in]{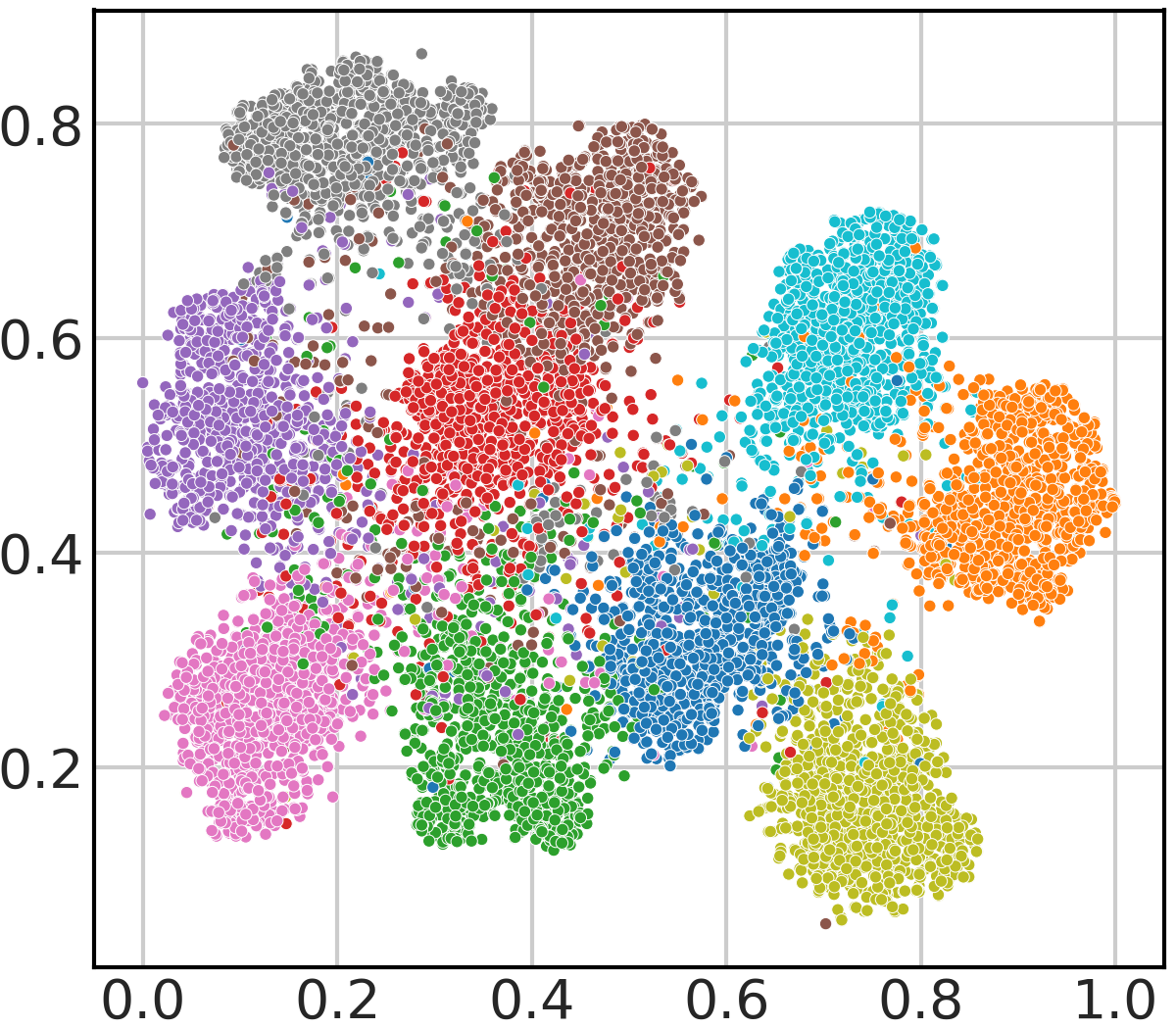}
    }
    \hspace{-3mm}
    \subfigure[$\CE_\eps$+MAE ($\eta=0.4$)]{
    \includegraphics[width=1.33in]{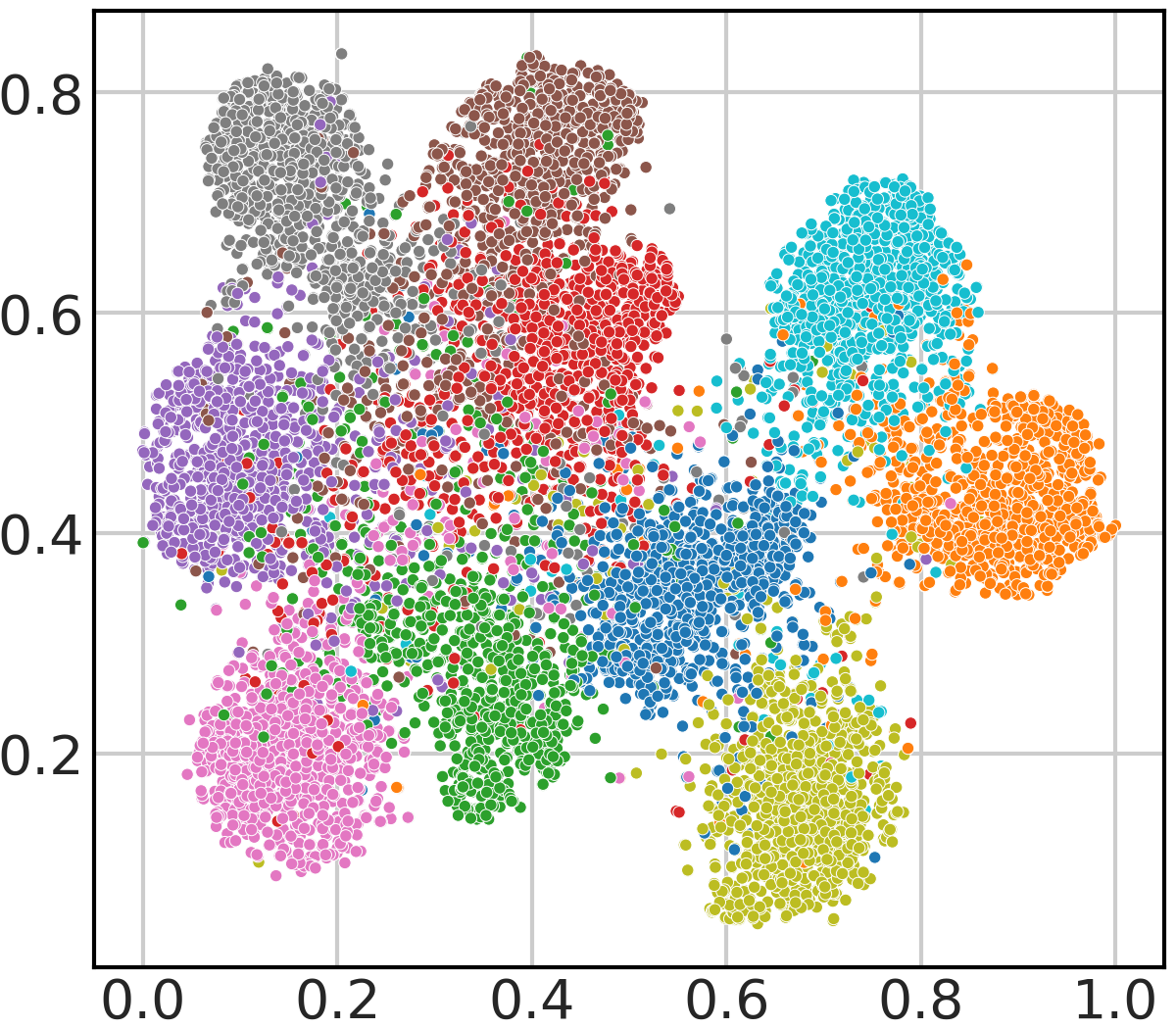}
    }
    \hspace{-3mm}
    \vskip-5pt
    \caption{Visualizations of learned representations on CIFAR-10 with symmetric label noise. The x-axis and y-axis  represent the first and second dimensions of the 2D embeddings, respectively. }
    \label{fig:tsne}
    \vskip-10pt
\end{figure*}
\noindent\textbf{Visualization.}\quad  We conduct a further analysis to compare the effectiveness of $\CE_\eps$+MAE and traditional CE in learning  representations. We train models with different label noise and use the trained models to extract feature representations of the test set by t-SNE \cite{tsne}.
The visualizations for CIFAR-10 symmetric noise are depicted in Figure~\ref{fig:tsne}.
% For this, we extracted high-dimensional features from the penultimate linear layer of DNNs and then applied t-SNE \cite{tsne} to transform these features into 2D embeddings for all test samples. 
% The visualizations of these embeddings for the CIFAR-10 dataset under various symmetric label noise rates, are depicted in Figure~\ref{fig:tsne}. 
Notably, the embeddings generated by CE  show evident overfitting to label noise, as seen in the blending of embeddings from distinct classes. In sharp contrast, embeddings from the $\CE_\eps$+MAE method consistently form clear, well-separated clusters, demonstrating its superior ability to learn robust and distinct representations under noisy label conditions. 
% This highlights the enhanced noise tolerance inherent in the $\CE_\eps$+MAE approach.

% \input{figure/tsne}
% \input{figure/tsne_all}
% Further comparative experiments about GCE, SCE and temperature-dependent softmax are shown in Experiments Section 4.1.

\subsection{Evaluation on Human-Annotated Datasets}

We further conduct comparison studies on human-annotated datasets CIFAR-10N/CIFAR-100N \cite{cifar-n}, following the experiment setting in \cite{cifar-n}. 

% More experimental  details and  baselines can be found in the Appendix \ref{sec:appendix-exp}. 

\begin{table*}[!t]
\vskip-15pt
\centering
\fontsize{8pt}{9.5pt}\selectfont
% \small
\caption{Best epoch test accuracies (\%) of different methods on CIFAR-N datasets. We compare methods without and with semi-supervised learning (SSL) and sample selection. The results "mean$\pm$std" are reported over 5 random runs and the best results are \textbf{boldfaced}. 
% SOP+ trained on Pre-act ResNet-18, while all other methods are reproduced on ResNet-34.
}
\label{tab:cifar-n}
\begin{tabular}{c|ccccc|c}
\toprule
\textbf{Method}                             & \multicolumn{5}{c|}{\textbf{CIFAR-10N}}                                                                      & \multicolumn{1}{l}{\textbf{CIFAR-100N}} \\
\textbf{Without SSL}                        & Aggregate           & Random 1            & Random 2            & Random 3            & Worst               & Noisy                                   \\
\midrule
CE                                          & 87.77\tiny±0.38          & 85.02\tiny±0.65          & 86.46\tiny±1.79          & 85.16\tiny±0.61          & 77.69\tiny±1.55          & 55.50\tiny±0.66                              \\
Forward T                                   & 88.24\tiny±0.22          & 86.88\tiny±0.50          & 86.14\tiny±0.24          & 87.04\tiny±0.35          & 79.79\tiny±0.46          & 57.01\tiny±1.03                              \\
GCE                                         & 87.85\tiny±0.70          & 87.61\tiny±0.28          & 87.70\tiny±0.56          & 87.58\tiny±0.29          & 80.66\tiny±0.35          & 56.73\tiny±0.30                              \\
T-Revision                                  & 88.52\tiny±0.17          & 88.33\tiny±0.32          & 87.71\tiny±1.02          & 87.79\tiny±0.67          & 80.48\tiny±1.20          & 51.55\tiny±0.31                              \\
Peer Loss                                   & 90.75\tiny±0.25          & 89.06\tiny±0.11          & 88.76\tiny±0.19          & 88.57\tiny±0.09          & 82.00\tiny±0.60          & 57.59\tiny±0.61                              \\
F-Div                                       & 91.64\tiny±0.34          & 89.70\tiny±0.40          & 89.79\tiny±0.12          & 89.55\tiny±0.49          & 82.53\tiny±0.52          & 57.10\tiny±0.65                              \\
Negative-LS                                 & \textbf{91.97\tiny±0.46} & 90.29\tiny±0.32 & 90.37\tiny±0.12          & 90.13\tiny±0.19          & 82.99\tiny±0.36          & 58.59\tiny±0.98                              \\
VolMinNet                                   & 89.70\tiny±0.21          & 88.30\tiny±0.12          & 88.27\tiny±0.09          & 88.19\tiny±0.41          & 80.53\tiny±0.20          & 57.80\tiny±0.31                              \\
AGCE                                        & 88.81\tiny±0.24          & 87.88\tiny±0.43          & 88.01\tiny±0.23          & 87.97\tiny±0.64          & 81.43\tiny±0.32          & N/A                                     \\
\midrule
\textbf{$\CE_\eps$+MAE}                            & 91.80\tiny±0.33          & \textbf{90.43\tiny±0.29}          & \textbf{90.53\tiny±0.28} & \textbf{90.64\tiny±0.35} & \textbf{83.74\tiny±0.43} & \textbf{61.78\tiny±0.14}                     \\
\midrule\midrule
\textbf{Method}                             & \multicolumn{5}{c|}{\textbf{CIFAR-10N}}                                                                      & \multicolumn{1}{l}{\textbf{CIFAR-100N}} \\
\textbf{With SSL}                           & Aggregate           & Random 1            & Random 2            & Random 3            & Worst               & Noisy                                   \\
\midrule
Co-teaching+                                & 90.61\tiny±0.22          & 89.70\tiny±0.27          & 89.47\tiny±0.18          & 89.54\tiny±0.22          & 83.26\tiny±0.17          & 57.88\tiny±0.24                              \\
JoCoR                                       & 91.44\tiny±0.05          & 90.30\tiny±0.20          & 90.21\tiny±0.19          & 90.11\tiny±0.21          & 83.37\tiny±0.30          & 59.97\tiny±0.24                              \\
ELR+                                        & 94.83\tiny±0.10          & 94.43\tiny±0.41          & 94.20\tiny±0.24          & 94.34\tiny±0.22          & 91.09\tiny±1.60          & 66.72\tiny±0.07                              \\
Divide-Mix                                  & 95.01\tiny±0.71          & 95.16\tiny±0.19          & 95.23\tiny±0.07          & 95.21\tiny±0.14          & 92.56\tiny±0.42          & 71.13\tiny±0.48                              \\
CORES*                                      & 95.25\tiny±0.09          & 94.45\tiny±0.14          & 94.88\tiny±0.31          & 94.74\tiny±0.03          & 91.66\tiny±0.09          & 55.72\tiny±0.42                              \\
CAL                                         & 91.97\tiny±0.32          & 90.93\tiny±0.31          & 90.75\tiny±0.30          & 90.74\tiny±0.24          & 85.36\tiny±0.16          & 61.73\tiny±0.42                              \\
PES (Semi)                                  & 94.66\tiny±0.18          & 95.06\tiny±0.15          & 95.19\tiny±0.23          & 95.22\tiny±0.13          & 92.68\tiny±0.22          & 70.36\tiny±0.33                              \\
SOP+                                        & 95.61\tiny±0.13          & 95.28\tiny±0.13          & 95.31\tiny±0.10          & 95.39\tiny±0.11          & 93.24\tiny±0.21          & 67.81\tiny±0.23                              \\
Proto-semi                                  & 95.03\tiny±0.14          & 95.48 \tiny± 0.17        & 95.48\tiny±0.21        & 95.67\tiny±0.10        & 92.97\tiny±0.33          & 67.73\tiny±0.67                              \\
\midrule
\textbf{$\CE_\eps$+MAE (Semi)} & \textbf{95.95\tiny±0.06} & \textbf{95.79\tiny±0.13} & \textbf{95.91\tiny±0.06} & \textbf{95.96\tiny±0.09} & \textbf{95.12\tiny±0.10} & \textbf{71.97\tiny±0.18}    \\
\bottomrule

\end{tabular}
\vskip-10pt
\end{table*}
% The results of the baselines are taken directly from \cite{cifar-n} and their original papers.

\textbf{Baselines.}\quad For a fair comparison, we divide the baselines into those without  and those with semi-supervised learning (SSL) and sample selection:

% \vspace{-2pt}
\hspace{0.3cm}\textit{-- Without SSL}: 
% \indent{-- Without SSL:} 
Standard loss CE, Forward T \cite{Forward_T}, GCE \cite{GCE}, 
T-Revision \cite{T_Revision}, 
Peer Loss \cite{peerloss}, F-Div \cite{f_div}, 
Negative-LS \cite{wei2022smooth}, VolMinNet \cite{VolMinNet}, and AGCE \cite{ALF_PAMI}.

% \vspace{-2pt}
\hspace{0.3cm}\textit{-- With SSL}: 
% {-- With SSL:} 
Co-teaching+ \cite{co-teach+}, JoCoR \cite{JoCoR}, ELR+ \cite{ELR}, DivideMix \cite{dividemix},  
CORES* \cite{cores}, 
CAL \cite{cal}, PES (Semi) \cite{PES}, SOP+ \cite{sop}, and Proto-semi \cite{Proto-semi}.

% \noindent\textbf{Experimental Setting.}\quad   The complete experimental settings can be found in the appendix.
% We follow the experimental settings in\cite{cifar-n}: Train a Resnet-34 using SGD for 100 epochs with initial learning rate 0.1, momentum 0.9, and weight decay 0.0005. Set the learning rate decay to 0.1 at 60 epochs. Standard data augmentation is applied to each data set. Compare the test accuracies of the best epoch. The complete parameter settings can be found in the Appendix. The results of the comparison methods are taken directly from \cite{cifar-n} and their original papers.

\noindent\textbf{Results.}\quad  Table~\ref{tab:cifar-n} reports the test accuracy results of each method on the human-annotated datasets. The results show that the proposed $\CE_\eps$+MAE and $\CE_\eps$+MAE (Semi) provide significant improvements in handling human-annotated label noise, especially at high noise rates.  Among the methods without SSL, $\CE_\eps$+MAE stands out on the CIFAR-100N "Noisy" case as the only method to exceed 61\% accuracy. Within the methods with SSL, $\CE_\eps$+MAE (Semi) shows a pronounced superiority in all scenarios, especially in the most difficult CIFAR-10N "Worst" case and CIFAR-100N "Noisy" case. In the CIFAR-10N "Worst" case, $\CE_\eps$+MAE (Semi) achieves an impressive accuracy rate of over 95\%, significantly outperforming competing methods.
These results underscore the effectiveness of the $\eps$-$\softmax$-enhanced loss function in counteracting label noise for human-annotated scenarios.

\subsection{Evaluation on the Real-World  Datasets}
We perform experiments on massively real-world noisy datasets, including WebVision \cite{webvision}, ILSVRC12 (ImageNet) \cite{imagenet} and Clothing1M \cite {xiao2015learning}, following the experiment setting in \cite{ALFs_ICML}.
% For the WebVision, We follow the "mini" setting as described in \cite{ALFs_ICML}, which involves using the first 50 concepts as the training dataset. For a more comprehensive evaluation, we evaluate the trained model on the same 50 concepts on the corresponding WebVision and ILSVRC12 validation sets. For the Clothing1M,  we use a ResNet-50 with ImageNet pretrained weights similar to \cite{xiao2015learning}, finetuning the ResNet-50 on the 1M noisy training dataset with 10 epochs, and evaluating on the 10k test dataset. 
% We follow the mini WebVision setting in \cite{ALF_PAMI}.
% \begin{table*}[!t]
% \centering
% \small
% \caption{Top-1 validatoin accuracies (\%) on WebVision and ImageNet 1k (ILSVRC12) validatoin sets. The best results are \textbf{boldfaced}.}
% \begin{tabular}{c|ccccccc}
% \toprule
% Method         & CE    & GCE   & SCE   & AGCE  & NCE+RCE & NCE+AGCE & \textbf{$\CE_\eps$+MAE} \\
% \midrule
% WebVision Val & 61.28 & 53.76 & 61.28 & 60.48 & 61.76   & 60.64    & \textbf{67.52}   \\
% \midrule
% ImageNet Val  & 56.92 & 55.28 & 59.24 & 60.64 & 62.48   & 61.96    & \textbf{65.40}   \\
% \bottomrule
% \end{tabular}
% % \vspace{-0.4cm}
% \vskip-5pt
% \label{tab:webvision}
% \end{table*}

\begin{table*}[!t]
% \vskip-3pt
\centering
\setlength{\tabcolsep}{5.5pt}
\fontsize{8pt}{9.5pt}\selectfont
% \small
\caption{Last epoch accuracies (\%) on the WebVision and ILSVRC12  validation sets and the Clothing1M test set. The best results are \textbf{boldfaced}.}
\begin{tabular}{cc|cccccccc}
\toprule
\multicolumn{2}{c|}{\textbf{Method}}            & CE    & GCE   & SCE   & AGCE  & NCE+RCE & NCE+AGCE                  & LDR-KL & \textbf{CE$_\eps$+MAE} \\
\midrule
                            & Top-1 & 66.08 & 61.96 & 67.92 & 69.48 & 66.88   & 66.00                     & 69.64  & \textbf{71.32}    \\
\multirow{-2}{*}{\textbf{WebVision}} & Top-5 & 84.76 & 76.80  & 86.36 & 87.28 & 86.48   & 85.20                      & 87.16  & \textbf{88.48}   \\
\midrule
                            & Top-1 & 60.72 & 60.52 & 63.28 & 65.12 & 63.96   & 62.68                     & 65.24   & \textbf{67.20}   \\
\multirow{-2}{*}{\textbf{ILSVRC12}}  & Top-5 & 84.76 & 76.56 & 85.16 & 86.12 & 84.68   & 84.96                     & 86.12   & \textbf{87.48}   \\
\midrule
\multicolumn{2}{c|}{\textbf{Clothing1M}}      & 67.38 & 69.03 & 67.40 & 68.43 & 68.67   & 67.52                     & 66.88  & \textbf{69.85}  \\
\bottomrule
\end{tabular}
\vskip-5pt
\label{tab:webvision}
\end{table*}
% the WebVision \cite{webvision} dataset for real-world noise. 

% For the WebVision, We follow the "mini" setting as described in \cite{ALFs_ICML}, which involves using the first 50 concepts from the Google resized image subset as the training dataset. For a more comprehensive evaluation, we evaluate the trained model on the same 50 concepts on the corresponding WebVision and ILSVRC12 (ImageNet 1k) validation sets. For the Clothing1M,  we use a ResNet-50 with ImageNet pretrained weights similar to \cite{xiao2015learning}, finetuning the ResNet-50 on the 1,000k noisy training dataset with 10 epochs, and evaluating on the 10k test dataset. The detailed experimental settings can be found in the Appendix \ref{sec:appendix-exp}.

\noindent\textbf{Results.}\quad  In Table~\ref{tab:webvision}, we showcase the  accuracies achieved on  WebVision, ILSVRC12 and  Clothing1M by various leading methods. Notably, our $\CE_\epsilon$+MAE method outshines others, achieving the highest results on all real-world datasets. It surpasses CE by approximately 5.5\% on WebVision and  6.5\% on ILSVRC12. For Clothing1M, we finetune a pretrained ResNet-50, so the differences between the methods are relatively small, but our method still achieves the best accuracy.
% In particular, while other methods have demonstrated improvements on the ImageNet validation set, they have not shown comparable progress on the WebVision validation set. In contrast, our $\CE_\epsilon$+MAE exhibits significant enhancements on both sets, clearly indicating its superior generalization capabilities. 
These results underline the robustness and efficacy of the $\epsilon$-$\softmax$-enhanced loss function in real-world scenarios.

% \paragraph{The limitation section can be found in the   Appendix~\ref{sec:appendix-limitations}.}
% % , giving a deeper insight into the impact of parameter choices on the effectiveness of the loss function. 

\section{Conclusion}

In this paper, we introduced $\epsilon$-$\softmax$, a simple yet effective and theoretically sound scheme for noise-tolerant learning. Our method is not only easy to implement but also can be seamlessly integrated with any softmax-based DNNs, requiring just two additional lines of code. Our rigorous and comprehensive theoretical analysis reveals that $\epsilon$-$\softmax$ effectively alleviates the common issue of overfitting to noisy labels. Furthermore, we propose to incorporate $\epsilon$-$\softmax$-enhanced loss functions with MAE, achieving better trade-off between effective learning and robustness. Extensive experimental results demonstrate the superior performance of our method in mitigating label noise.
\section*{Broader Impacts}
% \label{sec:appendix-Broader Impacts}
This work has the potential to advance the development of machine learning methods that can be deployed in contexts where it is costly to gather accurate annotations. This is an important issue in applications such as medicine, where machine learning has great potential societal impact. This work will not have negative social impacts.

\section*{Acknowledgements}
This work was supported by National Natural Science Foundation of China under Grants 92270116, 62071155 and 632B2031, and in part by the Fundamental Research Funds for the Central Universities (Grant No. HIT.DZJJ.2023075).

\bibliography{preference}
\bibliographystyle{unsrtnat}

\newpage
\appendix
\onecolumn
% \title{Appendix for "$\epsilon$-Softmax: Approximating One-Hot Vectors for Noise-Tolerant Learning"}
% \maketitle
\section{Limitation and Discussion}
\label{sec:appendix-limitations}
The limitation  of $\eps$-$\softmax$ is that it may slightly reduce fitting ability on clean case. Therefore, we propose to combine the $\eps$-$\softmax$-enhanced loss with the symmetric loss MAE. Consequently, our practical loss functions utilized for noise-tolerant learning
exhibit a hybrid form similar to GCE and SCE, but their meanings are significantly different.

\noindent\textbf{Comparing with GCE and SCE.}  GCE is a hybrid of CE and MAE var the negative Box-Cox transformation \cite{GCE}. SCE  combines CE with Reverse CE (RCE), where the RCE component actually acts as a scaled version of the MAE. This relationship is unveiled through the following derivation, adapted from Section 4.3 in SCE \cite{SCE}: $L_\text{RCE} =-\sum_{k=1}^K p(k \mid \mathbf{x}) \log q(k \mid \mathbf{x}) =-p(y \mid \mathbf{x}) \log 1-\sum_{k \neq y} p(k \mid \mathbf{x}) A=-A \sum_{k \neq y} p(k \mid \mathbf{x}) =-A(1-p(y \mid \mathbf{x}))=-\frac{A}{2}L_\text{MAE}$. Consequently, SCE essentially translates to CE+MAE. Hence, GCE and SCE increases the fitting ability but reduces the robustness because of the CE term.
Conversely, our $\CE_\epsilon$ is inherently robust. The combination of $\CE_\epsilon$ and MAE does not reduce the robustness, as demonstrated by Lemma \ref{le:linear}, and also improves the fitting ability.  
We perform further experiments cimparing with GCE and CE+MAE (SCE), the results can be seen in Table \ref{tab:ablation_all}. Our $\CE_\epsilon$+MAE obtains obviously the best results at all noise rates, significantly outperforming GCE and CE+MAE (SCE).

Meanwhile, we further compare our $\eps$-$\softmax$ with temperature-dependent softmax.

% ($\softmax(\frac{h(x)}{\tau})$), where $\tau$ is the temperature parameter, which is a useful technique for making outputs sparse \cite{SR}. 

\noindent\textbf{Comparing with Temperature-Dependent Softmax.} 
$\softmax(\frac{h(x)}{\tau})$, where $\tau$ is the temperature parameter, is a useful technique for making outputs sparse \cite{SR}. 
Compared to our $\epsilon$-$\softmax$, temperature-dependent softmax does not achieve a quantitative approximation to a one-hot vector for each output, and therefore cannot achieve a controllable excess risk bound. 
We also perform further experiments cimparing with temperature-dependent softmax. For simplicity, we refer to CE with temperature-dependent softmax as CE$_\tau$, the results can be seen in Table \ref{tab:ablation_all}. Our $\CE_\epsilon$+MAE obtains obviously the best results at all noise rates, significantly outperforming temperature-dependent softmax.

% Please add the following required packages to your document preamble:
% \usepackage{multirow}
\begin{table*}[!h]
\vskip-10pt
\fontsize{8pt}{9.5pt}\selectfont
\centering
\caption{
Last epoch test accuracies (\%) of 
ablation and comparetion experiments on CIFAR-100.  
The results "mean$\pm$std" are reported over 3 random runs. The best results are \textbf{boldfaced} and the best results of each method are \underline{underlined}. 
% * denotes the proposed $\CE_\eps$+MAE. 
If $m=0$, $\CE_\eps$+MAE equals CE+MAE. }
% * denotes searching the best hyperparameter for different noise rates.}
\label{tab:ablation_all}
\begin{tabular}{c|c|cc|c}
\toprule
\multirow{2}{*}{\textbf{CIFAR-100}} & \multirow{2}{*}{\textbf{Clean}} & \multicolumn{2}{c|}{\textbf{Symmetric}} & \textbf{Asymmetric} \\
                        &                        & 0.4           & 0.8           & 0.4        \\
                        \midrule
CE                      & 70.79\tiny±0.58             & 39.31\tiny±0.74    & 7.33\tiny±0.10     & 41.15\tiny±1.04 \\
MAE                     & 5.31\tiny±1.19              & 2.78\tiny±1.68     & 2.13\tiny±0.98     & 3.11\tiny±0.26  \\
GCE ($q=0.3$)                & 70.31\tiny±0.95             & 38.72\tiny±0.87    & 6.43\tiny±0.17     & 38.79\tiny±1.47 \\
GCE ($q=0.5$)                & \underline{70.57\tiny±0.25}             & 50.61\tiny±0.64    & 8.16\tiny±0.40     & 38.58\tiny±0.55 \\
GCE ($q=0.7$)                & 65.22\tiny±1.57             & \underline{56.60\tiny±1.61}    & 18.23\tiny±0.25    & \underline{40.82\tiny±0.85} \\
GCE ($q=0.9$)                & 18.27\tiny±2.43             & 17.61\tiny±2.25    & \underline{19.85\tiny±0.88}    & 13.96\tiny±1.69 \\
CE$_\tau$+MAE ($\tau=0.3$)            & 70.00\tiny±1.51             & 36.87\tiny±2.12               & \underline{14.61\tiny±0.47}    & 40.37\tiny±3.10 \\
CE$_\tau$+MAE ($\tau=0.5$)            & 69.57\tiny±0.46             & \underline{47.99\tiny±0.48}    & 13.62\tiny±0.24    & 45.53\tiny±1.19 \\
CE$_\tau$+MAE ($\tau=0.7$)            & \underline{70.11\tiny±0.71}             & 36.08\tiny±2.21    & 10.58\tiny±0.20    & \underline{46.92\tiny±0.45} \\
CE$_\tau$+MAE ($\tau=0.9$)            & 69.32\tiny±0.27             & 36.34\tiny±1.47                    & 11.19\tiny±0.04    & 42.27\tiny±0.92 \\
\midrule
CE$_\eps$+MAE ($m=0$)             & 69.33\tiny±0.51                 & 39.72\tiny±0.67    & 11.65\tiny±0.18    & 41.53\tiny±0.97 \\
CE$_\eps$+MAE ($m=1e2$)          & 70.55\tiny±0.47                  & 48.39\tiny±0.53    & 13.05\tiny±0.58    & \underline{\textbf{48.51\tiny±0.36}} \\
CE$_\eps$+MAE ($m=1e4$)          & \underline{70.83\tiny±0.18}      & \underline{\textbf{59.20\tiny±0.42}}    & \underline{\textbf{26.30\tiny±0.46}}    & 40.36\tiny±0.96 \\
CE$_\eps$+MAE ($m=1e5$)          & 67.72\tiny±0.88                  & 54.99\tiny±1.05    & 22.14\tiny±0.56    & 7.56\tiny±1.10  \\
\bottomrule
\end{tabular}
\vskip-15pt
\end{table*}

% \section{Broader Impacts}
% \label{sec:appendix-Broader Impacts}
% This work has the potential to advance the development of machine learning methods that can be deployed in contexts where it is costly to gather accurate annotations. This is an important issue in applications such as medicine, where machine learning has great potential societal impact. This work will not have negative social impacts.
\section{Proof of Theorems}
\label{sec:appenidx-proof}
% \subsection{PROOF OF THEOREMS}
\newtheorem{apDefinition}{Definition}
\newtheorem{apTheorem}{Theorem}
\newtheorem{apAssumption}{Assumption}
\newtheorem{apLemma}{Lemma}
\newtheorem{apCorollary}{Corollary}
\begin{apLemma}
% \label{Le:eps_relax}
$\eps$-$\softmax$ can achieve $\eps$-relaxation for one-hot vectors:
\begin{equation}
\min_{\rvu\in\gP_{\rve_1}}\|f(\rvx)-\rvu\|_2\le \eps = \tfrac{\sqrt{1 - 1/K}}{m+1},
\end{equation}
where $f(\rvx) = \eps\text{-}\softmax\circ h(\rvx)$.
\end{apLemma}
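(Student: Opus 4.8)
The plan is to track the output of $\eps$-$\softmax$ coordinate-by-coordinate and reduce the minimization over the permutation set to a single elementary inequality about the original softmax probabilities. Writing $\rvp = \softmax(h(\rvx))$ and $t = \arg\max_k p_k$, the three steps of the procedure give the explicit output $f_t = (p_t + m)/(m+1)$ and $f_k = p_k/(m+1)$ for $k \neq t$; note this is still a probability vector, since its coordinates sum to $(1+m)/(m+1) = 1$, and that $t$ remains its argmax because $p_t + m \ge p_k$ for all $k$.

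First I would identify the nearest element of $\gP_{\rve_1}$. For any one-hot vector $\rve_j$, expanding the squared norm gives $\|f(\rvx) - \rve_j\|_2^2 = \sum_k f_k^2 - 2 f_j + 1$, so minimizing over $j$ is the same as maximizing $f_j$, which occurs at $j = t$. Hence $\min_{\rvu\in\gP_{\rve_1}}\|f(\rvx) - \rvu\|_2 = \|f(\rvx) - \rve_t\|_2$. Next I would factor out the normalization: using $f_t - 1 = (p_t - 1)/(m+1)$ and $f_k = p_k/(m+1)$, a direct computation yields $\|f(\rvx) - \rve_t\|_2^2 = \tfrac{1}{(m+1)^2}\bigl[(p_t-1)^2 + \sum_{k\neq t} p_k^2\bigr] = \tfrac{1}{(m+1)^2}\|\rvp - \rve_t\|_2^2$, so the entire claim reduces to showing $\|\rvp - \rve_t\|_2 \le \sqrt{1 - 1/K}$ for the original softmax probabilities.

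The key step is this last bound, and I expect it to be the only part requiring a genuine argument. I would write $\|\rvp - \rve_t\|_2^2 = \sum_k p_k^2 - 2p_t + 1$ and exploit that $t$ is the argmax: since $p_k \le p_t$ for every $k$, multiplying by $p_k \ge 0$ and summing gives $\sum_k p_k^2 \le p_t \sum_k p_k = p_t$, whence $\|\rvp - \rve_t\|_2^2 \le 1 - p_t$. Because the maximum coordinate satisfies $p_t \ge \tfrac{1}{K}\sum_k p_k = 1/K$, I conclude $\|\rvp - \rve_t\|_2^2 \le 1 - 1/K$. Substituting into the factored expression delivers the stated bound $\eps = \sqrt{1 - 1/K}/(m+1)$, and since equality is attained (approached) by the uniform output, the constant is the best possible. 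The main obstacle is really just recognizing the two-line estimate $\sum_k p_k^2 \le p_t \le 1$, which sidesteps a messier Lagrangian or vertex-enumeration maximization of $\|\rvp - \rve_t\|_2^2$ over the simplex.
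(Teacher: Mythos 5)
Your proof is correct and follows essentially the same route as the paper's: both reduce the problem to the distance from $\rvp$ to $\rve_t$ scaled by $\tfrac{1}{m+1}$, and your key estimate $\sum_k p_k^2 \le p_t$ is algebraically identical to the paper's step of discarding the nonnegative term $\sum_k p_k(p_t - p_k)$ before invoking $p_t \ge 1/K$. The only difference is that you explicitly verify that the nearest element of $\gP_{\rve_1}$ is $\rve_t$, a point the paper asserts implicitly (and which is not strictly needed, since an upper bound via $\rve_t$ already suffices).
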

\begin{proof}
$$
\begin{aligned} 
\min_{u\in\mathcal{P}_{\rve_1}}\|f(\rvx)-\rvu\|_2&=\frac{\sqrt{1-2p_{t}+\sum_{k=1}^K p_k^2}}{m+1}\\
&= \frac{\sqrt{1-p_{t}-\sum_{k=1}^K p_k(p_t-p_k)}}{m+1}\\
&\le \frac{\sqrt{1-p_t}}{m+1}\le \frac{\sqrt{1-1/K}}{m+1}.
\end{aligned}
$$
\end{proof}

% \begin{apTheorem}[Excess Risk Bound under Symmetric Noise]
% % \label{th:symmetry}
% In a multi-class classification problem, if the loss function $L\in \gL$ satisfies  $|\sum_{k=1}^K(L(\rvu_1,k)-L(\rvu_2,k))|\le \delta$ when $\| \rvu_1-\rvu_2\|_2\le \eps$, and $\delta \rightarrow 0$ as $\epsilon\rightarrow 0$, then for symmetric label noise satisfying $\eta<1-\frac{1}{k}$, the excess risk bound for $f\in\gH_{\rvv,\epsilon}$ can be expressed as
% \begin{equation}
%      \gR_L(f^*_\eta)-\gR_L(f^*)\le 2c\delta, 
% \end{equation}
% where $c=\frac{\eta}{(1-\eta)K-1}$, $f^*_\eta$ and $f^*$ denote the global minimum of $\gR_L^\eta(f)$ and $\gR_L(f)$, respectively.
% \end{apTheorem}
% Theorem 3 in \cite{SR}, as proved in \cite{SR}.

\begin{apTheorem}[Excess Risk Bound under Asymmetric Noise]
% \label{th:asymmetry}
In a multi-class classification problem, if the loss function $L\in\gL$ satisfies $|\sum_{k=1}^K(L(\rvu_1,k)-L(\rvu_2,k))|\le \delta$ when $\| \rvu_1-\rvu_2\|_2\le \eps$, and $\delta \rightarrow 0$ as $\epsilon\rightarrow 0$,  then for asymmetric label noise $\eta_{\rvx,k}<\left(1-\eta_y \right), \forall k \neq y$, if $\gR_L(f^*)=0$, the excess risk bound for $f\in\gH_{\rvv,\eps}$ can be expressed as
\begin{equation}
     \mathcal{R}_L(f_\eta^*)\le 2\delta + \frac{2c\delta}{a},
\end{equation}
where $c = \mathbb{E}_\mathcal D\left(1-\eta_y\right)$, $a=\min_{\rvx,k}(1-\eta_y-\eta_{\rvx,k})$, $f^*_\eta$ and $f^*$ denote the global minimum of $\gR_L^\eta(f)$ and $\gR_L(f)$, respectively.
\end{apTheorem}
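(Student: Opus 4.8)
The plan is to bound the clean risk $\gR_L(f_\eta^*)$ by relating it to the noisy risk $\gR_L^\eta(f_\eta^*)$, exploiting the near-symmetry guaranteed by the $\eps$-relaxation. First I would decompose the noisy risk using its definition in Equation~\ref{noisy-L-risk}. The key structural fact is that for any $f\in\gH_{\rvv,\eps}$ the output lies within $\eps$ of a permutation of a one-hot vector, so by the hypothesis $|\sum_{k=1}^K(L(f(\rvx),k)-L(\rve_t,k))|\le\delta$ where $\rve_t$ is the nearest one-hot vector. This means $\sum_{k=1}^K L(f(\rvx),k)$ is within $\delta$ of a constant $C$ (the value $\sum_k L(\rve_t,k)$, which is permutation-invariant and hence the same for every one-hot target). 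Thus the hypothesis class is \emph{approximately symmetric} with slack $\delta$, and this is the engine that drives the whole argument.

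Next I would write $\gR_L^\eta(f) = \E_\gD[(1-\eta_\rvx)L(f(\rvx),y)] + \E_\gD[\sum_{k\ne y}\eta_{\rvx,k}L(f(\rvx),k)]$ and manipulate the noisy term using the approximate symmetry. The standard symmetric-loss trick rewrites the sum over $k\ne y$ by adding and subtracting the full sum $\sum_{k=1}^K$; here that manoeuvre introduces an error of order $\delta$ rather than being exact. The goal is to show that $\gR_L^\eta(f_\eta^*)$ and $\gR_L^\eta(f^*)$ differ from the corresponding clean risks plus a controlled $O(\delta)$ term, and then to chain the inequalities: since $f_\eta^*$ minimizes the noisy risk, $\gR_L^\eta(f_\eta^*)\le\gR_L^\eta(f^*)$, and since $\gR_L(f^*)=0$, the clean-risk contribution of $f^*$ vanishes, leaving only the $\delta$-slack and the contribution from the noise weights. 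The coefficients $c=\E_\gD(1-\eta_y)$ and $a=\min_{\rvx,k}(1-\eta_y-\eta_{\rvx,k})$ should emerge precisely from collecting these noise-weighted terms: $a>0$ is exactly the condition $\eta_{\rvx,k}<1-\eta_y$ that guarantees the clean label dominates, and dividing by $a$ converts a noisy-risk gap into a clean-risk gap.

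The crux is the passage from the noisy-risk inequality to the clean-risk bound, i.e.\ establishing something like $a\cdot\gR_L(f_\eta^*)\le (\text{noisy risk gap}) + (\text{symmetry slack})$. I would isolate the difference $\gR_L^\eta(f_\eta^*)-\gR_L^\eta(f^*)$, substitute the decomposition, and use $\gR_L(f^*)=0$ together with nonnegativity of $L$ to discard favourable terms. The factor $2\delta$ likely arises from applying the approximate-symmetry bound once to $f_\eta^*$ and once within the noisy-to-clean conversion, while the $2c\delta/a$ term comes from bounding the mismatch in the noise-weighted part and then rescaling by $1/a$.

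The main obstacle will be the bookkeeping in the near-symmetric rewriting: because the symmetry is only approximate (slack $\delta$ instead of an exact constant), every step that would be an equality in the classical symmetric-loss proof becomes a two-sided inequality, and I must track the accumulation of $\delta$-errors carefully to land exactly on the coefficients $2\delta+2c\delta/a$ rather than a looser bound. In particular, correctly handling the asymmetric noise weights $\eta_{\rvx,k}$ — which vary with both $\rvx$ and $k$ and do not factor out of the expectation as cleanly as in the symmetric-noise case — and showing that the worst case is governed by the single scalar $a=\min_{\rvx,k}(1-\eta_y-\eta_{\rvx,k})$ is where the argument requires the most care.
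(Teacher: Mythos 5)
Your proposal follows essentially the same route as the paper's proof: decompose the noisy risk, use the approximate symmetry $\bigl|\sum_{k=1}^K L(f(\rvx),k)-C\bigr|\le\delta$ to get two-sided bounds with slack $\delta$, invoke optimality of $f^*_\eta$ together with $\gR_L(f^*)=0$ to control the noise-weighted sum $\mathbb{E}_{\mathcal{D}}\sum_{k\ne y}(1-\eta_y-\eta_{\rvx,k})\bigl[L(f^*_\eta(\rvx),k)-L(f^*(\rvx),k)\bigr]$ by $2c\delta$, rescale by $a$, and apply the symmetry slack once more to both functions to land on $2\delta+2c\delta/a$. The plan is correct and matches the paper's argument step for step.
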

\begin{proof}
$$
\begin{aligned} 
R_{L}^\eta(f) & =\mathbb{E}_\mathcal D\left[\left(1-\eta_y\right) L(f(\rvx), y)\right]+\mathbb{E}_\gD\left[\sum_{k \neq y} \eta_{\rvx, k} L(f(\rvx), k)\right] \\
& \leq \mathbb{E}_\gD\left[(1-\eta_y)\left(C + \delta-\sum_{k \neq y} L(f(\rvx), k)\right)\right]+\mathbb{E}_\gD\left[\sum_{k \neq y} \eta_{\rvx, k} L(f(\rvx), k)\right] \\ 
& =(C + \delta) \mathbb{E}_\gD(1-\eta_y)-\mathbb{E}_\gD\left[\sum_{k \neq y}(1-\eta_y-\eta_{\rvx,k}) L(f(\rvx), k)\right]\\
\end{aligned}
$$
where $C = \sum_{k=1}^KL(\rvv,k)$, ditto\\
$$
R_{L}^\eta(f) \geq (C - \delta) \mathbb{E}_\gD (1-\eta_y)-\mathbb{E}_\gD \left[\sum_{k \neq y}(1-\eta_y-\eta_{\rvx, k}) L(f(\rvx), k)\right]
$$
hence,
$$
\begin{aligned}
\left(R_{L}^\eta\left(f^*\right)-R_{L}^\eta(f^*_\eta)\right) \leq & 2\delta \mathbb{E}_\gD(1-\eta_y)+ \\
& \mathbb{E}_\mathcal D \sum_{k \neq y}(1-\eta_y-\eta_{\rvx, k})\left[L(f^*_\eta(\rvx), k)-L\left(f^*(\rvx), k\right)\right]
\end{aligned}
$$
According to the assumption $R_L(f^*)=0$, we have $L(f^*(\rvx), y)=0$ then $L(f^*(\rvx), k) = \frac{C}{k-1}$ where $k \neq y$. Since $L(f^*_\eta(\rvx), k)-L(f^*(\rvx), k)\leq 0$ where $k \neq y$, the second term on the right of the inequality is a non-positive value. And $R_{L}^\eta\left(f^*\right)-R_{L}^\eta(f^*_\eta) \geq 0$. So we have
% $$
% \left(R_{L}^\eta\left(f^*\right)-R_{L}^\eta(f^*_\eta)\right) \leq  2\delta \mathbb{E}_\gD\left(1-\eta_y\right)
% $$
$$
\left|\mathbb{E}_{\mathcal{D}}\sum_{k\neq y} (1-\eta_y-\eta_{\rvx,k})\left(L(f_{\eta}^*(\rvx),k)-L(f^*(\rvx),k)\right)\right|\le 2c\delta,
$$
where $c = \mathbb{E}_\mathcal D\left(1-\eta_y\right)$. 

Let $a=\min_{\rvx,k}(1-\eta_y-\eta_{\rvx,k})$, 
% Since $L(f_{\eta}^*(x),k)\le L(f^*(x),k)=\frac{C}{k-1}$,
we have $ \left|\mathbb{E}_{\mathcal{D}}\sum_{k\neq y}\left(L(f_{\eta}^*(\rvx),k)-L(f^*(\rvx),k)\right)\right|\le \frac{2c\delta}{a}$. Note that $f_{\eta}^*,f^*\in \mathcal{H}_{\mathbf{v},\epsilon}$ means that $|\sum_{k}\left(L(f_{\eta}^*(\rvx),k)-L(f^*(\rvx),k)\right)|\le 2\delta$, then we obtain
$$
\left|\mathbb{E}_{\mathcal{D}}\left(L(f_{\eta}^*(\rvx),y)-L(f^*(\rvx),y)\right)\right|\le 2\delta + \frac{2c\delta}{a},
$$
that is, $\mathcal{R}_L(f_\eta^*)\le \mathcal{R}_L(f^*)+ 2\delta + \frac{2c\delta}{a}=2\delta + \frac{2c\delta}{a}$.
\end{proof}

\begin{apLemma}

% \label{le:all-k for one-hot labels}
For one-hot label $\rve_y$, $\text{CE}_\eps$ is All-$k$ calibrated and All-\(k\) consistency.
    
\end{apLemma}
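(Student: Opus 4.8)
The plan is to prove the two assertions in sequence, reducing each to one of the cited results. First, recall from the two facts quoted just above the statement that it suffices to establish \textbf{All-$k$ calibration}: by \citet{yang2020consistency} any nonnegative top-$k$ calibrated loss is top-$k$ consistent, and $\CE_\eps$ is manifestly nonnegative, since its prediction $f_y=[\eps\text{-}\softmax\circ h(\rvx)]_y\in(0,1]$ gives $L_{\CE_\eps}(f,y)=-\log f_y\ge 0$. Hence All-$k$ calibration together with nonnegativity yields All-$k$ consistency for free. In turn, by \citet{zhu2023label} it suffices to show that the pointwise conditional-risk minimizer $\rvf^\star=\argmin_{\rvf}\gR_{\CE_\eps}(\rvf)$ is rank preserving with respect to the label distribution $\rvq$. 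So the whole problem collapses to characterizing the minimizer of the conditional risk and checking its coordinate order.

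Second, I would compute this minimizer for a one-hot label, i.e.\ $\rvq=\rve_y$. Writing $\rvp=\softmax(\rvf)$ and $t=\argmax_k p_k$, the $\eps$-softmax output at coordinate $y$ equals $(p_y+m)/(m+1)$ when $t=y$ and $p_y/(m+1)$ when $t\neq y$, so the conditional risk reduces to the single term $\gR_{\CE_\eps}(\rvf)=-\log[\eps\text{-}\softmax(\rvf)]_y$. The key step is a case split on the position of the argmax: if $t\neq y$ then $[\eps\text{-}\softmax(\rvf)]_y=p_y/(m+1)<1/(m+1)$, whereas concentrating mass on $y$ (so that $t=y$) lets $[\eps\text{-}\softmax(\rvf)]_y=(p_y+m)/(m+1)$ approach $1$ as $p_y\to 1$. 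Thus the infimum of the risk equals $0$ and is approached precisely by driving $\rvp\to\rve_y$ (equivalently $h_y\to+\infty$), so the minimizing configuration places its unique largest coordinate at $y$.

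Third, I would verify rank preservation and conclude. For $\rvq=\rve_y$ the only strict inequalities among coordinates are $q_k<q_y$ for every $k\neq y$, and the minimizer $\rvp^\star=\rve_y$ satisfies $p_k^\star<p_y^\star$ for all such $k$; the coordinates $k\neq y$ carry equal label mass $q_k=0$ and therefore impose no ordering constraint. Hence $\rvf^\star$ is rank preserving with respect to $\rvq$, invoking \citet{zhu2023label} gives All-$k$ calibration, and combining with nonnegativity and \citet{yang2020consistency} gives All-$k$ consistency. I would also note in passing that this is exactly the regime of the subsequent Theorem with margin $q_y-\max_{k\neq y}q_k=1>\tfrac{m}{m+1}$, so the one-hot case satisfies that theorem's gap condition automatically.

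Finally, the part I expect to require the most care is the non-smoothness introduced by the $\argmax$ in Step~2 of $\eps$-$\softmax$: the conditional risk is only piecewise smooth, glued across the regions $\{t=j\}$, and its infimum is attained only in the limit of unbounded logits rather than at an interior point. The argument must therefore (i) rule out the ``wrong'' regions $t\neq y$ via the uniform bound $[\eps\text{-}\softmax]_y<1/(m+1)$ established above, and (ii) phrase rank preservation in terms of the limiting minimizer (or a sequence of near-minimizers), so that the hypotheses of \citet{zhu2023label} are met despite the minimizer sitting on the boundary of the simplex.
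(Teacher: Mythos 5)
Your proposal is correct, and its skeleton matches the paper's proof exactly: both reduce All-$k$ consistency to All-$k$ calibration via nonnegativity and \citet{yang2020consistency}, reduce calibration to rank preservation of the conditional-risk minimizer via \citet{zhu2023label}, and split into the cases $t=y$ and $t\neq y$. Where you diverge is in how the minimizer is identified. The paper differentiates $L_{\CE_\eps}$ through the softmax, applies the first-order optimality condition $\frac{\partial L_{\CE_\eps}}{\partial h(y|\rvx)}=0$ in each region, and reads off $p_y=1$; you instead compare achievable risk values across regions directly, bounding $[\eps\text{-}\softmax]_y<\frac{1}{m+1}$ when $t\neq y$ and letting $[\eps\text{-}\softmax]_y\to 1$ when $t=y$, so that the infimum $0$ is approached only with $\rvp\to\rve_y$. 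Your route buys two things the paper glosses over: (i) in the region $t\neq y$ the paper's stationarity condition yields $p_y=1$, which is infeasible there (it contradicts $p_t>p_y$), so strictly one must argue, as you do, that no minimizer lives in that region rather than that a stationary point there has $p_y=1$; and (ii) the minimizer $\rvp=\rve_y$ sits on the boundary of the simplex and is attained only as $h_y\to+\infty$, so rank preservation must be phrased for the limiting minimizer or near-minimizing sequences — a point you flag explicitly and the paper ignores. One small caveat: your exclusion bound $\log(m+1)>0$ for the wrong region degenerates at $m=0$; there you need the sharper bound $p_y<p_t$ hence $p_y<\tfrac12$, giving risk $>\log 2$, which closes that edge case.
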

\begin{proof}

Here $f = \eps$-$\softmax \circ h$, $\rvp(\cdot|\rvx)=\softmax(h(\rvx))$ denotes the probabilities by standard softmax, $p_k \in (0, 1]$ and $t=\arg\max_{k\in[K]}p_k$ is the class with the largest value in prediction probabilities.    

if $t = y$:
$$
\begin{aligned}
\frac{\partial L_{\CE_\eps}(f(\rvx),y)}{\partial h(y|\rvx)} &= \frac{\partial -\log \frac{p_y+m}{m+1}}{\partial {p_y}} \cdot \frac{\partial p_y}{\partial h(y|\rvx)} = -\frac{1}{m+1} \cdot \frac{m+1}{p_y+m} \cdot \frac{\partial p_y}{\partial h(y|\rvx)} \\ 
&= - \frac{1}{p_y+m} \cdot \frac{\partial p_y}{\partial h(y|\rvx)} = - \frac{p_y}{p_y+m}  (1-p_y).
\end{aligned}
$$
By the first-order optimality condition $\frac{\partial L_{\CE_\eps}(f(\rvx),y)}{\partial h(\rvx)} = 0$, we have:    $p_y = 1$. Hence, for any $k \neq y$, we have $e_k = 0 < e_y$ and  $p_k < p_y$.

if $t \neq y$:
$$
\begin{aligned}
\frac{\partial L_{\CE_\eps}(f(\rvx),y)}{\partial h(y|\rvx)} &= \frac{\partial -\log \frac{p_y}{m+1}}{\partial {p_y}} \cdot \frac{\partial p_y}{\partial h(y|\rvx)}= -\frac{1}{m+1} \cdot \frac{m+1}{p_y} \cdot \frac{\partial p_y}{\partial h(y|\rvx)} \\
&= - \frac{1}{p_y} \cdot \frac{\partial p_y}{\partial h(y|\rvx)} = - (1-p_y).
\end{aligned}
$$
By the first-order optimality condition $\frac{\partial L_{\CE_\eps}(f(\rvx),y)}{\partial h(\rvx)} = 0$, we have:    $p_y = 1$.  Hence, for any $k \neq y$, we have $e_k = 0 < e_y$ and  $p_k < p_y$.

Hence, $\text{CE}_\eps$ is All-$k$ calibrated. Since $\text{CE}_\eps$ is nonnegative, so $\text{CE}_\eps$ is All-$k$ consistency.
\end{proof}

\begin{apTheorem}
For any label $\rvq \in \Delta_{K}$, let $y = \arg\max_{k\in[K]} q_k$ and $t=\arg\max_{k\in[K]}p_k$ , if $t = y$ and $q_y - \max_{k \neq y} q_k > \frac{m}{m+1}$, $\text{CE}_\eps$ is All-$k$ calibrated and All-\(k\) consistency.
\end{apTheorem}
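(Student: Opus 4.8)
The plan is to reduce All-$k$ calibration to a pointwise analysis of the conditional Bayes risk and then invoke the two cited sufficient conditions exactly as in the proof of Lemma~\ref{le:all-k for one-hot labels}. Fix an input $\rvx$ whose conditional label distribution is $\rvq\in\Delta_{K}$, and write $f=\eps$-$\softmax(\rvp)$ with $\rvp=\softmax(h(\rvx))$, so that $f_t=\tfrac{p_t+m}{m+1}$ at the max index $t=\argmax_k p_k$ and $f_k=\tfrac{p_k}{m+1}$ for $k\neq t$. The pointwise conditional risk is $\sum_k q_k L_{\CE_\eps}(f,k)=-\sum_k q_k\log f_k$. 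Under the hypothesis $t=y=\argmax_k q_k$, the coordinate receiving the additive mass $m$ is precisely the $\argmax$ of $\rvq$, and since $p_t$ is the largest entry of $\rvp$ one also has $\argmax_k f_k=y$.

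Next I would locate the minimizer of this conditional risk. In the region where $y$ is the $\argmax$ of $\rvp$ (where the $\eps$-$\softmax$ formula is smooth), the objective $-q_y\log(p_y+m)-\sum_{k\neq y}q_k\log p_k$ is a sum of convex functions of $\rvp$ over the simplex, so a Lagrange-multiplier stationarity computation with the constraint $\sum_k p_k=1$ pins down the unique candidate $p_y^*=(m+1)q_y-m$ and $p_k^*=(m+1)q_k$ for $k\neq y$. Substituting back into the $\eps$-$\softmax$ map gives exactly $f^*=\rvq$. Equivalently, by Gibbs' inequality the map $f\mapsto-\sum_k q_k\log f_k$ over $f\in\Delta_{K}$ is minimized uniquely at $f=\rvq$, and since all $\eps$-$\softmax$ outputs lie in $\Delta_{K}$ it suffices to show $\rvq$ is attainable as such an output.

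The crux is this feasibility check, and it is where both hypotheses are consumed. I would verify that $\rvp^*$ sums to one (automatic) and, more importantly, that $y$ is genuinely the $\argmax$ of $\rvp^*$: the requirement $p_y^*\ge p_k^*$ for all $k\neq y$ is equivalent to $(m+1)(q_y-q_k)\ge m$, whose tightest instance is $q_y-\max_{k\neq y}q_k>\tfrac{m}{m+1}$---precisely the stated condition, which in turn forces $p_y^*=(m+1)q_y-m>0$ so that $\rvp^*$ is a legitimate probability vector. Hence $f^*=\rvq$ is realized inside the image of $\eps$-$\softmax$ and is the global minimizer of the pointwise risk (degenerate coordinates with $q_k=0$ being handled by the usual limiting argument, since $\softmax$ approaches but never attains a zero entry). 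I expect this achievability step to be the main obstacle, because the non-smooth $\max$ in $\eps$-$\softmax$ genuinely restricts the attainable outputs, and one must confirm that the stationary point lies in the correct $\argmax$ region rather than on its boundary.

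Finally, since $f^*=\rvq$ it is trivially rank preserving with respect to $\rvq$: for any pair $q_i<q_j$ we have $f_i^*=q_i<q_j=f_j^*$. Invoking the result of \citet{zhu2023label}, that a rank-preserving Bayes minimizer implies All-$k$ calibration, yields that $\CE_\eps$ is All-$k$ calibrated; and because $\CE_\eps$ is nonnegative, the result of \citet{yang2020consistency} upgrades All-$k$ calibration to All-$k$ consistency, completing the argument.
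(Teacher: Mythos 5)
Your proposal is correct and takes essentially the same route as the paper's proof: both locate the same pointwise conditional-risk minimizer, $p_y^* = (m+1)q_y - m$ and $p_k^* = (m+1)q_k$ for $k \neq y$ (equivalently $f^* = \rvq$), spend the hypothesis $q_y - \max_{k\neq y} q_k > \frac{m}{m+1}$ exactly where you do---to certify that this point is feasible, i.e.\ lies strictly inside the region where $y$ is the argmax of $\rvp$ and hence is rank preserving---and then invoke the same two cited results (\citet{zhu2023label} for calibration, \citet{yang2020consistency} for consistency). If anything, your Gibbs'-inequality certificate ($f^*=\rvq$ minimizes $-\sum_k q_k \log f_k$ over all of $\Delta_{K}$, which contains the image of $\eps$-$\softmax$) is cleaner than the paper's computation, which only checks first-order stationarity with respect to the logits inside the $t=y$ region and never verifies global optimality across argmax regions.
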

\begin{proof}
For $\frac{\partial L_{\CE_\eps}(f(\rvx), \rvq)}{\partial h(y|\rvx)}$, we have:
$$
\begin{aligned}
\frac{\partial L_{\CE_\eps}(f(\rvx),\rvq)}{\partial h(y|\rvx)} &= -q_t \frac{m+1}{p_t+m} \cdot \frac{1}{m+1} \cdot \frac{\partial p_t}{\partial h(t|\rvx)} - \sum_{k \neq t}q_k \frac{1}{p_k} \cdot\frac{\partial p_k}{\partial h(t|\rvx)} \\
&= -q_t \frac{1}{p_t + m}p_t(1-p_t) - \sum_{k \neq t} q_k \frac{1}{p_k} (-p_kp_t).
\end{aligned}
$$

By the first-order optimality condition $\frac{\partial L_{\CE_\eps}(f(\rvx), \rvq)}{\partial h(y|\rvx)} = 0$, we have:
$$
\begin{aligned}
    &q_t\frac{1}{p_t+m}p_t(1-p_t) = \sum_{k \neq t} q_k p_t \\
    \Rightarrow \quad &q_t \frac{1}{p_t +m}(1-p_t) = 1 - q_t \\
    % \Rightarrow \quad &q_t + q_tm = p_t + m \\
    \Rightarrow \quad &p_t = q_t(1+m) - m
\end{aligned}
$$
Since, $\frac{m}{m+1}<q_t\le1$, we can get $0<p_t\le1$.

For $\frac{\partial L_{\CE_\eps}(f(\rvx), \rvq)}{\partial h(j \neq y|\rvx)}$, we have:

$$
\begin{aligned}
    \frac{\partial L_{\CE_\eps}(f(\rvx), \rvq)}{\partial h(j \neq y|\rvx)} &= - q_t \frac{1}{p_t+m}\cdot \frac{\partial p_t}{\partial h(j|\rvx)} - \sum_{k \neq t, j}q_k\frac{1}{p_k}\cdot\frac{\partial p_k}{\partial h(j|\rvx)} - q_j \frac{1}{p_j}\cdot\frac{\partial p_j}{\partial h(j|\rvx)}\\
    &= -q_t\frac{1}{p_t+m}(-p_jp_t) - \sum_{k \neq t, j}q_k \frac{q}{p_k} (-p_jp_k) + q_j(p_j-1)
\end{aligned}
$$
By the first-order optimality condition $\frac{\partial L_{\CE_\eps}(f(\rvx), \rvq)}{\partial h(j \neq y|\rvx)}=0$, we have:
$$
\begin{aligned}
   &q_t\frac{p_jp_t}{p_t+m} + \sum_{k\neq t, j}q_kp_j + q_jp_j = q_j\\
   \Rightarrow \quad&p_j = \frac{q_j}{\frac{q_tp_t}{p_t+m} + \sum_{k \neq t, j}q_k + q_j} = \frac{q_j}{\frac{q_tp_t}{p_t+m} + 1 - q_t}
\end{aligned}
$$
Substituting $p_t = q_t(1+m) - m$, we can get $p_j = q_j(m+1)$. Since $q_t > \frac{m}{m+1}$, so $q_j < \frac{1}{m+1}$ and $0 < p_j < 1$. 

For $i, j \neq t$, if $q_i < q_j$, we have $p_i < p_j$. Consider $q_{k \neq t}$ and $q_t$, because of the condition $q_y - q_{k \neq y > \frac{m}{m+1}}$,  we have $q_k < q_t$, $q_t - q_k = q_t(1+m) -m - q_k(m+1)>0$. 

Hence, $\text{CE}_\eps$ is All-$k$ calibrated. Since $\text{CE}_\eps$ is nonnegative, so $\text{CE}_\eps$ is All-$k$ consistency.
\end{proof}

\textbf{The  gradient of  $\CE_\eps$.}
\begin{equation}
\frac{\partial L_{\CE_\eps}(f(\rvx),y)}{\partial h(\rvx)}=
\begin{cases}
 -\frac{1}{p_y+m} \cdot \frac{\partial p_y}{\partial h(\rvx)}, & t = y \\ 
 -\frac{1}{p_y} \cdot \frac{\partial p_y}{\partial h(\rvx)},  & t \neq y
\end{cases},
\end{equation}
where $f = \eps$-$\softmax \circ h$, $\rvp(\rvx)=\softmax(h(\rvx))$ denotes the probabilities by standard softmax, and $t=\arg\max_{k\in[K]}p_k$ is the class with the largest value in prediction probabilities. 
\begin{proof} The proof is similar to Theorem 2.

if $t = y$:
$$
\begin{aligned}
\frac{\partial L_{\CE_\eps}(f(\rvx),y)}{\partial h(\rvx)} &= \frac{\partial -\log \frac{p_y+m}{m+1}}{\partial p_y} \cdot \frac{\partial p_y}{\partial h(\rvx)} = -\frac{1}{m+1} \cdot \frac{m+1}{p_y+m} \cdot \frac{\partial p_y}{\partial h(\rvx)} \\ 
&= - \frac{1}{p_y+m} \cdot \frac{\partial p_y}{\partial h(\rvx)}. 
\end{aligned}
$$

if $t \neq y$:
$$
\begin{aligned}
\frac{\partial L_{\CE_\eps}(f(\rvx),y)}{\partial h(\rvx)} &= \frac{\partial -\log \frac{p_y}{m+1}}{\partial {p_y}} \cdot \frac{\partial p_y}{\partial h(\rvx)}= -\frac{1}{m+1} \cdot \frac{m+1}{p_y} \cdot \frac{\partial p_y}{\partial h(\rvx)} \\
&= - \frac{1}{p_y} \cdot \frac{\partial p_y}{\partial h(\rvx)}.
\end{aligned}
$$
\end{proof}

\begin{apLemma}
For any loss function $L_{\eps}$ with $\eps$-$\softmax$ and symmetric loss function $L_{\text{symmetric}}$ defined in Equation \ref{eq:symmetric_condition}, the excess risk bound of $\alpha\cdot L_{\eps}+\beta\cdot L_{\text{symmetric}}$ is equivalent to that of $\alpha\cdot L_{\eps}$.
\end{apLemma}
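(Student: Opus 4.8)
The plan is to reduce the claim to Theorem~\ref{th:asymmetry} by showing that a symmetric loss contributes only a $\rvu$-independent constant to $\sum_{k=1}^K L(\rvu,k)$, which therefore leaves unchanged the single quantity $\delta$ that governs the excess risk bound. Write $L=\alpha\cdot L_\eps+\beta\cdot L_{\text{symmetric}}$. First I would record, for an arbitrary output $\rvu$,
\[
\sum_{k=1}^K L(\rvu,k)=\alpha\sum_{k=1}^K L_\eps(\rvu,k)+\beta\sum_{k=1}^K L_{\text{symmetric}}(\rvu,k)=\alpha\sum_{k=1}^K L_\eps(\rvu,k)+\beta C,
\]
where the last equality is exactly the symmetry condition of Equation~\ref{eq:symmetric_condition}, namely $\sum_{k=1}^K L_{\text{symmetric}}(\rvu,k)=C$ for every $\rvu$. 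Note in passing that $L\in\gL$, since a linear combination of losses of the form $\sum_k\ell(u_k,v_k)$ is again of that form, so Theorem~\ref{th:asymmetry} is applicable to $L$.

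The crucial point is that $\beta C$ is independent of $\rvu$ and hence cancels in every difference. Consequently, for any $\rvu_1,\rvu_2$ with $\|\rvu_1-\rvu_2\|_2\le\eps$,
\[
\Big|\sum_{k=1}^K\big(L(\rvu_1,k)-L(\rvu_2,k)\big)\Big|=\alpha\,\Big|\sum_{k=1}^K\big(L_\eps(\rvu_1,k)-L_\eps(\rvu_2,k)\big)\Big|.
\]
Thus $L$ satisfies the swing hypothesis of Theorem~\ref{th:asymmetry} with precisely the same $\delta$ as $\alpha L_\eps$ does, and $\delta\to0$ as $\eps\to0$ for one precisely when it does for the other. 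Since the constants $c=\E_\gD(1-\eta_y)$ and $a=\min_{\rvx,k}(1-\eta_y-\eta_{\rvx,k})$ appearing in the bound $\mathcal R_L(f_\eta^*)\le 2\delta+\tfrac{2c\delta}{a}$ depend only on the noise model and not on the loss, the two invocations of Theorem~\ref{th:asymmetry}---one for $L$ and one for $\alpha L_\eps$---return term-for-term identical bounds, which is the asserted equivalence.

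It remains to confirm that the standing hypothesis $\mathcal R_L(f^*)=0$ transfers. This holds because the clean minimizer that drives $\alpha L_\eps$ to zero outputs one-hot vectors agreeing with the clean labels, on which a nonnegative symmetric loss such as MAE also vanishes, so $\mathcal R_{L_{\text{symmetric}}}(f^*)=0$ and hence $\mathcal R_L(f^*)=0$. I anticipate no real obstacle here: the entire argument is the cancellation of the uniform constant $\beta C$, and the only point deserving care is to state explicitly that ``equivalent excess risk bound'' means the two applications of Theorem~\ref{th:asymmetry} share the same $\delta$, $c$, and $a$, and therefore yield identical values.
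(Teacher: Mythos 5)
Your proposal is correct and follows essentially the same route as the paper's own proof: both arguments rest on the observation that the symmetry condition makes $\sum_{k=1}^K L_{\text{symmetric}}(\cdot,k)$ a constant $\beta C$ that cancels in every difference, so the combined loss satisfies the hypothesis of Theorem~\ref{th:asymmetry} with the same ($\alpha$-scaled) $\delta$, and hence inherits the identical excess risk bound. If anything, your write-up is slightly more complete than the paper's, since you also verify closure of $\gL$ under linear combination and the transfer of the condition $\gR_L(f^*)=0$, neither of which the paper's proof addresses explicitly.
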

\begin{proof}
For $\rvu_1, \rvu_2 \in \gH_{\rvv, \eps}$ and $\rvu_3, \rvu_4 \in \Delta_{K}$ , we have
$$    
\begin{aligned}
    &|\sum_{k=1}^K \left(\alpha\cdot L_{\eps}(\rvu_1,k) + \beta \cdot L_\text{symmetric}(\rvu_3,k)\right) - \sum_{k=1}^K \left(\alpha\cdot L_{\eps}(\rvu_2,k) + \beta \cdot L_\text{symmetric}(\rvu_4,k)\right)|\\
    = &|\sum_{k=1}^K \alpha\cdot L_{\eps}(\rvu_1, k) - \sum_{k=1}^K \alpha\cdot L_{\eps}(\rvu_2,k) + 0|\\
    = &\alpha\cdot|\sum_{k=1}^K\cdot L_{\eps}(\rvu_1, k) - \sum_{k=1}^K\cdot L_{\eps}(\rvu_2,k)|\\
    \leq &\alpha\cdot\delta
\end{aligned}
$$
    
\end{proof}

% \section{Experimental Results}

% \input{table/synthesize_all}

\section{The Algorithm of $\CE_\eps$+MAE (Semi)}
\label{sec:appendix-semi}
\begin{center}
\begin{minipage}{0.8\linewidth}
\begin{algorithm}[H]
\caption{$\CE_\eps$+MAE (Semi)}
\label{alg:semi}
\begin{algorithmic}[1]
    \STATE   \textbf{Input: }{The noisy labeled dataset $\tilde \gS=\{(\rvx_n,\tilde y_n), n = 1, \cdots, N \}$, initialized model $f$, loss function $L_{\CE_\eps+\text{MAE}}$, total epochs $T_\text{all}$, robust learning epochs $T_\text{robust}$ and trade-off parameter $\lambda$}
    \STATE  \textbf{for} epoch $=1$ to $T_\text{robust}$ do:
    \STATE   \qquad Train $f$ on $\tilde \gS$ var $L_{\CE_\eps+\text{MAE}}$
    \STATE  \textbf{end for}
    \STATE  \textbf{for} epoch $=T_\text{robust}$ to $T_\text{all}$ do:
    % \STATE 6: \qquad \textbf{for} each minibatch  $\gB$ do:
    \STATE  \qquad  Sample selection: Divide the dataset $\gS$ into labeled (clean) dataset $\gS_l = \{(\rvx_n,y_n), n = 1, \cdots, |\gS_l| \}$ and unlabeled (noisy) dataset $\gS_u = \{(\rvx_n), n = 1, \cdots, |\gS_u| \}$\quad \
    \STATE  \qquad \textbf{for} each minibatch  $\gD_l \in \gS_l$ and $\gD_u \in \gS_u$ do:
    \STATE  \qquad\qquad $q_n$ = argmax$(f(x_n))$,\quad $x_n \in \gD_u$ \quad \#  Pseudo-label prediction
    \STATE \qquad\qquad  $\hat \gD_u \{(\hat x_n, q_n)\}=$ Augment $(\gD_u \{(x_n, q_n)\})$
    \STATE  \qquad\qquad $\gW =$ shuffle(concat($\gD_l, \hat \gD_u$))
    \STATE  \qquad\qquad $\gD_l^{'} = $ MixUp($\gD_l, \gW_n$)\quad $n=1, \cdots, |\gD_l|$
    \STATE  \qquad\qquad $\gD_u^{'} = $ MixUp($\gD_u, \gW_{n+|\gD_l|}$)\quad $n=1, \cdots, |\gD_u|$
    \STATE  \qquad\qquad $\text{Loss}_l = L_{\CE_\eps+\text{MAE}}(f, \gD_l^{'})$
    \STATE  \qquad\qquad $\text{Loss}_u = L_{\CE_\eps+\text{MAE}}(f, \gD_u^{'})$
    \STATE  \qquad\qquad $\text{Loss} = \text{Loss}_l + \lambda \cdot \text{Loss}_u$
    \STATE  \qquad\qquad Train $f$ on Loss 
    \STATE  \qquad\textbf{end for}
    \STATE  \textbf{end for}
    \STATE  \textbf{return $f$}
    
\end{algorithmic}
\end{algorithm}
\end{minipage} 
\end{center}

\paragraph{Algorithm Details and Parameters.} Reference to \cite{PES}, we set $T_\text{robust} = 65$, $T_\text{robust} = 300$ and learning rate decay 0.1 at [60, 160, 260] epochs. Other experimental settings are the same as the CIFAR-N experiment \cite{cifar-n} in the Appendix \ref{sec:appendix-exp}.

For sample selection: We simply select $k$ samples from each class with the least loss as clean samples. For CIFAR-10N, we set $k = 2500$ for “Worst” case and 3500 for others. For CIFAR-100N, we set $k = 250$ for “Noisy” case and 350 for others. In practice, if $k > |\text{sample\_num}|$, we set $k = |\text{sample\_num}|-20$. 

For pseudo-label prediction: In the actual training, we do the pseudo-label prediction using two standard augment versions from the sample. We add the probabilities and divide by 2 to make the pseudo-label prediction. At the same time, we set the threshold $\sigma = 0.2$ and discard the samples whose prediction probability is less than the threshold. 

For the Augment to $\gD_u$, we employ RandAugment \cite{randaugment}. We set the trade-off parameter $\lambda$ to grow linearly from 0 to 1 over 200 epochs. The MixUp parameter $\alpha$ is set to 0.75 for epochs less than 100, and adjusted to 4 for epochs greater than 100. $\CE_\eps$+MAE $m=1e4, \alpha=0.5, \beta=1$ is the same as the CIFAR-N experiment for the robust learning stage and $m=10, \alpha=1, \beta=1$ for the semi-supervised learning stage. In $\CE_\eps$+MAE (Semi), we ensemble the outputs of two networks during inference and exchange the samples selected by the two networks during training, 
as is customary for methods that train two networks simultaneously \cite{co-teaching, co-teach+, dividemix, ELR}.

\section{Experiments}

\label{sec:appendix-exp}
\subsection{Evaluation on Benchmark Datasets}

\textbf{Noise Generation.}\quad
We follow the approach of previous studies \cite{NCE, ALFs_ICML} to experiment with two types of synthetic label noise: symmetric (uniform) noise and asymmetric (class-conditional) noise. In the case of symmetric label noise, we intentionally corrupt the training labels by randomly flipping labels within each class to incorrect labels in other classes. As for asymmetric label noise, we flip the labels within a specific sets of classes: For CIFAR-10, the flips occur from TRUCK $\rightarrow$ AUTOMOBILE, BIRD $\rightarrow$ AIRPLANE, DEER $\rightarrow $ HORSE, and CAT $\leftrightarrow$ DOG. For CIFAR-100, the 100 classes are grouped into 20 super-classes, each containing 5 sub-classes, and we flip the labels within the same super-class into the next.
% For instance-dependent noise, we follow the same setting in PDN \cite{IDN-PDN}.
% \textbf{Baselines.}\quad We consider several baseline methods for comparison, including Standard CE and FL \cite{FL}, Mean Absolute Error (MAE), Generalized Cross Entropy (GCE) \cite{GCE}; Negative Learning for Noisy Labels (NLNL) \cite{NLNL}; Symmetric Cross Entropy (SCE) \cite{SCE}; Active Passive Loss (APL) \cite{NCE}, including NCE+MAE, NCE+RCE, and NFL+RCE; 
% % Logit Clipping (CE+LC) \cite{LC}; 
% Asymmetric Loss Functions (AFLs) \cite{ALFs_ICML}, including NCE+AEL, NCE+AGCE, and NCE+AUL; and Label Distributionally Robust Loss (LDR-KL) \cite{zhu2023label}.

\textbf{Experimental Setting.}\quad We follow the same experimental settings in \cite{NCE,ALFs_ICML}: An 8-layer CNN is used for CIFAR-10 and a ResNet-34 for CIFAR-100. The networks are trained for 120 and 200 epochs for CIFAR-10 and CIFAR-100 with batch size 128. We use the SGD optimizer with momentum 0.9 and cosine learning rate annealing. The weight decay is set to $1 \times 10^{-4}$ and $1 \times 10^{-5}$ for CIFAR-10 and CIFAR-100. The initial learning rate is set to 0.01 for CIFAR-10 and 0.1 for CIFAR-100.  Clipping the gradient norm to 5.0 and the minimum allowable value for $\log$ to $1 \times 10^{-8}$. Typical data augmentations including random shift and horizontal flip are applied to CIFAR-10; random shift, horizontal flip and random rotation
are applied to CIFAR-100.

\textbf{Parameters Setting.}\quad We set the parameter settings which match their original papers for all baseline methods \cite{NCE, ALFs_ICML}. 
Specifically, for FL, we set $\gamma=0.5$. 
For GCE, we set $q=0.7$ for CIFAR-10, and $q=[0.5, 0.5, 0.7, 0.7, 0.9]$ for CIFAR-100 clean and symmetric noise ($\eta \in [0, 0.2, 0.4, 0.6, 0.8]$), $q=0.7$ asymmetric noise. 
For SCE, we set $A=-4$, $\alpha=0.1$, $\beta=1$ for CIFAR-10, and $\alpha=6$, $\beta=0.1$ for CIFAR-100. 
For APL (NCE+MAE, NCE+RCE and NFL+RCE), we set $\alpha=1,\beta=1$ for CIFAR-10, and $\alpha=10$, $\beta=0.1$ for CIFAR-100. 
For NCE+AUL, we set $a=6.3, q=1.5, \alpha=1, \beta=4$ for CIFAR-10, and $a=6, q=3, \alpha=10, \beta=0.015$ for CIFAR-100.  
For NCE+AGCE, we set $a=6, q=1.5, \alpha=1, \beta=4$ for CIFAR-10, and $a=1.8, q=3, \alpha=10, \beta=0.1$ for CIFAR-100. 
For NCE+AEL, we set $a=5, \alpha=1, \beta=4$ for CIFAR-10, and $a=1.5, \alpha=10, \beta=0.1$ for CIFAR-100. 
For CE+LC, we set $\delta=[1, 1, 1, 1.5, 1.5]$ for CIFAR-10 clean and symmetric noise ($\eta \in [0, 0.2, 0.4, 0.6, 0.8]$) and $\delta=2.5$ for CIFAR-10 asymmetric noise. We set $\delta=2.5$ for CIFAR-100 asymmetric noise and $\delta=0.5$ for others.
For LDR-KL, We set $\lambda=10$ for CIFAR-10 and 1 for CIFAR-100.
For our $\CE_\eps$+MAE, we set $\beta=5, m=1e5, \alpha=0.01$ for CIFAR-10 symmetric, and $m=1e3, \alpha=0.02$ for asymmetric. 
For CIFAR-100, we set $\beta=1$, $m=1e4$ and  $\alpha=[0.1, 0.05, 0.03, 0.0125, 0.0075]$ for clean and symmetric noise ($\eta \in [0, 0.2, 0.4, 0.6, 0.8]$), and $m=1e2$, $\alpha=[0.015, 0.007, 0.005, 0.004]$ for asymmetric noise ($\eta \in [0.1, 0.2, 0.3, 0.4]$). For our $\FL_\eps$+MAE, we set $\gamma=0.1$ and others are same as $\CE_\eps$+MAE. 
For NLNL, we  use the results in \cite{ALFs_ICML} directly.
% For instance-dependent noise on CIFAR-10,  we set $\alpha=0.045, \beta=10, m=1e5$. For CIFAR-100, we use the same setting for symmetric noise.
% \input{figure/tsne_all}

\subsection{Evaluation on Human-Annotated Datasets}

% \textbf{Baselines.}\quad For a fair comparison, we divide the baselines into those without  and those with semi-supervised learning (SSL) and sample selection:

% % \vspace{-5pt}
% \hspace{0.3cm}\textit{-- Without SSL}: 
% % \indent{-- Without SSL:} 
% Standard loss CE, Forward T \cite{Forward_T}, GCE \cite{GCE}, 
% T-Revision \cite{T_Revision}, Peer Loss \cite{peerloss}, 
% F-Div \cite{f_div}, 
% Negative-LS \cite{wei2022smooth}, VolMinNet \cite{VolMinNet} and AGCE \cite{ALFs_ICML}.

% \hspace{0.3cm}\textit{-- With SSL}: 
% % {-- With SSL:} 
% Co-teaching+ \cite{co-teach+}, JoCoR \cite{JoCoR}, ELR+ \cite{ELR}, DivideMix \cite{dividemix},  CORES* \cite{cores}, CAL \cite{cal}, PES (Semi) \cite{PES}, SOP+ \cite{sop} and Proto-semi \cite{Proto-semi}.

\textbf{Experimental Setting.}\quad We follow the experimental settings in \cite{cifar-n}: Train a Resnet-34 using SGD for 100 epochs with initial learning rate 0.1, momentum 0.9, and weight decay 0.0005. Set the learning rate decay 0.1 at 60 epochs. Standard data augmentation including random shift and horizontal flip  are applied. Best epoch test accuracies are compared.  The results of the comparison methods are taken directly from \cite{cifar-n} and the original papers \cite{ALF_PAMI, Proto-semi}.
% in the same experimental settings. SOP+ trained on Pre-act ResNet-18, while all other methods are reproduced on ResNet-34.

\textbf{Parameters Setting.}\quad For our $\CE_\eps$+MAE, we set $m=1e4, \alpha=0.5, \beta=1$ for CIFAR-10N/100N.  $\CE_\eps$+MAE (Semi)  has been covered in detail in the previous section \ref{sec:appendix-semi}. 
\subsection{Evaluation on Real-World  Dataset WebVision}
\paragraph{Experimental Setting.} For WebVision, the training details follow \cite{ALFs_ICML}: We use the mini WebVision setting \cite{NCE, ALFs_ICML} and train a ResNet-50 using SGD for 250 epochs with initial learning rate 0.4, nesterov momentum 0.9 and weight decay $3 \times 10^{-5}$ and batch size 256. The learning rate is multiplied by 0.97 after each epoch of training. All the images are resized to $224 \times 224$. Typical data augmentations including random width/height shift, color jittering, and horizontal flip are applied. 
% Last epoch test accuracies are compared. 
We train the model on Webvision and evaluate the trained model on the same 50 concepts on the corresponding WebVision and ILSVRC12 validation sets. 

For Clothing1M, we use ResNet-50 pre-trained on ImageNet similar to \citep{xiao2015learning}. All the images are resized to $224 \times 224$. We use SGD with a momentum of 0.9, a weight decay of $1 \times 10^{-3}$, and batch size of 256. We train the network for 10 epochs with a learning rate of $5 \times10^{-3}$ and a decay of 0.1 at 5 epochs. Typical data augmentations including random shift and horizontal flip are applied.

% Specifically, for GCE, we set $q=0.6$. For SCE, we set $A=3, \alpha=10, \beta=1$. For NCE+RCE, we set $\alpha=10, \beta=1$. For NCE+AGCE, we set $\alpha=50, \beta=0.1, a=2.5, q=3$. For NCE+NNCE and NFL+NNFL, we set $\alpha=5, \beta=0.1$. For VCE, we set $a = 0.05$. 

\textbf{Parameters Setting.}\quad We set the best parameter settings which match their original papers for all baseline methods \cite{NCE, ALFs_ICML}. Specifically, for GCE, we set $q=0.7$ for WebVision and 0.6 for Clothing1M. For SCE, we set $A=-4$, $\alpha=10$, $\beta=1$. For NCE+RCE, we  set $\alpha=50,\beta=0.1$ for WebVision and $\alpha=10, \beta=1$ for Clothing1M. 
For AGCE, we set $a=1e-5, q=0.5$. 
For NCE+AGCE, we set $a=2.5, q=3, \alpha=50, \beta=0.1$.  For LDR-KL, we set $\lambda=1$.
For our $\CE_\eps$+MAE, we set $m=1e3, \alpha=0.015, \beta=0.3$ for WebVison and $ \alpha=0.012, \beta=0.1$ for Clothing1M.

% We visualize the validation accuracies on the WebVision validation set in Figure \ref{fig:line-webvision} to intuitively compare different methods.

\section{More Experimental Results}
\label{sec:appendix-exp-result}
\textbf{Visualization.}\quad We show more visualizations of learned representations in Figure~\ref{fig:tsne_all}.

\textbf{Detailed Experimental Results of CE$_\eps$+MAE (Semi)}\quad The more detailed results are reported in Table~\ref{tab:semi-all}.

\textbf{Instance-Dependent Noise.}\quad We follow the method in PDN \cite{IDN-PDN} to generate instance-dependent noise. The experimental setting is the same as CIFAR-10/CIFAR-100.
For CE$_\eps$+MAE on CIFAR-10,  we set $\alpha=0.045, \beta=10, m=1e5$. For CIFAR-100, we use the same parameters as symmetric noise.
The results are reported in Table~\ref{tab: idn}.

\begin{figure*}[!h]
    \vskip-5pt
    \centering
    \hspace{-3mm}
    \subfigure[CE ($\eta=0$)]{
    \includegraphics[width=1.33in]{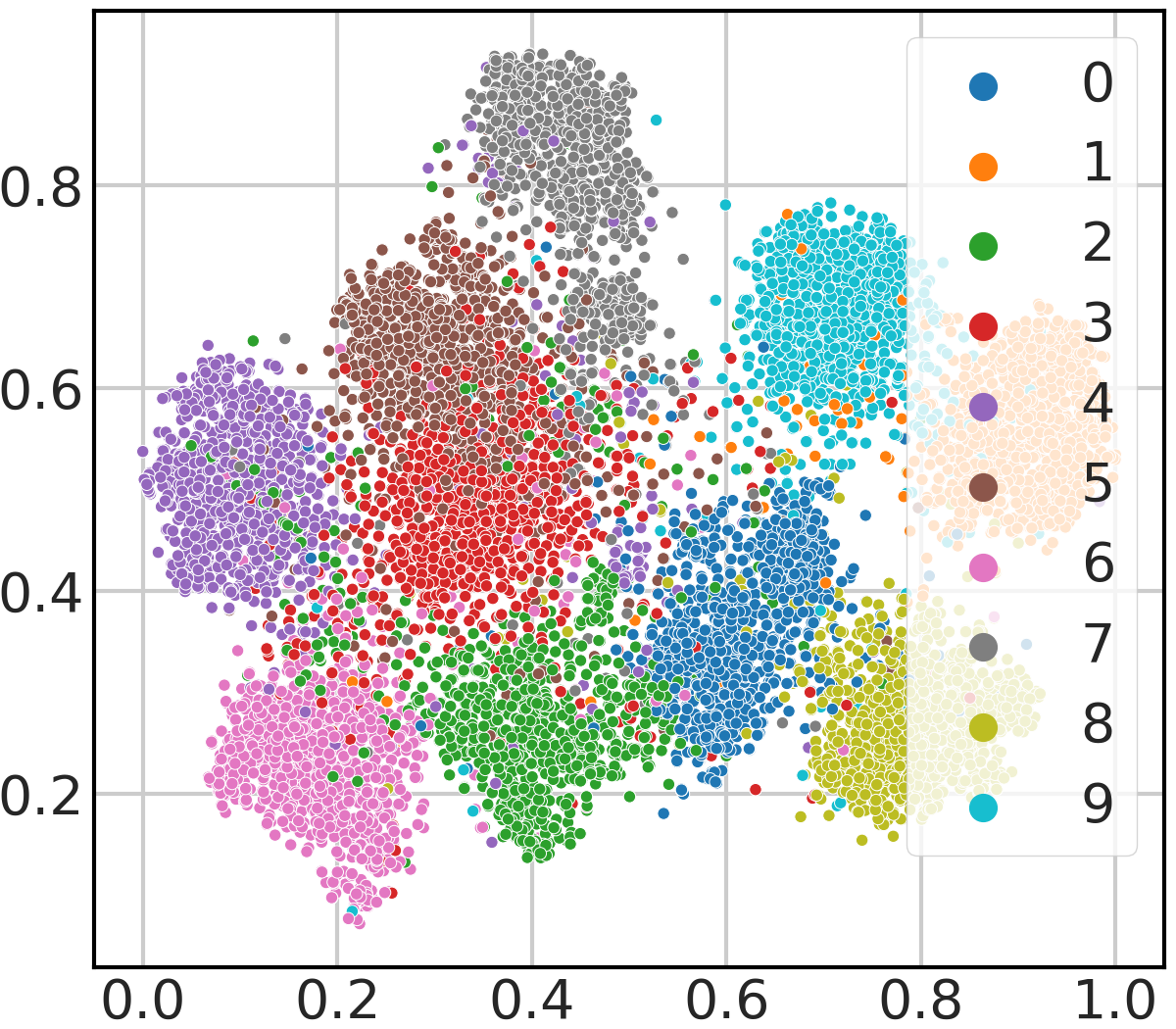}
    }
    \hspace{-3mm}
    \subfigure[CE ($\eta=0.2$)]{
    \includegraphics[width=1.33in]{figure/CIFAR10-CE-S-0.2.png}
    }
    \hspace{-3mm}
    \subfigure[CE ($\eta=0.4$)]{
    \includegraphics[width=1.33in]{figure/CIFAR10-CE-S-0.4.png}
    }
    \hspace{-3mm}
    \subfigure[CE ($\eta=0.6$)]{
    \includegraphics[width=1.33in]{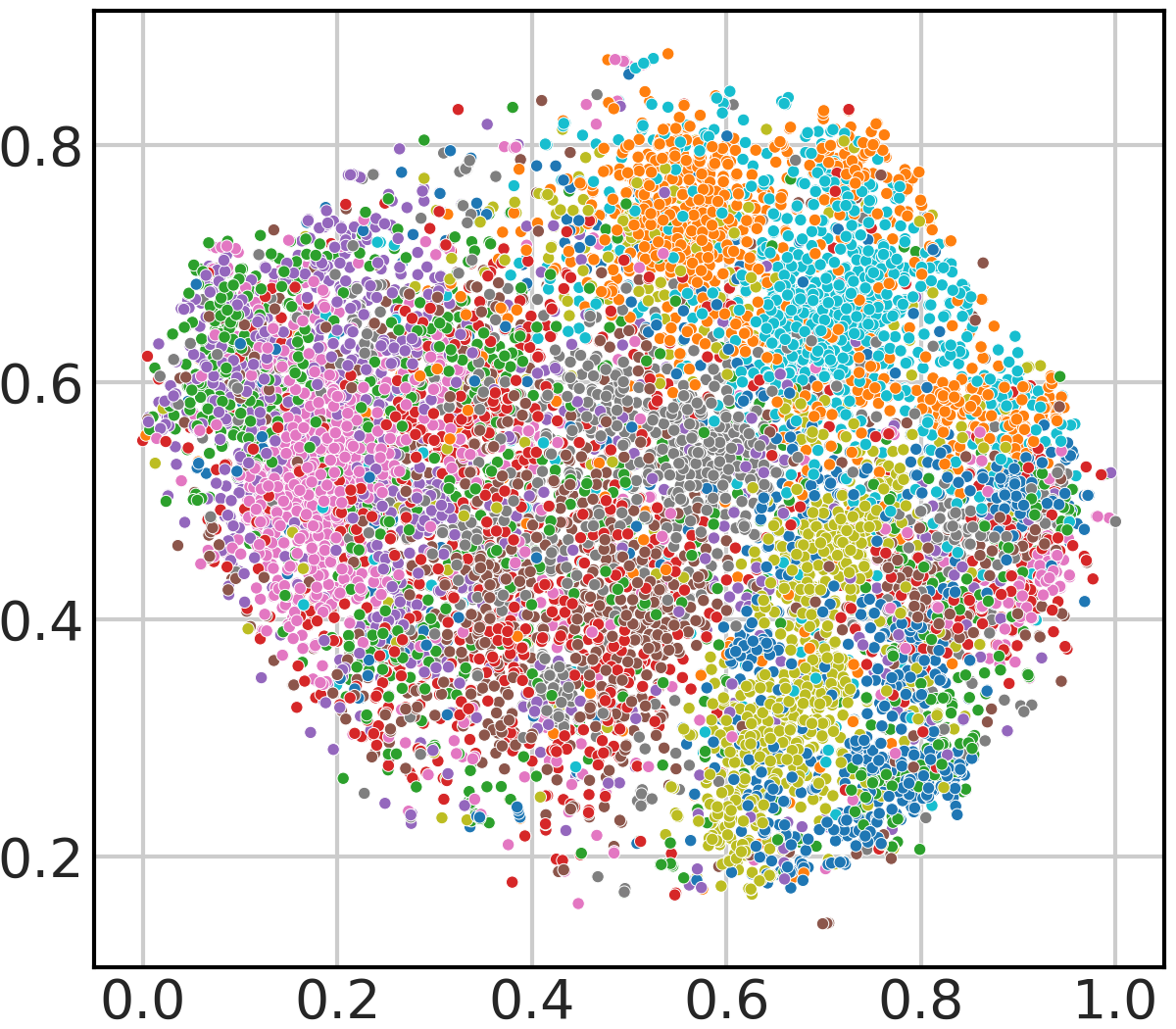}
    \hspace{-3mm}
    }\\
    \hspace{-3mm}
    \subfigure[$\CE_\eps$+MAE ($\eta=0$)]{
    \includegraphics[width=1.33in]{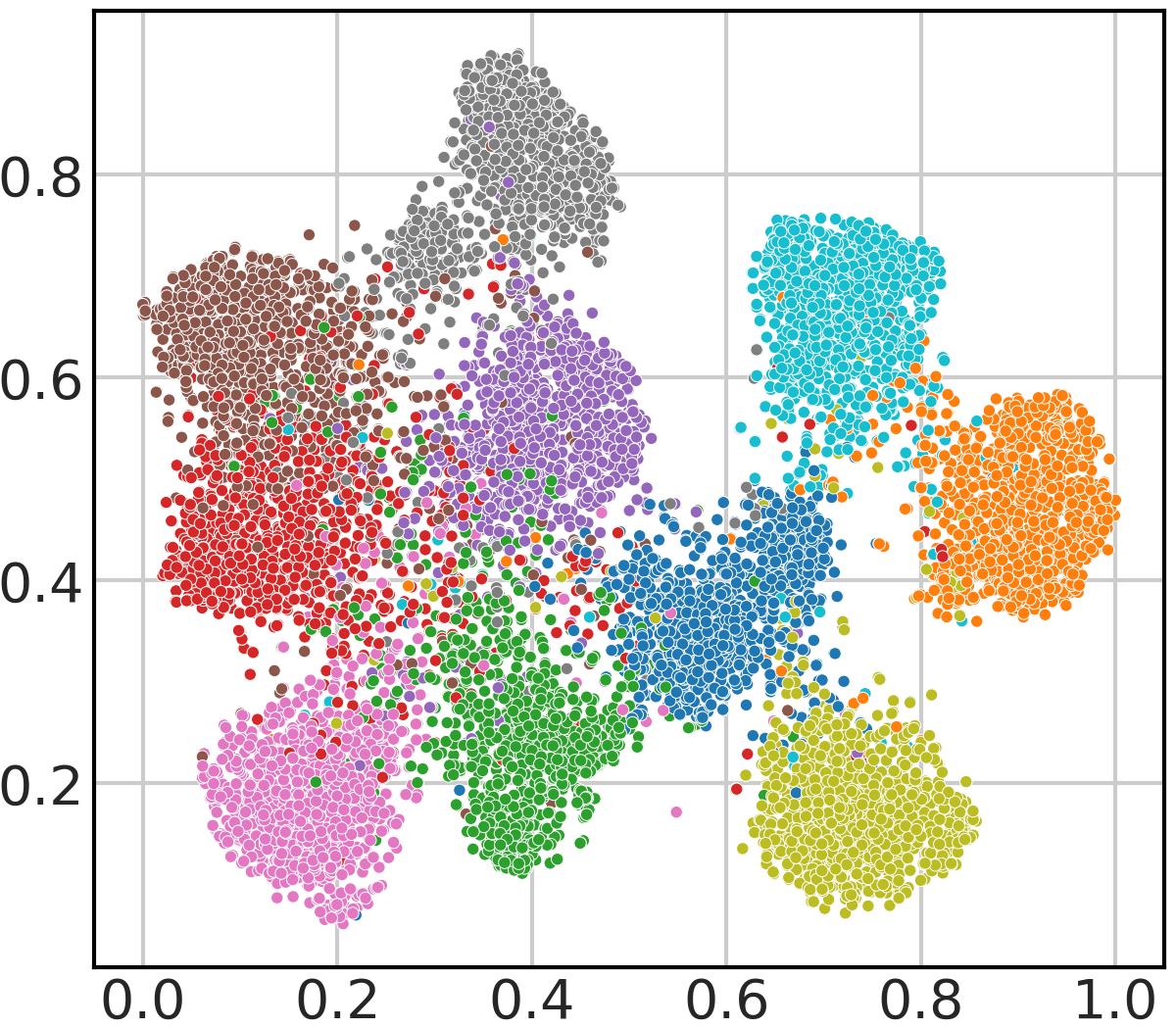}
    }
    \hspace{-3mm}
    \subfigure[$\CE_\eps$+MAE ($\eta=0.2$)]{
    \includegraphics[width=1.33in]{figure/CIFAR10-ECEandMAE-S-0.2.png}
    }
    \hspace{-3mm}
    \subfigure[$\CE_\eps$+MAE ($\eta=0.4$)]{
    \includegraphics[width=1.33in]{figure/CIFAR10-ECEandMAE-S-0.4.png}
    }
    \hspace{-3mm}
    \subfigure[$\CE_\eps$+MAE ($\eta=0.6$)]{
    \includegraphics[width=1.33in]{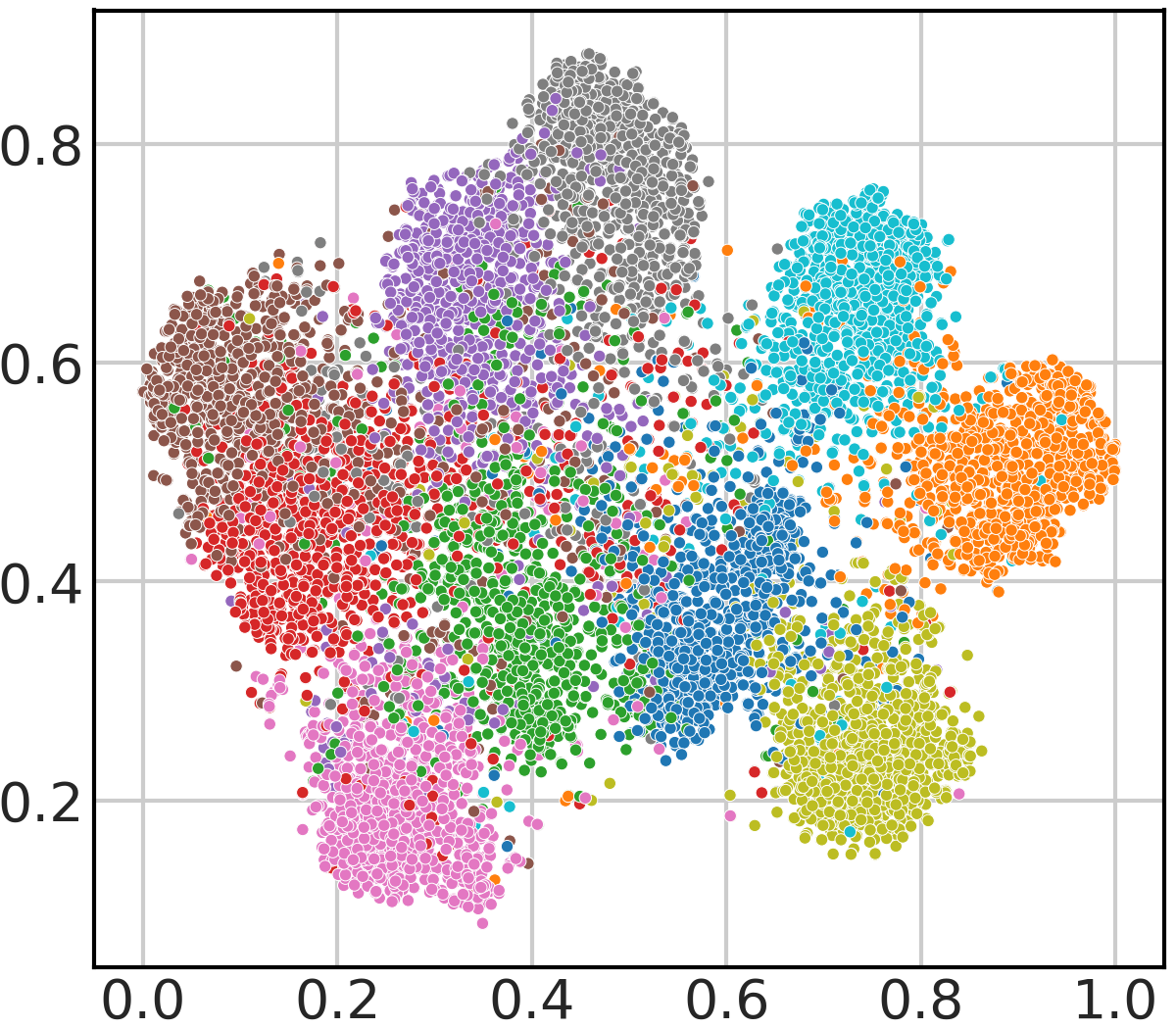}
    }
    \hspace{-3mm}
    \vskip-5pt
    \caption{Visualizations of learned representations on CIFAR-10 with different symmetric label noise
    ($\eta\in[0, 0.2, 0.4, 0.6]$). The x-axis and y-axis  represent the first and second dimensions of the 2D embeddings, respectively.
    }
    \label{fig:tsne_all}
    \vskip-10pt
\end{figure*}
\begin{table*}[h]
\centering
\setlength{\tabcolsep}{4pt}
\fontsize{8pt}{9.5pt}\selectfont
% \small
\caption{Last and best epoch test accuracies (\%) of CE$_\eps$+MAE (Semi) on CIFAR-N datasets.  The results "mean$\pm$std" are reported over 5 random runs. 
% SOP+ trained on Pre-act ResNet-18, while all other methods are reproduced on ResNet-34.
}
\label{tab:semi-all}
\begin{tabular}{c|cccccc|cc}
\toprule
\multirow{2}{*}{\textbf{CE$_\eps$+MAE (Semi)}} & \multicolumn{6}{c|}{\textbf{CIFAR-10N}}                                      & \multicolumn{2}{c}{\textbf{CIFAR-100N}} \\
                                           & clean      & Aggregate  & Random 1   & Random 2   & Random 3   & Worst      & clean              & Noisy              \\
                                           \midrule
Last                                       & 96.06\tiny ±0.15 & 95.83\tiny ±0.14 & 95.76\tiny ±0.12 & 95.83\tiny ±0.12 & 95.87\tiny ±0.11 & 95.01\tiny ±0.16 & 78.54\tiny ±0.33         & 71.78\tiny ±0.23         \\
Best                                       & 96.15\tiny ±0.18 & 95.95\tiny ±0.06 & 95.79\tiny ±0.13 & 95.91\tiny ±0.06 & 95.96\tiny ±0.09 & 95.12\tiny ±0.10 & 78.79\tiny ±0.24         & 71.97\tiny ±0.18        \\
\bottomrule
\end{tabular}
\end{table*}
\begin{table*}[!h]
\vskip-10pt
\fontsize{8pt}{9.5pt}\selectfont
\centering
\caption{
Last epoch test accuracies (\%) on CIFAR-10/100 instance-dependent noise (IDN).  The results "mean$\pm$std" are reported over 3 random runs and the best results are \textbf{boldfaced}.}

\label{tab: idn}
\begin{tabular}{c|ccc|ccc}
\toprule
\multirow{2}{*}{\textbf{Method}} & \multicolumn{3}{c|}{\textbf{CIFAR-10 IDN}}                       & \multicolumn{3}{c}{\textbf{CIFAR-100 IDN}}                      \\
                                 & 0.2                 & 0.4                 & 0.6                 & 0.2                 & 0.4                 & 0.6                 \\
\midrule
CE                               & 75.05\tiny ±0.31          & 57.27\tiny ±0.96          & 37.62\tiny ±0.02          & 54.46\tiny ±1.73          & 40.81\tiny ±0.25          & 25.57\tiny ±0.03          \\
GCE                              & 86.95\tiny ±0.38          & 79.35\tiny ±0.30          & 52.30\tiny ±0.12          & 61.95\tiny ±1.37          & 56.99\tiny ±0.42          & 44.19\tiny ±0.36          \\
SCE                              & 86.79\tiny ±0.17          & 74.56\tiny ±0.49          & 49.63\tiny ±0.14          & 55.58\tiny ±0.74          & 39.71\tiny ±0.39          & 25.63\tiny ±0.76          \\
NCE+RCE                          & 89.06\tiny ±0.31          & 85.07\tiny ±0.17          & 70.45\tiny ±0.26          & 64.13\tiny ±0.49          & 57.15\tiny ±0.24          & 43.22\tiny ±2.31          \\
NCE+AGCE                         & 88.90\tiny ±0.22          & 85.16\tiny ±0.26          & 72.68\tiny ±0.21          & 65.33\tiny ±0.18          & 58.59\tiny ±0.68          & 43.42\tiny ±0.24          \\
LDR-KL                           & 88.99\tiny ±0.15          & 84.10\tiny ±0.24          & 63.11\tiny ±0.23          & 59.19\tiny ±0.34          & 43.74\tiny ±0.12          & 26.10\tiny ±0.16          \\
\midrule
\textbf{CE$_\eps$+MAE}                 & \textbf{89.27\tiny ±0.42} & \textbf{85.26\tiny ±0.29} & \textbf{74.32\tiny ±0.89} & \textbf{67.44\tiny ±0.19} & \textbf{60.80\tiny ±0.20} & \textbf{46.53\tiny ±0.54} \\
\bottomrule
\end{tabular}
\vskip-5pt
\end{table*}

\clearpage

\section*{NeurIPS Paper Checklist}

\begin{enumerate}

\item {\bf Claims}
    \item[] Question: Do the main claims made in the abstract and introduction accurately reflect the paper's contributions and scope?
    \item[] Answer: \answerYes{} % Replace by \answerYes{}, \answerNo{}, or \answerNA{}.
    \item[] Justification: The main claims made in the abstract and introduction accurately reflect the paper's contributions and scope. Specifically, we provide a simple yet effective method for mitigating label noise with elaborated descriptions and theoretical results.
    \item[] Guidelines:
    \begin{itemize}
        \item The answer NA means that the abstract and introduction do not include the claims made in the paper.
        \item The abstract and/or introduction should clearly state the claims made, including the contributions made in the paper and important assumptions and limitations. A No or NA answer to this question will not be perceived well by the reviewers. 
        \item The claims made should match theoretical and experimental results, and reflect how much the results can be expected to generalize to other settings. 
        \item It is fine to include aspirational goals as motivation as long as it is clear that these goals are not attained by the paper. 
    \end{itemize}

\item {\bf Limitations}
    \item[] Question: Does the paper discuss the limitations of the work performed by the authors?
    \item[] Answer: \answerYes{} % Replace by \answerYes{}, \answerNo{}, or \answerNA{}.
    \item[] Justification: We have discussed the limitations of the work in the Appendix \ref{sec:appendix-limitations}.
    \item[] Guidelines:
    \begin{itemize}
        \item The answer NA means that the paper has no limitation while the answer No means that the paper has limitations, but those are not discussed in the paper. 
        \item The authors are encouraged to create a separate "Limitations" section in their paper.
        \item The paper should point out any strong assumptions and how robust the results are to violations of these assumptions (e.g., independence assumptions, noiseless settings, model well-specification, asymptotic approximations only holding locally). The authors should reflect on how these assumptions might be violated in practice and what the implications would be.
        \item The authors should reflect on the scope of the claims made, e.g., if the approach was only tested on a few datasets or with a few runs. In general, empirical results often depend on implicit assumptions, which should be articulated.
        \item The authors should reflect on the factors that influence the performance of the approach. For example, a facial recognition algorithm may perform poorly when image resolution is low or images are taken in low lighting. Or a speech-to-text system might not be used reliably to provide closed captions for online lectures because it fails to handle technical jargon.
        \item The authors should discuss the computational efficiency of the proposed algorithms and how they scale with dataset size.
        \item If applicable, the authors should discuss possible limitations of their approach to address problems of privacy and fairness.
        \item While the authors might fear that complete honesty about limitations might be used by reviewers as grounds for rejection, a worse outcome might be that reviewers discover limitations that aren't acknowledged in the paper. The authors should use their best judgment and recognize that individual actions in favor of transparency play an important role in developing norms that preserve the integrity of the community. Reviewers will be specifically instructed to not penalize honesty concerning limitations.
    \end{itemize}

\item {\bf Theory Assumptions and Proofs}
    \item[] Question: For each theoretical result, does the paper provide the full set of assumptions and a complete (and correct) proof?
    \item[] Answer: \answerYes{} % Replace by \answerYes{}, \answerNo{}, or \answerNA{}.
    \item[] Justification:  We provide the full set of assumptions in the main paper and all proofs in Appendix \ref{sec:appenidx-proof}.
    \item[] Guidelines:
    \begin{itemize}
        \item The answer NA means that the paper does not include theoretical results. 
        \item All the theorems, formulas, and proofs in the paper should be numbered and cross-referenced.
        \item All assumptions should be clearly stated or referenced in the statement of any theorems.
        \item The proofs can either appear in the main paper or the supplemental material, but if they appear in the supplemental material, the authors are encouraged to provide a short proof sketch to provide intuition. 
        \item Inversely, any informal proof provided in the core of the paper should be complemented by formal proofs provided in appendix or supplemental material.
        \item Theorems and Lemmas that the proof relies upon should be properly referenced. 
    \end{itemize}

    \item {\bf Experimental Result Reproducibility}
    \item[] Question: Does the paper fully disclose all the information needed to reproduce the main experimental results of the paper to the extent that it affects the main claims and/or conclusions of the paper (regardless of whether the code and data are provided or not)?
    \item[] Answer: \answerYes{} % Replace by \answerYes{}, \answerNo{}, or \answerNA{}.
    \item[] Justification: We describe the experiment details in the Appendix \ref{sec:appendix-exp} and submit the code for reproducibility in the supplementary materials.
    \item[] Guidelines:
    \begin{itemize}
        \item The answer NA means that the paper does not include experiments.
        \item If the paper includes experiments, a No answer to this question will not be perceived well by the reviewers: Making the paper reproducible is important, regardless of whether the code and data are provided or not.
        \item If the contribution is a dataset and/or model, the authors should describe the steps taken to make their results reproducible or verifiable. 
        \item Depending on the contribution, reproducibility can be accomplished in various ways. For example, if the contribution is a novel architecture, describing the architecture fully might suffice, or if the contribution is a specific model and empirical evaluation, it may be necessary to either make it possible for others to replicate the model with the same dataset, or provide access to the model. In general. releasing code and data is often one good way to accomplish this, but reproducibility can also be provided via detailed instructions for how to replicate the results, access to a hosted model (e.g., in the case of a large language model), releasing of a model checkpoint, or other means that are appropriate to the research performed.
        \item While NeurIPS does not require releasing code, the conference does require all submissions to provide some reasonable avenue for reproducibility, which may depend on the nature of the contribution. For example
        \begin{enumerate}
            \item If the contribution is primarily a new algorithm, the paper should make it clear how to reproduce that algorithm.
            \item If the contribution is primarily a new model architecture, the paper should describe the architecture clearly and fully.
            \item If the contribution is a new model (e.g., a large language model), then there should either be a way to access this model for reproducing the results or a way to reproduce the model (e.g., with an open-source dataset or instructions for how to construct the dataset).
            \item We recognize that reproducibility may be tricky in some cases, in which case authors are welcome to describe the particular way they provide for reproducibility. In the case of closed-source models, it may be that access to the model is limited in some way (e.g., to registered users), but it should be possible for other researchers to have some path to reproducing or verifying the results.
        \end{enumerate}
    \end{itemize}

\item {\bf Open access to data and code}
    \item[] Question: Does the paper provide open access to the data and code, with sufficient instructions to faithfully reproduce the main experimental results, as described in supplemental material?
    \item[] Answer: \answerYes{} % Replace by \answerYes{}, \answerNo{}, or \answerNA{}.
    \item[] Justification: We have submitted the code for with sufficient instructions to faithfully reproduce the main experimental results. And the datasets are obtained from open source.

    \item[] Guidelines:
    \begin{itemize}
        \item The answer NA means that paper does not include experiments requiring code.
        \item Please see the NeurIPS code and data submission guidelines (\url{https://nips.cc/public/guides/CodeSubmissionPolicy}) for more details.
        \item While we encourage the release of code and data, we understand that this might not be possible, so “No” is an acceptable answer. Papers cannot be rejected simply for not including code, unless this is central to the contribution (e.g., for a new open-source benchmark).
        \item The instructions should contain the exact command and environment needed to run to reproduce the results. See the NeurIPS code and data submission guidelines (\url{https://nips.cc/public/guides/CodeSubmissionPolicy}) for more details.
        \item The authors should provide instructions on data access and preparation, including how to access the raw data, preprocessed data, intermediate data, and generated data, etc.
        \item The authors should provide scripts to reproduce all experimental results for the new proposed method and baselines. If only a subset of experiments are reproducible, they should state which ones are omitted from the script and why.
        \item At submission time, to preserve anonymity, the authors should release anonymized versions (if applicable).
        \item Providing as much information as possible in supplemental material (appended to the paper) is recommended, but including URLs to data and code is permitted.
    \end{itemize}

\item {\bf Experimental Setting/Details}
    \item[] Question: Does the paper specify all the training and test details (e.g., data splits, hyperparameters, how they were chosen, type of optimizer, etc.) necessary to understand the results?
    \item[] Answer: \answerYes{} % Replace by \answerYes{}, \answerNo{}, or \answerNA{}.
    \item[] Justification: We have specified all the training and test details in Appendix \ref{sec:appendix-exp}.
    \item[] Guidelines:
    \begin{itemize}
        \item The answer NA means that the paper does not include experiments.
        \item The experimental setting should be presented in the core of the paper to a level of detail that is necessary to appreciate the results and make sense of them.
        \item The full details can be provided either with the code, in appendix, or as supplemental material.
    \end{itemize}

\item {\bf Experiment Statistical Significance}
    \item[] Question: Does the paper report error bars suitably and correctly defined or other appropriate information about the statistical significance of the experiments?
    \item[] Answer: \answerYes{} % Replace by \answerYes{}, \answerNo{}, or \answerNA{}.
    \item[] Justification: For all experiments, we include error bars for added clarity.
    \item[] Guidelines:
    \begin{itemize}
        \item The answer NA means that the paper does not include experiments.
        \item The authors should answer "Yes" if the results are accompanied by error bars, confidence intervals, or statistical significance tests, at least for the experiments that support the main claims of the paper.
        \item The factors of variability that the error bars are capturing should be clearly stated (for example, train/test split, initialization, random drawing of some parameter, or overall run with given experimental conditions).
        \item The method for calculating the error bars should be explained (closed form formula, call to a library function, bootstrap, etc.)
        \item The assumptions made should be given (e.g., Normally distributed errors).
        \item It should be clear whether the error bar is the standard deviation or the standard error of the mean.
        \item It is OK to report 1-sigma error bars, but one should state it. The authors should preferably report a 2-sigma error bar than state that they have a 96\% CI, if the hypothesis of Normality of errors is not verified.
        \item For asymmetric distributions, the authors should be careful not to show in tables or figures symmetric error bars that would yield results that are out of range (e.g. negative error rates).
        \item If error bars are reported in tables or plots, The authors should explain in the text how they were calculated and reference the corresponding figures or tables in the text.
    \end{itemize}

\item {\bf Experiments Compute Resources}
    \item[] Question: For each experiment, does the paper provide sufficient information on the computer resources (type of compute workers, memory, time of execution) needed to reproduce the experiments?
    \item[] Answer:  \answerYes{} % Replace by \answerYes{}, \answerNo{}, or \answerNA{}.
    \item[] Justification:  
    We provide the information in the experiment details.
    All experiments are implemented by PyTorch and are conducted on NVIDIA GeForce RTX 4090.
    \item[] Guidelines:
    \begin{itemize}
        \item The answer NA means that the paper does not include experiments.
        \item The paper should indicate the type of compute workers CPU or GPU, internal cluster, or cloud provider, including relevant memory and storage.
        \item The paper should provide the amount of compute required for each of the individual experimental runs as well as estimate the total compute. 
        \item The paper should disclose whether the full research project required more compute than the experiments reported in the paper (e.g., preliminary or failed experiments that didn't make it into the paper). 
    \end{itemize}
    
\item {\bf Code Of Ethics}
    \item[] Question: Does the research conducted in the paper conform, in every respect, with the NeurIPS Code of Ethics \url{https://neurips.cc/public/EthicsGuidelines}?
    \item[] Answer: \answerYes{} % Replace by \answerYes{}, \answerNo{}, or \answerNA{}.
    \item[] Justification: We promise that the research conducted in the paper conforms, in every respect, with the NeurIPS Code of Ethics.
    \item[] Guidelines:
    \begin{itemize}
        \item The answer NA means that the authors have not reviewed the NeurIPS Code of Ethics.
        \item If the authors answer No, they should explain the special circumstances that require a deviation from the Code of Ethics.
        \item The authors should make sure to preserve anonymity (e.g., if there is a special consideration due to laws or regulations in their jurisdiction).
    \end{itemize}

\item {\bf Broader Impacts}
    \item[] Question: Does the paper discuss both potential positive societal impacts and negative societal impacts of the work performed?
    \item[] Answer: \answerYes{} % Replace by \answerYes{}, \answerNo{}, or \answerNA{}.
    \item[] Justification: We have discussed broader impact of this work.
    \item[] Guidelines:
    \begin{itemize}
        \item The answer NA means that there is no societal impact of the work performed.
        \item If the authors answer NA or No, they should explain why their work has no societal impact or why the paper does not address societal impact.
        \item Examples of negative societal impacts include potential malicious or unintended uses (e.g., disinformation, generating fake profiles, surveillance), fairness considerations (e.g., deployment of technologies that could make decisions that unfairly impact specific groups), privacy considerations, and security considerations.
        \item The conference expects that many papers will be foundational research and not tied to particular applications, let alone deployments. However, if there is a direct path to any negative applications, the authors should point it out. For example, it is legitimate to point out that an improvement in the quality of generative models could be used to generate deepfakes for disinformation. On the other hand, it is not needed to point out that a generic algorithm for optimizing neural networks could enable people to train models that generate Deepfakes faster.
        \item The authors should consider possible harms that could arise when the technology is being used as intended and functioning correctly, harms that could arise when the technology is being used as intended but gives incorrect results, and harms following from (intentional or unintentional) misuse of the technology.
        \item If there are negative societal impacts, the authors could also discuss possible mitigation strategies (e.g., gated release of models, providing defenses in addition to attacks, mechanisms for monitoring misuse, mechanisms to monitor how a system learns from feedback over time, improving the efficiency and accessibility of ML).
    \end{itemize}
    
\item {\bf Safeguards}
    \item[] Question: Does the paper describe safeguards that have been put in place for responsible release of data or models that have a high risk for misuse (e.g., pretrained language models, image generators, or scraped datasets)?
    \item[] Answer:  \answerNA{} % Replace by \answerYes{}, \answerNo{}, or \answerNA{}.
    \item[] Justification: We do not use pretrained language models, image generators, etc.
    \item[] Guidelines:
    \begin{itemize}
        \item The answer NA means that the paper poses no such risks.
        \item Released models that have a high risk for misuse or dual-use should be released with necessary safeguards to allow for controlled use of the model, for example by requiring that users adhere to usage guidelines or restrictions to access the model or implementing safety filters. 
        \item Datasets that have been scraped from the Internet could pose safety risks. The authors should describe how they avoided releasing unsafe images.
        \item We recognize that providing effective safeguards is challenging, and many papers do not require this, but we encourage authors to take this into account and make a best faith effort.
    \end{itemize}

\item {\bf Licenses for existing assets}
    \item[] Question: Are the creators or original owners of assets (e.g., code, data, models), used in the paper, properly credited and are the license and terms of use explicitly mentioned and properly respected?
    \item[] Answer: \answerYes{} % Replace by \answerYes{}, \answerNo{}, or \answerNA{}.
    \item[] Justification: The existing asserts in this paper are properly credited an are the license and terms of use explicitly mentioned and properly respected with appropriate citations. 
    \item[] Guidelines:
    \begin{itemize}
        \item The answer NA means that the paper does not use existing assets.
        \item The authors should cite the original paper that produced the code package or dataset.
        \item The authors should state which version of the asset is used and, if possible, include a URL.
        \item The name of the license (e.g., CC-BY 4.0) should be included for each asset.
        \item For scraped data from a particular source (e.g., website), the copyright and terms of service of that source should be provided.
        \item If assets are released, the license, copyright information, and terms of use in the package should be provided. For popular datasets, \url{paperswithcode.com/datasets} has curated licenses for some datasets. Their licensing guide can help determine the license of a dataset.
        \item For existing datasets that are re-packaged, both the original license and the license of the derived asset (if it has changed) should be provided.
        \item If this information is not available online, the authors are encouraged to reach out to the asset's creators.
    \end{itemize}

\item {\bf New Assets}
    \item[] Question: Are new assets introduced in the paper well documented and is the documentation provided alongside the assets?
    \item[] Answer: \answerNA{} % Replace by \answerYes{}, \answerNo{}, or \answerNA{}.
    \item[] Justification: We do not introduce any new assets.
    \item[] Guidelines:
    \begin{itemize}
        \item The answer NA means that the paper does not release new assets.
        \item Researchers should communicate the details of the dataset/code/model as part of their submissions via structured templates. This includes details about training, license, limitations, etc. 
        \item The paper should discuss whether and how consent was obtained from people whose asset is used.
        \item At submission time, remember to anonymize your assets (if applicable). You can either create an anonymized URL or include an anonymized zip file.
    \end{itemize}

\item {\bf Crowdsourcing and Research with Human Subjects}
    \item[] Question: For crowdsourcing experiments and research with human subjects, does the paper include the full text of instructions given to participants and screenshots, if applicable, as well as details about compensation (if any)? 
    \item[] Answer: \answerNA{} % Replace by \answerYes{}, \answerNo{}, or \answerNA{}.
    \item[] Justification: This paper does not involve crowdsourcing nor research with human subjects.
    \item[] Guidelines:
    \begin{itemize}
        \item The answer NA means that the paper does not involve crowdsourcing nor research with human subjects.
        \item Including this information in the supplemental material is fine, but if the main contribution of the paper involves human subjects, then as much detail as possible should be included in the main paper. 
        \item According to the NeurIPS Code of Ethics, workers involved in data collection, curation, or other labor should be paid at least the minimum wage in the country of the data collector. 
    \end{itemize}

\item {\bf Institutional Review Board (IRB) Approvals or Equivalent for Research with Human Subjects}
    \item[] Question: Does the paper describe potential risks incurred by study participants, whether such risks were disclosed to the subjects, and whether Institutional Review Board (IRB) approvals (or an equivalent approval/review based on the requirements of your country or institution) were obtained?
    \item[] Answer: \answerNA{} % Replace by \answerYes{}, \answerNo{}, or \answerNA{}.
    \item[] Justification: This paper does not involve crowdsourcing nor research with human subjects.
    \item[] Guidelines:
    \begin{itemize}
        \item The answer NA means that the paper does not involve crowdsourcing nor research with human subjects.
        \item Depending on the country in which research is conducted, IRB approval (or equivalent) may be required for any human subjects research. If you obtained IRB approval, you should clearly state this in the paper. 
        \item We recognize that the procedures for this may vary significantly between institutions and locations, and we expect authors to adhere to the NeurIPS Code of Ethics and the guidelines for their institution. 
        \item For initial submissions, do not include any information that would break anonymity (if applicable), such as the institution conducting the review.
    \end{itemize}

\end{enumerate}

\end{document}